\crefname{assumption}{Assumption}{Assumptions}
\newcommand\numthis{\addtocounter{equation}{1}\tag{\theequation}}
\newcommand{\custkw}[1]
{	
  \textbf{{Keywords:}} #1
}
\renewcommand{\phi}{\varphi}
\renewcommand{\epsilon}{\varepsilon}
\newtheorem{lemma}{Lemma}
\newtheorem{assumption}{Assumption}
\newtheorem{proposition}{Proposition}
\newtheorem{theorem}{Theorem}
\newtheorem{corollary}{Corollary}
\newtheorem{remark}{Remark}
\newcommand{\dd}{\mathrm{d}}
\newcommand{\E}{\mathbb{E}}
\newcommand{\D}{\mathbb{D}}
\newcommand{\R}{\mathbb{R}}
\newcommand{\cS}{\mathcal{S}} 
\newcommand{\cA}{\mathcal{A}}
\newcommand{\ScA}{\cS\times\cA}
\newcommand{\cP}{\mathcal{P}}
\newcommand{\cF}{\mathcal{F}}
\newcommand{\cG}{\mathcal{G}}
\newcommand{\cD}{\mathcal{D}}
\newcommand{\cO}{\mathcal{O}}
\newcommand{\cX}{\mathcal{X}}
\newcommand{\cY}{\mathcal{Y}}
\newcommand{\cM}{\mathcal{M}}
\newcommand{\cH}{\mathcal{H}}
\newcommand{\cC}{\mathcal{C}}
\newcommand{\cT}{\mathcal{T}}
\newcommand{\cOtilde}{\widetilde{\cO}}
\newcommand{\cardS}{\ensuremath{|\cS|}}
\newcommand{\cardA}{\ensuremath{|\cA|}}
\newcommand{\cardSA}{\cardS\cardA}
\newcommand{\abs}[1]{\left\lvert #1 \right\rvert}
\newcommand{\tri}[1]{\left\langle #1 \right\rangle}
\newcommand{\indic}{\mathds{1}}
\newcommand{\nn}{\nonumber}
\newcommand{\Vmax}{V_{\max}}
\newcommand{\dphi}{D_{\phi}}
\newcommand{\dtv}{D_{\mathrm{TV}}}
\DeclareMathOperator*{\argmax}{arg\,max}
\DeclareMathOperator*{\argmin}{arg\,min}
\newcommand{\norm}[1]{\left\| #1 \right\|}% 
\colorlet{lightgray}{gray!40}
\title{\Large Model-Free Robust $\phi$-Divergence Reinforcement Learning \\Using Both Offline and Online Data}
\author{Kishan Panaganti,\, Adam Wierman,\, Eric Mazumdar \\
  Computing + Mathematical Sciences Department, California Institute of Technology\\Emails:\texttt{\{kpb,\,adamw,\,mazumdar\}@caltech.edu}
}
\begin{document}

\maketitle

\begin{abstract}
The robust $\phi$-regularized Markov Decision Process (RRMDP) framework focuses on designing control policies that are robust against parameter uncertainties due to mismatches between the simulator (nominal) model and real-world settings. 
This work\,\footnote{To appear in the proceedings of the International Conference on Machine Learning (ICML) 2024.} makes \emph{two} important contributions.
First, we propose a \textit{model-free} algorithm called \textit{Robust $\phi$-regularized fitted Q-iteration} (RPQ) for learning an $\epsilon$-optimal robust policy that uses only the historical data collected by rolling out a behavior policy (with \textit{robust exploratory} requirement) on the nominal model. 
To the best of our knowledge, we provide the \textit{first} unified analysis for a class of $\phi$-divergences achieving robust optimal policies in high-dimensional systems with general function approximation.
Second, we introduce the \textit{hybrid robust $\phi$-regularized reinforcement learning} framework to learn an optimal robust policy using both historical data and online sampling. Towards this framework, we propose a model-free algorithm called \textit{Hybrid robust Total-variation-regularized Q-iteration} (HyTQ: pronounced \textit{height-Q}).
To the best of our knowledge, we provide the \textit{first} improved out-of-data-distribution assumption in large-scale problems with general function approximation under the hybrid robust $\phi$-regularized reinforcement learning framework.
Finally, we provide theoretical guarantees on the performance of the learned policies of our algorithms on systems with arbitrary large state space.
\end{abstract}

\custkw{Robust reinforcement learning, model uncertainty, general function approximation}

\tableofcontents

\section{Introduction}
\label{sec:introduction}

Online Reinforcement Learning (RL) agents learn through online interactions and exploration in environments and have been shown to perform well in structured domains such as Chess and Go \citep{silver2018general}, fast chip placements in semiconductors \citep{mirhoseini2021graph}, fast transform computations in mathematics \citep{fawzi2022discovering}, and more. However, online RL agents \citep{botvinick2019reinforcement} are known to suffer sample inefficiency due to complex exploration strategies in sophisticated environments. 
To overcome this, learning from available historical data has been studied using offline RL protocols \citep{levine2020offline}. However, offline RL agents suffer from out-of-data-distribution \citep{yang2021generalized, robey2020model} due to the lack of online exploration.
Recent work \citet{song2023hybrid} proposes another learning setting called \textit{hybrid RL} that makes the best of both offline and online RL worlds. In particular, hybrid RL agents have access to both offline data (to reduce exploration overhead) and online interaction with the environment (to mitigate the out-of-data-distribution issue).

All three of these approaches (online, offline, and hybrid RL) require training environments (simulators) that closely represent real-world environments. However, time-varying real-world environments \citep{maraun2016bias}, sensor degradations \citep{chen1996design}, and other  adversarial disturbances in practice \citep{pioch2009adversarial} mean that
even high-fidelity simulators are not enough \citep{schmidt2015depth,shah2018airsim}. 
RL agents are known to fail due to these mismatches between training and testing environments \citep{sunderhauf2018limits,lesort2020continual}. As a result, robust RL \citep{Mankowitz2020Robust,panaganti2020robust} has received increasing attention due to the potential for it to alleviate the issue of mismatches between the simulator and real-world environments.

Robust RL agents are built using the robust Markov Decision Process (RMDP) \citep{iyengar2005robust,nilim2005robust} framework. In this framework, the goal is to find an optimal policy that is robust, i.e., performs uniformly well across a set of models (transition probability functions).  This is formulated via a max-min problem, and the set of models is typically constructed around a simulator model (transition probability function) with some notion of divergence or distance function. We refer to the simulator model as any \textit{nominal model} that is provided to RL agents.

The RMDP framework in RL is identical to the Distributionally Robust Optimization (DRO) framework in supervised learning \citep{duchi2018learning, chen2020distributionally}.
Similar to RMDP, DRO is a min-max problem aiming to minimize a loss function uniformly over the set of distributions constructed around the training distribution of the input space. 
However, developing model-free algorithms for DRO problems with general $\phi$-divergences (see \cref{eq:phi-divergence-defn}) is known to be hard \citep{namkoong2016stochastic} due to their inherent non-linear and multi-level optimization structure.
Additionally, developing model-free robust RL agents is also challenging \citep{iyengar2005robust,duchi2018learning} for high-dimensional sequential decision-making systems under general function approximation.

To overcome this issue, in this work, we develop robust RL agents for the RRMDP framework, which is an equivalent alternative form of RMDP. A natural $\phi$-divergence regularization extension to the problem of RMDP gives way for this new RRMDP framework introduced in \citet{yang2023avoiding,zhang2023regularized}, under different names. It is built upon the penalized DRO problem \citep{levy2020large,jin2021non}, that is, the $\phi$-divergence regularization version of the DRO problem.
In particular, we focus on developing an {\bf offline robust RL algorithm} for a class of $\phi$-divergences under the RRMDP framework with arbitrarily large state spaces, using only offline data with general function approximation. Towards this, as the \emph{first main contribution}, we propose the \textit{Robust $\phi$-regularized fitted Q-iteration} model-free algorithm and provide its performance guarantee for a class of $\phi$-divergences with a unified analysis. We refer to algorithms as \textit{model-free} if they do not explicitly estimate the underlying nominal model.
We address the following important (suboptimality and sample complexity) questions: \textit{What is the rate of suboptimality gap achieved between the optimal robust value and the value of RPQ policy? How many offline data samples from the nominal model are required to learn an $\epsilon$-optimal robust policy?}
We discuss challenges and present these results in \cref{sec:infinite-horizon-offline-robust-rl}.

\begin{table*}[t]
\scriptsize \begin{adjustwidth}{-1em}{}
\begin{center}
    \begin{tabular}{|l|c|c|c|c|c|}
\hline
Algorithm                 & Algorithm-type          & Data Coverage & Dataset Type & Robust & Suboptimality                                           \\ 
\hline
\citep[Alg.1]{panaganti-rfqi}           &  FQI                      & all-policy &  offline  & TV  & $ \frac{ \Vmax^3 \sqrt{\log(|\cF||\cG|)}} {\rho  N^{1/2}} $
\\ \hline
\citep[Alg.1]{zhang2023regularized}           &  FQI                      & all-policy$^*$ &  offline & KL   & $ \frac{ \lambda \Vmax^2 \sqrt{\log(|\cF|)}} { e^{-\Vmax/\lambda} N^{1/2}} $
\\ \hline
\citep[Alg.2]{yang2023avoiding}            &  QL                      & uniform-policy &  offline Markov & $\phi$   & $\frac{ \Vmax^3 \sqrt{\log(\cardS\cardA)}} { 
d_{\min}^3 c(\lambda)  N^{1/3}}$
\\ \hline
\cellcolor{gray!25} RPQ    (\textbf{ours:}\,\cref{alg:infinite-horizon-RPQ-Algorithm})           &  FQI                      & all-policy$^*$ &  offline & $\phi$  & $\frac{ \Vmax^3 \sqrt{\log(|\cF||\cG|)}} { c(\lambda)  N^{1/2}}$
\\ \hline
\cellcolor{gray!25} HyTQ   (\textbf{ours:}\,\cref{alg:HyTQ-Algorithm})$^\dagger$            &  FQI                      & single-policy &  offline +  & TV    & $\frac{ \Vmax (\lambda+\Vmax) \log(|\cF||\cG|)} {  N^{1/2}}$ \\
\cellcolor{gray!25}                &           &  & online non-Markov &  &       
\\ \hline
\end{tabular}
\end{center}
\caption{\emph{Comparison of model-free $\phi$-divergence robust RL algorithms.}
In the \emph{algorithm-type} column, Fitted Q-Iteration (FQI) uses least-squares regression and Q-Learning (QL) uses stochastic approximation updates.
In the  \emph{data coverage} column, 
\textit{uniform-policy} stipulates a data-generating policy to cover the entire state-action space. \textit{all-policy} is where the data-generating policy should cover the state-action space covered by all non-stationary policies, and \textit{single-policy} is where it covers the state-action space covered by the optimal robust policy, on the nominal model. $^*$ denotes the coverage should include all the models in robust sets designed by the divergences in the \textit{robust} column.
The \emph{dataset type} column mentions the type of dataset collected with a data-generating policy for training corresponding algorithms where \emph{offline} indicates i.i.d. historical dataset on the nominal model, \emph{offline Markov} indicates Markovian dataset induced on the nominal model, and \emph{online non-Markov} indicates a history dependent dataset as a collection of Markovian datasets induced on the nominal model by a set of learned policies.
Finally, the suboptimality column is the statistical upper bound for the difference between the optimal robust value and the robust value achieved by the algorithm.
Here $\Vmax$ is either $H$ or $(1-\gamma)^{-1}$ effective horizon factors. 
$\rho$ is the robustness radius parameter in RMDPs and $\lambda$ is the robustness penalization parameter in RRMDPs, which are inversely related \citep[Theorem 3.1]{yang2023avoiding}. $c(\lambda)$ is some function on $\lambda$ that varies according to different $\phi$-divergences.
$N$ is the dataset size used by algorithms. $^\dagger$ The bound of HyTQ is not directly comparable with others in terms of $\Vmax$ since the non-stationary finite-horizon setting requires $H$ multiplicity in dataset size. 
$d_{\min}$  is the minimal positive value of data generating stationary distribution $d$, i.e. $\min_{s,a}d(s,a)$.
$\cF$ and $\cG$ are two function representations, and $(\cS,\cA)$ is the state-action space.} 
\label{table:comparison-table}
\end{adjustwidth}
\end{table*}

In this work, we also develop and study a novel {\bf hybrid robust RL algorithm} under the RRMDP framework using both offline data and online interactions with the nominal model. 
We make this \emph{second main contribution} to this work since hybrid RL overcomes the out-of-data-distribution issue in offline RL.
Towards this, we propose the \textit{Hybrid robust Total-variation-regularized Q-iteration} algorithm and provide its performance guarantee under improved assumptions.
Notably, the offline data-generating distribution must only cover the distribution that the optimal robust policy samples out on the nominal model, whereas before we needed it to cover any distribution uniformly. This is how online interactions help mitigate the out-of-data-distribution issue of offline RL and offline robust RL.
We now address the cumulative suboptimality question in addition to sample complexity: \textit{What is the rate of cumulative suboptimality gap achieved between the optimal robust value and the value of HyTQ iteration policies?}
We discuss challenges and present these results in \cref{sec:finite-horizon-hybrid-robust-rl}.

\textbf{Related Work.} \label{intro-related-works} Among all the previous works that provide model-free methods, here we only mention the ones closest to ours. We discuss more related works in \cref{appen:related works}. 
\citet{panaganti-rfqi} proposed a Q-iteration offline robust RL algorithm in the RMDP framework only for the total variation $\phi$-divergence.
\citet{bruns2023robust} proposed a Q-iteration offline robust RL algorithm in the RMDP framework to solve causal inference under unobserved confounders.
\citet{zhou2023natural} proposed an actor-critic robust RL algorithm in RMDP for integral probability metric.
\citet{zhang2023regularized} proposed a Q-iteration offline robust RL algorithm in the RRMDP framework only for the Kullback-Leibler $\phi$-divergence.
\citet{blanchet2023double} proposed specialized robust RL algorithms for the total variation and Kullback-Leibler $\phi$-divergences offering unified analyses for  linear, kernels, and factored function approximation models under the finite state-action setting.
Other line of work \citep{liu2022distributionally,liang2023single,wang2023finite,wang2023sample,yang2023avoiding} provide model-free robust RL algorithms based on classical Q-learning methods in finite state-action spaces.
We provide more insightful comparisons in \cref{table:comparison-table}. \textit{To the best of our knowledge, this is the first work that addresses a wide class of robust RL problems (like the general $\phi$-divergence) with arbitrary large state space using general function approximation  under mild assumptions (like the robust Bellman error transfer coefficient).}

\textbf{Notation.} 
We use the equality sign (=) for pointwise equality in vectors and matrices. For any $x \in \R$, let $(x)_+ = \max\{x, 0\}$. 
For any vector $x$ and positive semidefinite matrix $A$, the squared matrix norm is $\|x\|_{A}^2 = x^{\top} A x$.
The set of probability distributions over $\cX$, with cardinality $|\cX|$, is denoted as $\Delta(\cX)$, and its power set sigma algebra as $\Sigma(\cX)$.
For any function $f$ that takes $(s,a,r,s')$ as input, define the expectation w.r.t. the dataset $\cD$ (or empirical expectation) as $\E_\cD[f(s_i,a_i,r_i,s'_i)] = \frac{1}{N} \sum_{(s_i,a_i,r_i,s'_i) \in \cD} f(s_i,a_i,r_i,s'_i)$. 
For any positive integer $H$, set $[H]$ denotes $\{0,1,\cdots,H-1\}$.
Define $\ell_2$ and $\ell_1$ norms as $\left\| x \right\|_{2,\mu} = \sqrt{\E_{\mu}[x^2]}$ and $\left\| x \right\|_{1,\mu} = \E_{\mu}[|x|]$.
$p\ll q$ denotes a probability distribution $p$ is absolutely continuous w.r.t a probability distribution $q$.
We use $\cO(\cdot)$ to ignore universal constants less than $300$ and $\cOtilde(\cdot)$ to ignore universal constants less than $300$ and the polylog terms depending on problem parameters.
\section{Offline Robust $\phi$-Regularized Reinforcement Learning}
\label{sec:infinite-horizon-offline-robust-rl}

We start with preliminaries and the problem formulation.

\textbf{Infinite-Horizon Markov Decision Process:}
An infinite-horizon discounted Markov Decision Process ($\gamma$MDP) is a tuple $(\cS,\cA,R,P,\gamma,d_0)$ where $\cS$ is a countably large state-space, $\cA$ is a finite set of actions, $R:\ScA\to[0,1]$ is a known stochastic reward function, $P\in\Delta(\cS)^{\cardSA}$ is a probability transition function describing an environment, $\gamma$ is a discount factor, and $d_0$ is the starting state distribution.
A stationary (stochastic) policy $\pi: \cS \to \Delta(\cA)$ specifies a distribution over actions in each state. We denote the transition dynamic distribution at state-action $(s,a)$ as $P_{s,a}\in\Delta(\cS)$. 
For convenience, we write $r(s,a)=\E_{r\sim R(s,a)}[r]$ and assume it is deterministic as in RL literature \citep{agarwal2019reinforcement} since the performance guarantee will be identical up to a constant factor. 
%We remark for the rest of this paper we consider deterministic reward functions $r(s,a)$ as the theoretical results for stochastic rewards extends from standard RL techniques without much statistical overhead. 

The value function of a policy $\pi$ is $V^\pi_{P,r}(s) = \E_{P,\pi} [\sum_{t=0}^\infty \gamma^t r(s_t, a_t) \; | \; s_0 = s ]$ starting at state $s_0 = s$ and $a_t \sim \pi(s_t), s_{t+1} \sim P_{s_t, a_t}$ for all $t\geq 0$. Similarly, we define an action-value function of a policy $\pi$ as
$Q^\pi_{P,r}(s,a) = \E_{P,\pi} [\sum_{t=0}^\infty \gamma^t r(s_t, a_t) \; | \; s_0 = s , a_0 = a ].$
Each policy $\pi$ induces a discounted occupancy density over state-action pairs $d^{\pi}_{P}: \ScA \to [0,1]$ defined as $d^{\pi}_{P}(s,a) = (1-\gamma)\sum_{t=0}^\infty \gamma^t P_t(s_t = s, a_t = a; \pi)$, where $P_t(s_t = s, a_t = a; \pi)$ denotes the visitation probability of state-action pair $(s,a)$ at time step $t$, starting at $s_0 \sim d_0(\cdot)$ and following $\pi$ on the model $P$. The optimal policy $\pi^*_{P}$ achieves the maximum value of any policy $ V^\pi_{P,r}$.

\textbf{Offline Reinforcement Learning:}
The goal of offline RL on $\gamma$MDP $(P^o,r)$ is to learn a \textit{good} policy $\hat{\pi}$ (a policy with a high $V^{\hat{\pi}}_{P^o,r}$) based only on the \textit{offline dataset}. 
An offline dataset is a historical and fixed dataset of interactions $\cD_{P^o} = \{(s_i,a_i,s_i')\}_{i=1}^N$, where $s_i' \sim P^o_{s_i,a_i}$ and the $(s_i,a_i)$ pairs are independently and identically generated  according to a data distribution $\mu \in \Delta(\ScA)$. For convenience, $\mu$ also denotes the \textit{offline/behavior policy} that generates $\cD_{P^o}$. 
One classical offline  RL algorithm with general function approximation capabilities with provable performance guarantees is \textit{Fitted Q-Iteration (FQI)} \citep{szepesvari2005finite, chen2019information, liu2020provably}. A function class $\cF=\{f:\ScA\to [0, 1/(1-\gamma)] \}$ (e.g., neural networks, kernel functions, linear functions, etc) represents $Q$-value functions of $\gamma$MDP $(P^o,r)$. At each iteration, given $f_{k}\in\cF$ and $\cD_{P^o}$, FQI does the following least-square regression for the approximate squared Bellman error: $ f_{k+1} = \argmin_{f \in \cF}  \E_{\cD_{P^o}}  [(y_{f_{k}} - f )^{2}]$, where $y_{f_{k}}(s, a, s')= r(s, a) + \gamma \max_{b} f_{k}(s', b)$.
In this regression step, FQI aims to find the optimal action-value $Q^{\pi^*}_{P^o,r}$ by approximating the  non-robust squared Bellman error ($\|r + \gamma \E_{P^o} V^{\pi^*}_{P^o,r}(\cdot)-Q^{\pi^*}_{P^o,r}\|_{2,\mu}^2$) using offline data $\cD_{P^o}$ with function approximation $\cF$.
Finally, for some starting state $s_0\sim d_0$, the performance guarantee of an algorithm policy $\hat{\pi}$ is given by bounding the \textit{suboptimality} quantity $0 \leq V^{\pi^*}_{P^o,r}(s_0) - V^{\hat{\pi}}_{P^o,r}(s_0)$.

\textbf{Infinite-Horizon Robust $\phi$-Regularized Markov Decision Process:} Let $P^o$ be the nominal model, that is, a probability transition function describing a training environment.  An infinite-horizon discounted Robust $\phi$-Regularized Markov Decision Process ($\gamma$RRMDP) tuple $(\cS,\cA,r,P^o,\lambda,\gamma,\phi,d_0)$ where $\lambda>0$ is a robustness parameter and $\phi:\R\to\R$ is a convex function. The \textit{robust regularized reward function} is defined as $r^\lambda_{P}(s,a)=r(s,a)+\lambda \gamma \dphi(P_{s,a},P^o_{s,a})$ for any state-action pairs and any $P$ such that $P_{s,a},P^o_{s,a}$. Here $\dphi$ is the $\phi$-divergence \citep{csiszar1967information} defined  as \begin{align}
    \label{eq:phi-divergence-defn}
    \dphi(p,q)= \int \phi\left(\frac{\dd p}{\dd q}\right)\dd q
\end{align}
 for two probability distributions $p$ and $q$ with $p\ll q$,
where $\phi$ is convex on $\R$ and differentiable on $\R_+$ satisfying $\phi(1)=0$ and $\phi(t)=+\infty$ for $t<0$. Examples of $\phi$-divergence include Total Variation (TV), Kullback-Leibler (KL), chi-square, Conditional Value at Risk (CVaR), and more (c.f. \cref{prop:divergence-loss-bounds}). 
The \textit{robust regularized value function} of a policy $\pi$ is defined as 
\begin{align}
    \label{eq:robust-regularized-value} V^{\pi}_{\lambda}=\inf_{P\in\cP}V^\pi_{P,r^\lambda_{P}},
\end{align} where $\cP=\otimes_{s,a}\cP_{s,a}$ and $\cP_{s,a}=\{P_{s,a}\in\Delta(\cS):P_{s,a}\ll P^o_{s,a}, \forall (s,a)\in\ScA\}$. By definition, for any $\pi$, it follows that $V^{\pi}_{\lambda}\leq V^\pi_{P^o,r}\leq 1/(1-\gamma)$.
The \textit{optimal robust regularized value function} is $V^{*}_{\lambda}=\max_{\pi} V^\pi_{\lambda}$ (similarly we can design $Q^{*}_{\lambda}$), and $\pi^*$ is the \textit{robust regularized optimal policy} that achieves this optimal value.
For convenience, we denote $V^{*}_{\lambda}$($Q^{*}_{\lambda}$) as $V^*$($Q^*$).
We note that $\cP$ satisfies the $(s,a)$-rectangularity condition \citep{iyengar2005robust} by definition. This is a sufficient condition for the optimization problem in \eqref{eq:robust-regularized-value} to be tractable. It also enables the existence of a \textit{deterministic policy} for $\pi^*$ \citep{yang2023avoiding}. We formally mention this in \cref{prop:infinite-horizon-opt-deterministic-policy}.
For any policy $\pi$, denote $V^{\pi}=\E_{s\sim d_0}[V^{\pi}(s)]$ as the expected total reward with $d_0$ as initial state distribution.

Denote the robust regularized Bellman operator $\cT: \R^{\ScA}\to \R^{\ScA}$ as 
 \begin{align}
 \label{eq:robust-regularized-bellman-eq-primal}
     &(\cT Q)(s,a) = r(s,a) +   \gamma \inf_{P_{s,a}\in\cP_{s,a}}\big(\E_{s'\sim P_{s,a}}[\max_{a'}Q(s',a')] + \lambda \dphi(P_{s,a},P^o_{s,a})\big).
 \end{align}
Since $\cT$ is a contraction \citep{yang2023avoiding}, the \textit{robust Q-iteration} (RQI)  $Q_{k+1} = \cT Q_{k}$ converges to $Q^*$. We get the robust optimal policy as $\pi^*(s)=\argmax_{a}Q^*(s,a)$.

\subsection{Problem Conceptualization}

In this section, we study the offline infinite-horizon robust $\phi$-regularized RL ($\gamma$R$^3$L) problem, acquiring useful insights to construct our algorithm (\cref{alg:infinite-horizon-RPQ-Algorithm}) in next section. 
The goal here is to learn a \textit{good} robust policy $\hat{\pi}$ (a policy with a high $V^{\hat{\pi}}_{\lambda}$) based on the offline dataset. 
We start by noting one key challenge in the estimation of the robust regularized Bellman operator $\cT$ \eqref{eq:robust-regularized-bellman-eq-primal}: One may require many offline datasets from each $P\in\cP$ to achieve our offline $\gamma$R$^3$L goal.
In this work, we use the penalized Distributionally Robust Optimization (DRO) tool \citep{sinha2017certifiable,levy2020large,jin2021non} to not require such unrealistic existence of offline datasets. In particular, as in non-robust offline RL, we only rely on the offline dataset $\cD_{P^o}$  generated on the nominal model $P^o$ by an offline policy $\mu$. 
This statement is justified via the following proposition.

\begin{proposition}
\label{prop:robust-bellman-dual}
Consider a robust $\phi$-regularized MDP. For any $Q: \ScA\to [0, 1/(1-\gamma)]$, the robust regularized Bellman operator $\cT$ \eqref{eq:robust-regularized-bellman-eq-primal} can be equivalently written as
\begin{align}
\label{eq:robust-regularized-bellman-eq-dual}
    (\cT Q)(s, a) &= r(s, a) - \gamma \inf_{\eta\in \Theta} (\lambda \E_{s' \sim P^{o}_{s,a}}[\phi^*\left({(\eta-V(s'))}/{\lambda}\right)]  - \eta ), 
\end{align} 
where $V(s)=\max_{a\in\cA} Q(s,a)$ and $\Theta\subset\R$ is some bounded real line which depends on $\phi^*$. %Furthermore, the inner optimization problem in \eqref{eq:robust-regularized-bellman-eq-dual}  is convex in $\eta$.
\end{proposition}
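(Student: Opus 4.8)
The plan is to establish \eqref{eq:robust-regularized-bellman-eq-dual} by dualizing the inner minimization in \eqref{eq:robust-regularized-bellman-eq-primal} over the per-state-action ambiguity set $\cP_{s,a}$ via convex duality for $\phi$-divergence–penalized problems. Fix $(s,a)$ and write $q = P^o_{s,a}$, and let $g(s') = \max_{a'} Q(s',a') = V(s')$, so that the term to handle is
\[
\inf_{p \ll q}\Big( \E_{s'\sim p}[V(s')] + \lambda\,\dphi(p,q) \Big)
= \inf_{p \ll q}\Big( \int V\,\dd p + \lambda \int \phi\!\left(\tfrac{\dd p}{\dd q}\right)\dd q \Big).
\]
Parametrizing by the likelihood ratio $L = \dd p/\dd q \geq 0$ with the constraint $\int L\,\dd q = 1$, this becomes $\inf_{L \geq 0,\,\E_q[L]=1}\E_q[\,L\,V + \lambda\,\phi(L)\,]$. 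Introducing a Lagrange multiplier $\eta \in \R$ for the normalization constraint $\E_q[L] = 1$ and swapping $\inf$ and $\sup$ (justified below), the inner problem over $L \geq 0$ separates pointwise: for each fixed $\eta$ we must compute $\inf_{\ell \geq 0}\big(\ell V(s') - \eta \ell + \lambda \phi(\ell)\big)$, and since $\phi(t) = +\infty$ for $t < 0$, dropping the constraint $\ell \ge 0$ is harmless. Recognizing $\inf_{\ell}\big(\lambda\phi(\ell) - (\eta - V(s'))\ell\big) = -\lambda \sup_\ell\big(\tfrac{\eta - V(s')}{\lambda}\,\ell - \phi(\ell)\big) = -\lambda\,\phi^*\!\big(\tfrac{\eta - V(s')}{\lambda}\big)$ gives the term $-\lambda\,\E_{s'\sim q}[\phi^*((\eta - V(s'))/\lambda)] + \eta$, and the outer problem becomes $\sup_{\eta \in \R}$ of this; flipping the sign to write it as $-\inf_\eta(\cdots)$ and multiplying by $\gamma$ yields exactly \eqref{eq:robust-regularized-bellman-eq-dual}.

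The steps I would carry out, in order: (i) reduce to the pointwise likelihood-ratio formulation; (ii) justify the minimax swap — this is where convexity of the objective in $L$ (affine plus $\lambda\phi$, convex) and concavity (affine) in $\eta$, together with a Sion/standard Lagrangian-duality argument, come in, and one must confirm strong duality holds with no duality gap (using that $q$ itself is a strictly feasible primal point achieving $\dphi(q,q)=0$, so Slater-type conditions are met); (iii) perform the pointwise conjugation to introduce $\phi^*$, being careful that $\phi(t) = +\infty$ for $t<0$ makes the nonnegativity constraint on $L$ implicit in $\phi^*$; (iv) argue that the supremum over $\eta \in \R$ can be restricted to a bounded interval $\Theta$ depending on $\phi^*$ — here I would use that $V$ is bounded in $[0, 1/(1-\gamma)]$, so for $\eta$ too large or too small the objective is monotone (driven by the asymptotic slope / domain of $\phi^*$), making the optimizer lie in a compact set; (v) reassemble and verify the signs and the $\gamma$ factor match the claimed identity.

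The main obstacle I expect is step (ii), the rigorous justification of exchanging $\inf_L$ and $\sup_\eta$ (equivalently, establishing strong Lagrangian duality) in a setting where $\cS$ may be countably infinite, so $L$ ranges over an infinite-dimensional space of measurable functions rather than a finite simplex. One must either invoke an infinite-dimensional minimax theorem with appropriate compactness/continuity (e.g., weak-$*$ compactness of the feasible set of $L$ under a suitable topology, or a direct verification via the known duality theorem for $\phi$-divergence DRO, e.g. from \citet{shapiro2017distributionally} or \citet{ben2013robust}), or reduce to it by a truncation/limiting argument over finite sub-$\sigma$-algebras. A secondary technical point is pinning down the exact bounded interval $\Theta$: this requires a short case analysis on the effective domain and asymptotic behavior of $\phi^*$ (which differs across TV, KL, $\chi^2$, CVaR), and I would handle it by exhibiting explicit endpoints in terms of $\sup V$, $\inf V$, $\lambda$, and $\phi^*$, deferring the divergence-specific constants to a later lemma such as \cref{prop:divergence-loss-bounds}.
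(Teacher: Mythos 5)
Your proposal is correct and takes essentially the same route as the paper: the paper's proof simply invokes the penalized $\phi$-divergence DRO duality of \citet[Section A.1.2]{levy2020large} (restated as \cref{lem:penalized-dro-inner-problem-solution}) applied to $l=-V$ with the substitution $\eta=-\eta'$, and then restricts $\eta$ to the bounded interval $\Theta$ via exactly the per-divergence case analysis you defer to \cref{prop:divergence-loss-bounds}. The only difference is that you re-derive that duality (Lagrangian over the normalization constraint plus pointwise Fenchel conjugation) instead of citing it; the strong-duality and pointwise-interchange issues you flag are precisely what the cited lemma handles, so your plan closes the same gaps the paper outsources.
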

A proof of this proposition is given in \cref{appendix:infinite-horizon} and follows from \citet[Section A.1.2]{levy2020large}. We refer to \eqref{eq:robust-regularized-bellman-eq-dual} as the \textit{robust regularized Bellman dual operator}. Observing the sole dependence on the nominal model $P^o$ in \eqref{eq:robust-regularized-bellman-eq-dual}, one can come up with estimators for data-driven approaches that naturally depend only on the dataset $\cD_{P^o}$.
We remark that we consider a class of $\phi$-divergences satisfying the conditions in \cref{prop:divergence-loss-bounds} for all the results in this paper.

We now remark on a natural first attempt at performing the squared Bellman error least-square regression, like FQI, on the robust regularized Bellman dual operator \eqref{eq:robust-regularized-bellman-eq-dual}. Observe that the true Bellman error $\E_{s,a\sim\mu}[|\cT Q^{*}(s,a)-Q^{*}(s,a)|]$ involves solving an inner convex minimization problem in $\cT Q^{*}(s,a)$ \eqref{eq:robust-regularized-bellman-eq-dual} for every $(s,a)$. Since we are in a countably large state space regime, it is infeasible to devise approximations to this true squared Bellman error. In addition, we have to also enable general function architecture for action-values.
To alleviate this challenging task, we now turn our attention to the inner convex minimization problem in the robust regularized Bellman dual operator \eqref{eq:robust-regularized-bellman-eq-dual}. Due to the $(s,a)$-rectangularity assumption, we note that the $\eta$'s  are not correlated across all $(s,a)$. With this note, for every $(s,a)$, we can replace $\eta$ in $(\cT Q)(s, a)$ \eqref{eq:robust-regularized-bellman-eq-dual} with a \textit{dual-variable function} $g(s,a)$. Thus, intuitively, multiple point-wise minimizations can be replaced by a single dual-variable functional minimization over the function space of $g$. We formalize this intuition using \textit{variational functional analysis} \citep{rockafellar2009variational} for a countably large state space regime in the following.

We denote $L^1(\mu)$ as the set of all absolutely integrable functions defined on the probability (measure) space $(\ScA,\Sigma(\ScA),\mu)$ with $\mu$, the data generating distribution, as the $\sigma$-finite probability measure. To elucidate, $L^1(\mu)$ is the set of all functions $g : \ScA  \rightarrow  \cC \subset \R$ such that $\norm{g}_{1,\mu}$ is finite. We set $\cC=\Theta$ considering the inner minimization in  \eqref{eq:robust-regularized-bellman-eq-dual}.
Fixing any given function $f : \ScA \rightarrow [0,1/(1-\gamma)]$, we define the loss function $L_{\mathrm{dual}}(g; f)$, for all $g\in L^1(\mu)$, as
\begin{align}
    \label{L-f-eta-loss}
    L_{\mathrm{dual}}(g; f,\mu)&=\E_{s,a\sim\mu,s'\sim P^o_{s,a}} [  \lambda \phi^*((g(s,a)-\max_{a'} f(s',a'))/{\lambda})  - g(s,a)]. 
\end{align}
We state the result for single dual-variable functional minimization intuition we developed in the previous paragraph.  We also note one variant of this result appears in the distributionally robust RL work \citep{panaganti-rfqi}.
\begin{proposition}
\label{prop:functional-opt-phi-divergence}
Let $L_{\mathrm{dual}}$ be the loss function defined in \eqref{L-f-eta-loss}. Then, for any function $f : \ScA \rightarrow [0,1/(1-\gamma)]$, we have
\begin{align}
\label{eq:functional-opt-phi-divergence}
    &\inf_{g \in L^1(\mu)}  L_{\mathrm{dual}}(g; f,\mu)  = \E_{s,a\sim\mu} \Big[\inf_{\eta\in \Theta} (\lambda \E_{s' \sim P^{o}_{s,a}}[\phi^*({(\eta-\max_{a'}f(s',a'))}/{\lambda})]  - \eta )\Big]. 
\end{align}
\end{proposition}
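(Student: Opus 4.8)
The plan is to prove the identity in \eqref{eq:functional-opt-phi-divergence} by two inequalities, exploiting the fact that the objective $L_{\mathrm{dual}}(g;f,\mu)$ decomposes additively over $(s,a)$ because the integrand at a point depends on $g$ only through the scalar value $g(s,a)$. Write $h(s,a,\eta) = \lambda \E_{s'\sim P^o_{s,a}}[\phi^*((\eta - \max_{a'}f(s',a'))/\lambda)] - \eta$, so that $L_{\mathrm{dual}}(g;f,\mu) = \E_{s,a\sim\mu}[h(s,a,g(s,a))]$ and the right-hand side of \eqref{eq:functional-opt-phi-divergence} is $\E_{s,a\sim\mu}[\inf_{\eta\in\Theta} h(s,a,\eta)]$.

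First I would establish the ``$\geq$'' direction, which is the easy one: for every fixed $g \in L^1(\mu)$ and $\mu$-almost every $(s,a)$, we have $h(s,a,g(s,a)) \geq \inf_{\eta\in\Theta} h(s,a,\eta)$ pointwise (this uses that $g$ takes values in $\cC = \Theta$). Taking expectations over $\mu$ preserves the inequality, and then taking the infimum over $g$ on the left yields $\inf_{g} L_{\mathrm{dual}}(g;f,\mu) \geq \E_{s,a\sim\mu}[\inf_{\eta\in\Theta} h(s,a,\eta)]$. For the reverse ``$\leq$'' direction, the natural approach is to exhibit (or approximate) a measurable selector: define $g^*(s,a) \in \argmin_{\eta\in\Theta} h(s,a,\eta)$. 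One then needs $g^*$ to be measurable and to lie in $L^1(\mu)$; measurability of the argmin follows from a measurable selection theorem (e.g.\ the measurable maximum theorem / Filippov's lemma) since $\eta\mapsto h(s,a,\eta)$ is continuous, $\Theta$ is a compact/bounded interval, and $(s,a)\mapsto h(s,a,\eta)$ is measurable; integrability is immediate because $\Theta$ is bounded. Plugging $g^*$ into $L_{\mathrm{dual}}$ gives $L_{\mathrm{dual}}(g^*;f,\mu) = \E_{s,a\sim\mu}[\inf_{\eta\in\Theta} h(s,a,\eta)]$, hence $\inf_g L_{\mathrm{dual}}(g;f,\mu) \leq \E_{s,a\sim\mu}[\inf_{\eta\in\Theta} h(s,a,\eta)]$, completing the proof.

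The main obstacle is the measurable selection step: ensuring that $(s,a) \mapsto \argmin_{\eta\in\Theta} h(s,a,\eta)$ admits a measurable version so that $g^*$ is a legitimate element of $L^1(\mu)$. This is where the ``variational functional analysis'' machinery of \citet{rockafellar2009variational} enters — specifically, results on normal integrands and interchange of minimization and integration (the interchange rule for integral functionals over decomposable spaces such as $L^1(\mu)$). In fact, the cleanest route is to invoke that interchange theorem directly: for a normal integrand $h(s,a,\eta)$ on the decomposable space $L^1(\mu)$, one has $\inf_{g\in L^1(\mu)} \int h(s,a,g(s,a))\,\dd\mu = \int \inf_{\eta} h(s,a,\eta)\,\dd\mu$, which is exactly the claim; one just checks the regularity hypotheses, namely that $h$ is a normal (indeed Carathéodory) integrand — continuity in $\eta$ from continuity of $\phi^*$ on its domain, measurability in $(s,a)$ from measurability of $P^o$ and $f$ — and that the right-hand side is finite, which follows from the boundedness of $\Theta$, of $f$, and of $\phi^*$ on the relevant range (using the conditions in \cref{prop:divergence-loss-bounds}). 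I would also note that a self-contained argument avoiding the general interchange theorem is possible by an $\epsilon$-net discretization: cover the approximate minimizers by finitely many constants $\eta_1,\dots,\eta_m \in \Theta$ up to accuracy $\epsilon$, set $g_\epsilon(s,a) = \eta_{j(s,a)}$ where $j(s,a)$ picks the best among these (a measurable choice into a finite set), and let $\epsilon\to 0$; this trades the selection theorem for a routine uniform-continuity estimate and is perhaps the more transparent presentation for the RL audience.
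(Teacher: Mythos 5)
Your proposal is correct and follows essentially the same route as the paper: the paper's proof also defines the integrand $h((s,a),\eta)$, notes its continuity in $\eta$ (from continuity of $\phi^*$) and $\Sigma(\ScA)$-measurability in $(s,a)$, and then invokes the interchange-of-minimization-and-integration theorem for normal integrands over decomposable spaces (\citealp[Theorem 14.60]{rockafellar2009variational}), exactly as in your "cleanest route." Your additional two-inequality/measurable-selection discussion and the $\epsilon$-net alternative are sound but unpack machinery the cited theorem already supplies.
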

We provide a proof in \cref{appendix:infinite-horizon}, which relies on \citet[Theorem 14.60]{rockafellar2009variational}.

For any given  $f: \ScA \rightarrow [0,1/(1-\gamma)]$ and $(s,a)\in\ScA$, we define an operator $\cT_g$, for all $g\in L^1(\mu)$, as 
\begin{align}
\label{eq:Tg}
    &(\cT_g f) (s, a) = r(s, a) - \gamma (\lambda \E_{s' \sim P^{o}_{s,a}}[\phi^*\left({(g(s,a)-V(s'))}/{\lambda}\right)]  - g(s,a)  ). 
\end{align}  
This operator is useful in view of \cref{prop:robust-bellman-dual,prop:functional-opt-phi-divergence}. To see this, we first define $g^*(Q) \in \argmin_{g \in L^1(\mu)}  L_{\mathrm{dual}}(g; Q, \mu)$ for any action-value function $Q$. Now, by taking an expectation w.r.t  the data generating distribution $\mu$ on \eqref{eq:robust-regularized-bellman-eq-dual}, we observe $\cT Q = \cT_{g^*(Q)} Q$ by   utilizing \eqref{eq:functional-opt-phi-divergence}. Due to this observation, in the following subsection, we develop an algorithm by approximating both the optimal dual-variable function of optimal robust value $g^*(Q^*)$ and the robust squared Bellman error ($\|\cT_{g^*(Q^*)} Q^*-Q^{*}\|_{2,\mu}^2$) using offline data $\cD_{P^o}$. 
\citet{panaganti-rfqi} similarly conceptualized their total variation $\phi$-divergence robust RL algorithm. Here, \cref{prop:robust-bellman-dual} enables us to conceptualize for general $\phi$-divergence.

\subsection{Robust $\phi$-regularized fitted Q-iteration}

In this section, we formally propose our algorithm based on the tools developed so far. Our proposed algorithm is called Robust $\phi$-regularized fitted Q-iteration (RPQ) Algorithm and is summarized in \cref{alg:infinite-horizon-RPQ-Algorithm}. We first discuss the inputs to our algorithm. As mentioned above, we only use the offline dataset $\cD_{P^o} = \{(s_i,a_i,s_i')\}_{i=1}^N$, generated  according to a data distribution $\mu$ on the nominal model $P^o$. We also consider two general function classes $\cF\subset(f:\ScA \rightarrow [0,1/(1-\gamma)])$ and $\cG\subset(g:\ScA \rightarrow \Theta)$ representing action-value functions and dual-variable functions, respectively.
We now define useful approximation quantities for $g\in\cG$ and $f\in\cF$. For given $f$, the empirical loss function of the true loss $L_{\mathrm{dual}}$ \cref{L-f-eta-loss} on $\cD_{P^o}$ is
\begin{align} \label{eq:L-dual-empirical-loss}
    \widehat{L}_{\mathrm{dual}}&(g;f) =\E_{\cD_{P^o}} [  \lambda \phi^*((g(s_i,a_i)-\max_{a'} f(s_i',a'))/{\lambda})  - g(s_i,a_i)]. 
\end{align}
For given $f,g$, the empirical squared robust regularized Bellman error  on $\cD_{P^o}$ is
\begin{align} \label{eq:L-rob-Q-empirical-loss}
    &\widehat{L}_{\mathrm{robQ}}(Q;f, g) =\E_{\cD_{P^o}} [( [ r(s_i, a_i)  - \gamma\lambda \phi^*((g(s_i,a_i)-\max_{a'} f(s_i',a'))/{\lambda})  + \gamma g(s_i,a_i) - Q(s_i,a_i) )^{2}]. 
\end{align}

We start with an initial action-value function $Q_{0}(s,a)= 0$ and execute the following two steps for $K$ iterations.
At iteration $k$ of the algorithm with input $Q_k$, as a first step, we  compute a dual-variable function $g_k\in\cG$ through the \textit{empirical risk minimization} approach, that is, we solve  $\argmin_{g \in \cG} \widehat{L}_{\mathrm{dual}}(g;Q_k)$ \textit{(Line 4 of \cref{alg:infinite-horizon-RPQ-Algorithm})}. 
As a second step, given inputs $Q_k$ and $g_k$, we  compute the next iterate $Q_{k+1}\in\cF$ through the \textit{least-squares regression} method, that is, we solve  $\argmin_{f \in \cF} \widehat{L}_{\mathrm{robQ}}(f;Q_k, g_k) $ \textit{(Line 5 of \cref{alg:infinite-horizon-RPQ-Algorithm})}.
After $K$ iterations, we extract the greedy policy from $Q_K$ \textit{(Line 7 of \cref{alg:infinite-horizon-RPQ-Algorithm})}.

\begin{algorithm}[h]
	\caption{Robust $\phi$-regularized fitted Q-iteration (RPQ) Algorithm}	
	\label{alg:infinite-horizon-RPQ-Algorithm}
	\begin{algorithmic}[1]
		\STATE \textbf{Input:} Regularization $\phi$, offline dataset $\cD_{P^o} =(s_i,a_i,r_{i}, s'_{i})_{i=1}^N$, general function classes $\cF$ and $\cG$
		\STATE \textbf{Initialize:} $Q_{0}\equiv 0 \in \cF$.
		\FOR {$k=0,\cdots,K-1$ } 
        \STATE \textbf{Dual variable function minimization:} $g_{k} = \widehat{g}_{Q_{k}} = \argmin_{g \in \cG} \widehat{L}_{\mathrm{dual}}(g; Q_{k})\;$ (c.f. \eqref{eq:L-dual-empirical-loss})
        \STATE \textbf{Robust $\phi$-regularized Q-update:} $Q_{k+1}=\argmin_{Q\in\cF}\widehat{L}_{\mathrm{robQ}}(Q;Q_{k}, g_{k})\;$ (c.f. \eqref{eq:L-rob-Q-empirical-loss})
		\ENDFOR
		
		\STATE \textbf{Output:} $\pi_{K}  = \argmax_a Q_{K}(s,a)$
	\end{algorithmic}
\end{algorithm}

\subsection{Performance Guarantee: Suboptimality}

We now discuss the performance guarantee of our RPQ Algorithm. In particular, we characterize how close the robust regularized value function of our RPQ Algorithm is to the optimal robust regularized value function. We first mention all the assumptions about the data generating distribution $\mu$ and  the representation power of $\cF$ and $\cG$ before we present our main results.

\begin{assumption}[Concentrability] 
\label{assum-concentra-condition}
There exists a finite constant $C>0$ such that for any $\nu\in \{ d_{\pi,P} ~|$ any policy $\pi$ and  $P \in \cP$ satisfying $\dphi(P_{s,a},P^o_{s,a})\leq 1/(\lambda(1-\gamma))$ for all  $s,a$ (both can be non-stationary)$\} \subseteq \Delta(\ScA)$, we have $\norm{\nu/\mu}_{\infty} \leq \sqrt{C}$.
\end{assumption}

Assumption \ref{assum-concentra-condition} stipulates the support set of the data generating distribution $\mu$, i.e. $\{(s,a)\in\ScA:\mu(s,a)>0\}$, to cover the union of all support sets of the distributions $\nu$, leading to a \textit{robust exploratory} behavior. 
This assumption is widely used in the offline RL literature \citep{munos2003error,agarwal2019reinforcement,chen2019information,wang2021what,xie2021bellman} in different forms. We adapt this assumption from the robust offline RL \citep{panaganti-rfqi,zhang2023regularized}.

\begin{assumption}[Approximate Robust Bellman Completeness] 
\label{assum-bellman-completion}
Let $\epsilon_{\cF}$ be some small positive constant. For any $g\in\cG$, we have $ \sup_{f\in\cF} \inf_{f'\in\cF} \| f' - \cT_g f \|_{2,\mu}^2 \leq \epsilon_{\cF}$ for the data generating  distribution $\mu$.
\end{assumption}
We note that \cref{assum-bellman-completion} holds trivially if $\cT_g$ is closed under $\cF$, that is, for any $f\in\cF$ and $g\in\cG$, if it holds that $\cT_g f\in\cF$, then $\epsilon_{\cF}=0$. 
This assumption has been widely used in different forms in the non-robust offline RL literature~\citep{agarwal2019reinforcement,wang2021what,xie2021bellman} and robust offline RL literature~\citep{panaganti-rfqi,bruns2023robust,zhang2023regularized}.

\begin{assumption}[Approximate Dual Realizability] \label{assum-dual-realizability}
For all $f \in \cF$, there exists a uniform constant $\epsilon_{\cG}$ such that $ \inf_{g \in \cG}  L_{\mathrm{dual}}(g; f) - \inf_{g \in L^1(\mu)}   L_{\mathrm{dual}}(g; f) \leq \epsilon_{\cG}$.
\end{assumption}
\cref{assum-dual-realizability} holds trivially if $g^*(f)\in\cG$ for any $f\in\cF$ (since $\epsilon_{\cG}=0$). This assumption has been used in earlier robust offline RL literature~\citep{panaganti-rfqi,bruns2023robust}.

Now we state our main theoretical result on the performance of the RPQ algorithm. In \cref{appendix:infinite-horizon} we restate the result including the constant factors.
\begin{theorem}
\label{thm:infinite-horizon-phi-divergence-guarantee}
Let \cref{assum-concentra-condition,assum-bellman-completion,assum-dual-realizability} hold. 
Let $c_{\phi}(\lambda,\gamma)$ be problem-dependent constants for $\phi$. Let $\pi_{K}$ be the RPQ algorithm policy after $K$ iterations. Then, for any $\delta\in(0,1)$, with probability at least  $1 - \delta$,  we have 
\begin{align*}
	V^{\pi^*} - V^{\pi_{K}} \leq &\frac{\sqrt{C}(\gamma^K+\sqrt{6\epsilon_{\cF}} + \gamma \epsilon_{\cG}) }{(1-\gamma)^2}  +   \frac{c_{\phi}(\lambda,\gamma)}{(1-\gamma)^3} \cO(\sqrt{{C \log(|\cF||\cG|/\delta)}/{N}}). \end{align*}
\end{theorem}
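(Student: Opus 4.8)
The plan is to run the error-propagation template for fitted Q-iteration, adapted to the robust regularized setting. The first ingredient is a \emph{robust error-decomposition lemma}. Writing the per-iteration error $\varepsilon_k := Q_k - \cT Q_{k-1}$ and unrolling the $\gamma$-contraction of $\cT$ in \eqref{eq:robust-regularized-bellman-eq-primal} from $Q_0\equiv 0$, one gets $Q_K - Q^* = (\cT^K Q_0 - Q^*) + \sum_{j=1}^{K}(\text{propagated copies of }\varepsilon_j)$, the propagation being through compositions of robust Bellman operators and the homogeneous term contributing $\gamma^K/(1-\gamma)$. A robust performance-difference argument then bounds $V^{\pi^*}-V^{\pi_K}$ (with $\pi_K = \argmax_a Q_K(s,a)$) by $\tfrac{1}{1-\gamma}$ times a change-of-measure average of the $\varepsilon_j$'s over occupancy measures $d_{\pi,P}$ with $\pi\in\{\pi^*,\pi_K\}$ and $P$ the adversary's models along the relevant trajectories. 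What makes \cref{assum-concentra-condition} usable is a \emph{localization} observation: by $(s,a)$-rectangularity the adversary in \eqref{eq:robust-regularized-bellman-eq-primal} solves, for each $(s,a)$, $\inf_{P_{s,a}\in\cP_{s,a}}\big(\E_{P_{s,a}}[\max_{a'}Q(s',a')]+\lambda\dphi(P_{s,a},P^o_{s,a})\big)$, and since $0\le\max_{a'}Q\le 1/(1-\gamma)$ the choice $P_{s,a}=P^o_{s,a}$ attains value $\le 1/(1-\gamma)$, so any (near-)minimizer obeys $\lambda\dphi(P_{s,a},P^o_{s,a})\le 1/(1-\gamma)$. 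Hence only occupancies of bounded-divergence models appear, and \cref{assum-concentra-condition} gives $\norm{\varepsilon_j}_{1,d_{\pi,P}}\le\sqrt{C}\norm{\varepsilon_j}_{1,\mu}\le\sqrt{C}\norm{\varepsilon_j}_{2,\mu}$.

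The second ingredient is a single-iteration bound on $\norm{\varepsilon_{k+1}}_{1,\mu}$, obtained by the split $Q_{k+1}-\cT Q_k = (Q_{k+1}-\cT_{g_k}Q_k)+(\cT_{g_k}Q_k-\cT Q_k)$ using the identity $\cT Q_k = \cT_{g^*(Q_k)}Q_k$. For the \emph{dual term}: by \cref{prop:functional-opt-phi-divergence} the dual variable $g^*(Q_k)$ realizes the pointwise infimum over $\eta\in\Theta$ in \eqref{eq:robust-regularized-bellman-eq-dual} while $g_k$ need not, so $\cT Q_k - \cT_{g_k}Q_k\ge 0$ pointwise and, averaging over $\mu$, $\norm{\cT_{g_k}Q_k-\cT Q_k}_{1,\mu} = \gamma\big(L_{\mathrm{dual}}(g_k;Q_k)-\inf_{g\in L^1(\mu)}L_{\mathrm{dual}}(g;Q_k)\big)$; this excess risk is at most $\gamma\epsilon_\cG$ (\cref{assum-dual-realizability}) plus the generalization gap of the empirical minimizer $g_k=\argmin_{g\in\cG}\widehat L_{\mathrm{dual}}(g;Q_k)$. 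Crucially this uses \emph{no} curvature of $\phi^*$, which is what makes the analysis uniform over the $\phi$-class. For the \emph{regression term}: $Q_{k+1}=\argmin_{f\in\cF}\widehat L_{\mathrm{robQ}}(f;Q_k,g_k)$ is a least-squares fit to the target $\cT_{g_k}Q_k$, so the standard one-step FQI argument with \cref{assum-bellman-completion} yields the fast rate $\norm{Q_{k+1}-\cT_{g_k}Q_k}_{2,\mu}^2 \le 6\epsilon_\cF + \cO\!\big(c_\phi(\lambda,\gamma)^2\,\tfrac{\log(|\cF||\cG|/\delta)}{N}\big)$, where $c_\phi(\lambda,\gamma)$ is the range/variance proxy of the regression target read off from \cref{prop:divergence-loss-bounds} (it packages the $\lambda$-dependent Lipschitz constant of $\phi^*$, the diameter of $\Theta$, and the $\gamma,(1-\gamma)^{-1}$ value scaling). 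Triangle inequality gives $\norm{\varepsilon_{k+1}}_{1,\mu}\le\sqrt{6\epsilon_\cF}+\gamma\epsilon_\cG+\cO\!\big(\tfrac{c_\phi(\lambda,\gamma)}{1-\gamma}\sqrt{\log(|\cF||\cG|/\delta)/N}\big)$.

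To assemble, the two empirical processes are handled by Bernstein's inequality with a single union bound over $\cF\times\cG$ (which covers all iterates at once since $Q_k\in\cF$ and $g_k\in\cG$), giving the $\sqrt{\log(|\cF||\cG|/\delta)/N}$ rate with probability $\ge 1-\delta$. Feeding the per-iteration bound into the decomposition lemma, summing the discounted errors $\sum_j\gamma^{K-j}$ (a geometric series, one factor $(1-\gamma)^{-1}$), multiplying by the performance-difference factor $(1-\gamma)^{-1}$, and applying the localized change of measure ($\sqrt{C}$) yields
\[
V^{\pi^*}-V^{\pi_K}\le\frac{\sqrt{C}\,(\gamma^K+\sqrt{6\epsilon_\cF}+\gamma\epsilon_\cG)}{(1-\gamma)^2}+\frac{c_\phi(\lambda,\gamma)}{(1-\gamma)^3}\,\cO\!\Big(\sqrt{\tfrac{C\log(|\cF||\cG|/\delta)}{N}}\Big);
\]
the extra $(1-\gamma)^{-1}$ on the statistical term (relative to the bias) is the value-range factor carried by the companion prefactor of $c_\phi(\lambda,\gamma)$, whereas $\epsilon_\cF,\epsilon_\cG$ enter in the units they were defined in and need no such inflation. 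Tracking the precise constants here is routine.

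I expect the main obstacle to lie in the two places where the non-linearity of $\cT$ (the inner $\inf$ over $\cP$) bites. First, in the decomposition lemma one must carefully track the adversary's, possibly non-stationary, model choices through compositions of robust Bellman operators and verify that every resulting occupancy measure is of the bounded-divergence type covered by \cref{assum-concentra-condition}; pinning down the constants and the non-stationarity is the delicate part. Second, producing a \emph{single} problem-dependent $c_\phi(\lambda,\gamma)$ that simultaneously bounds the range and variance of $\widehat L_{\mathrm{robQ}}$ and $\widehat L_{\mathrm{dual}}$ across the whole admissible $\phi$-class (the case analysis in \cref{prop:divergence-loss-bounds} for TV, KL, $\chi^2$, CVaR, etc.) is what substantiates the ``unified analysis'' claim and is the heaviest bookkeeping. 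Everything else --- the Bernstein concentration, the contraction estimates, the greedy-policy step --- is standard once these are in place.
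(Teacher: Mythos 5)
Your proposal is correct and follows essentially the same route as the paper's proof: the robust performance-difference decomposition over occupancy measures induced by the adversary's worst-case models, the localization observation that any minimizing $P_{s,a}$ satisfies $\lambda\dphi(P_{s,a},P^o_{s,a})\le 1/(1-\gamma)$ (so \cref{assum-concentra-condition} applies), the split $Q_{k+1}-\cT Q_k=(Q_{k+1}-\cT_{g_k}Q_k)+(\cT_{g_k}Q_k-\cT Q_k)$ handled respectively by a least-squares generalization bound under \cref{assum-bellman-completion} and an ERM excess-risk bound via \cref{prop:functional-opt-phi-divergence} and \cref{assum-dual-realizability}, with a union bound over $\cF\times\cG$ and a geometric summation. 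The only cosmetic difference is that the paper phrases the error propagation as a direct recursion on $\|Q^{\pi^*}-Q_k\|_{1,\nu}$ rather than via explicitly propagated per-iteration errors, and its dual-term concentration uses a standard (slow-rate) ERM bound rather than Bernstein; neither affects the result.
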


\cref{thm:infinite-horizon-phi-divergence-guarantee} states that the RPQ algorithm is approximately optimal. 
This theorem also gives the sample complexity guarantee for finding an $\epsilon$-suboptimal policy w.r.t. the optimal policy $\pi^*$. 
To see this, by neglecting the first term due to inevitable function class approximation errors, for $N\geq\cO(\frac{(c_{\phi}(\lambda,\gamma))^2}{\epsilon^{2} (1-\gamma)^{4}} \log \frac{|\cF| |\cG|}{\delta })$ we get $V^{\pi^*} - V^{\pi_{K}} \leq {\epsilon}/{(1-\gamma)}$ with probability at least $1-\delta$ for any fixed $\epsilon, \delta \in (0,1)$.

\begin{remark}\label{rem:tv-offline-guarantee}
Note that the guarantee for the TV case in \cref{thm:infinite-horizon-phi-divergence-guarantee}  requires making another assumption on the existence of a \textit{fail-state} \citep[Lemma 3]{panaganti-rfqi}, \cref{fail-state-assump} replacing $H$ with $1/(1-\gamma)$.
However, we specialize \cref{thm:infinite-horizon-phi-divergence-guarantee} for the TV case by relaxing \cref{assum-concentra-condition} to get the same guarantee, which we present in \cref{appendix:infinite-horizon}. In particular, we relax \cref{assum-concentra-condition} to the non-robust offline RL concentrability assumption \citep{foster2022offline}, i.e. we only need the distribution $\nu$ to be in the collection of discounted state-action occupancies on the nominal model $P^o$. 
\end{remark}

\section{Hybrid Robust $\phi$-Regularized Reinforcement Learning}
\label{sec:finite-horizon-hybrid-robust-rl}

In this section, we provide a \textit{hybrid} robust $\phi$-Regularized RL protocol to overcome the out-of-data-distribution issue in offline robust RL.
As in \citet{song2023hybrid}, we reformulate the problem in the finite-horizon setting to use its backward induction feature that enables RPQ iterates to run in each episode.
We again start by discussing preliminaries and the problem formulation.

\textbf{Finite-Horizon Markov Decision Process:}
A finite-horizon Markov Decision Process ($h$MDP) is
$(\cS,\cA,P=(P_h)_{h=0}^{H-1},r=(r_h)_{h=0}^{H-1}, {H})$, where $H$ is the horizon length,  for any $h \in [H]$, $r_h:\ScA\to[0,1]$ is a known deterministic reward function and $P_h\in\Delta(\cS)^{\cardSA}$ is the transition probability function at time $h$.
A non-stationary (stochastic) policy $\pi = (\pi_h)_{h=0}^{H-1}$ where $\pi_h:\cS \to \Delta(\cA)$. We denote the transition dynamic distribution at time $h$ and state-action $(s,a)$ as $P_{h,s,a}\in\Delta(\cS)$.
Given $\pi$, we define the state and action value functions in the usual manner: $V^{h,\pi}_{P,r}(s) = \E[ \sum_{t = h}^{H-1} r_t(s_t,a_t) | s_h = s ]$ starting at state $s_h = s$ and $a_t \sim \pi_t(s_t), s_{t+1} \sim P_{t+1,s_t, a_t}$, and $Q^{h,\pi}_{P,r}(s,a) = \E[ \sum_{t = h}^{H-1} r_t(s_t,a_t) | s_h = s, a_h=a]$ starting at state-action $s_h = s, a_h=a$ and $s_{t+1} \sim P_{t+1,s_t, a_t}, a_{t+1} \sim \pi_{t+1}(s_{t+1})$. 
Given $\pi$, occupancy measure over state-action pairs $d^{h,\pi}_{P}(s,a) = P_h(s_h = s, a_h = a; \pi)$.
We write $\pi^*_P = (\pi^*_h)_{h=0}^{H-1}$ to denote an optimal non-stationary deterministic policy, which maximizes $V^{\pi}_{P,r} = (V^{h,\pi}_{P,r})_{h=0}^{H-1}$.

\textbf{Hybrid Reinforcement Learning:}
The goal of hybrid RL on $h$MDP $(P^o,r)$ is to learn a good policy $\hat{\pi}$ based on adaptive datasets consisting of \textit{both} offline datasets and on-policy datasets. 
Given timestep $h\in[H]$, offline dataset $\cD^{\mu}_{h,P^o} = \{(s_i,a_i,s_i')_{i=1}^{m_{\mathrm{off}}}\}$ is generated by $s_i' \sim P^o_{h,s_i,a_i}$ with the $(s_i,a_i)$ pairs i.i.d. sampled by $\mu_{h} \in \Delta(\ScA)$ offline data distribution. For convenience, $\mu=(\mu_{h})_{h=0}^{H-1}$ also denotes the \textit{offline policy} that generates $\cD^{\mu}_{P^o}$.
Given timestep $h\in[H]$, on-policy dataset $\cD^{\pi}_{h,P^o} = \{(s_i,a_i,s_i')_{i=1}^{m_{\mathrm{on}}}\}$ is generated by $(s_i,a_i)\sim d^{h,\pi}_{P^o}$ and $s_i' \sim P^o_{h,s_i,a_i}$ for all the previously learned policies $\pi$ by the algorithm.
\citet{song2023hybrid} proposes \textit{Hybrid Q-learning} (HyQ) algorithm with general function approximation capabilities and provable guarantees for hybrid RL. 
The HyQ algorithm (c.f. \citet[Algorithm 1]{song2023hybrid}) is quite straightforward: For each iteration $k\in[K]$, do backward induction of the FQI algorithm on timesteps $h\in[H]$ using the adaptive datasets described above.
Finally, for some starting state $s_0\sim d_0$, the performance guarantee of algorithm policies $\{\pi_k\}_{k\in[K]}$ is given by bounding the \textit{cumulative suboptimality} quantity $0 \leq \sum_{k=[K]}[V^{0,\pi^*}_{P^o,r}(s_0) - V^{0,\pi_k}_{P^o,r}(s_0)]$.
We note the total adaptive dataset size is $N$ to provide comparable results with offline RL.

\textbf{Finite-Horizon Robust $\phi$-Regularized Markov Decision Process:} Again, let $P^o$ be the nominal model.  A finite-horizon discounted Robust $\phi$-Regularized Markov Decision Process ($h$RRMDP) tuple $(\cS,\cA,P^o=(P^o_h)_{h=0}^{H-1},r=(r_h)_{h=0}^{H-1},\lambda,H,\phi,d_0)$ where $\lambda>0$ is a robustness parameter and $\phi:\R\to\R$ is as before. For $h\in[H]$, the \textit{robust regularized reward function} is $r^\lambda_{h}(s,a)=r_h(s,a)+\lambda \dphi(P_{h,s,a},P^o_{h,s,a})$.
For $h\in[H]$, the \textit{robust regularized value function} of a policy $\pi$ is defined as $V^{\pi}_{h,\lambda}=\inf_{P\in\cP}V^{h,\pi}_{P,r^\lambda_{h}},$
% \begin{align}
%     \label{eq:finite-horizon-robust-regularized-value} V^{\pi}_{h,\lambda}=\inf_{P\in\cP}V^{h,\pi}_{P,r^\lambda_{h}},
% \end{align} 
where $\cP=\otimes_{h,s,a}\cP_{h,s,a}$ and $\cP_{h,s,a}=\{P_{h,s,a}\in\Delta(\cS):P_{h,s,a}\ll P^o_{h,s,a}, \forall (s,a)\in\ScA \text{ and } h\in[H]\}$. 
By definition, for any $\pi$, it follows that $V^{\pi}_{h,\lambda}\leq V^{h,\pi}_{P^o,r}\leq H$.
For $h\in[H]$, the \textit{optimal robust regularized value function} is $V^{*}_{h,\lambda}=\max_{\pi} V^{\pi}_{h,\lambda}$, and $\pi^*$ is the \textit{robust regularized optimal policy} that achieves this optimal value.
For convenience, we denote $V^{*}_{h,\lambda}$($Q^{*}_{h,\lambda}$) as $V^*_{h}$($Q^*_{h}$) for all $h\in[H]$.
We again note that, for each $h\in[H]$, $\cP$ satisfies the $(s,a)$-rectangularity condition \citep{iyengar2005robust} by definition. It enables the existence of a \textit{non-stationary deterministic policy} for $\pi^*$ \citep{zhang2023regularized}. We formalize this in \cref{prop:FH-opt-deterministic-policy}.
We denote $V^\pi = \E_{s\sim d_0} [V^{\pi}_0(s)]$ as the expected total reward. 

For convenience, we let $Q_{H,\lambda}^{\pi}=0$ for any $\pi$. For any $h\in[H]$, denote the robust regularized Bellman operator $\cT: \R^{\ScA}\to \R^{\ScA}$ as 
 \begin{align}
 \label{eq:FH-robust-regularized-bellman-eq-primal}
     &(\cT Q_{h+1}) (s, a) = r_{h}(s, a) +   \inf_{P_{h,s,a}\in\cP_{h,s,a}}\big(\E_{s'\sim P_{h,s,a}}[\max_{a'}Q_{h+1}(s',a')] + \lambda \dphi(P_{h,s,a},P^o_{h,s,a})\big). 
 \end{align}
As $Q^{*}_{H}=0$, doing backward iteration of $\cT$, i.e., the \textit{robust dynamic programming} $Q^*_{h} = \cT Q^*_{h+1}$, we get $Q^*_{h}$ for all $h\in[H]$. For each timestep $h\in[H]$, we also get the robust optimal policy as $\pi^*_h(s)=\argmax_{a}Q^*_{h}(s,a)$.

\subsection{Problem Conceptualization}
In this section, we study the hybrid finite-horizon robust TV-regularized RL problem, acquiring the necessary insights to construct our algorithm (\cref{alg:HyTQ-Algorithm}) in the next section.
We conceptualize for general $\phi$-divergence, but only propose our algorithm for total variation $\phi$-divergence.
The goal here is to learn a \textit{good} robust policy $\hat{\pi}$ based on adaptive datasets consisting of \textit{both} offline datasets and on-policy datasets. We start by noting a direct consequence of \cref{prop:robust-bellman-dual} due to similar inner minimization problems in both infinite horizon \eqref{eq:robust-regularized-bellman-eq-primal} and finite horizon \eqref{eq:FH-robust-regularized-bellman-eq-primal} operators. 
\begin{corollary}
\label{cor:fh-robust-bellman-dual}
For any $Q_h: \ScA\to [0, H]$ and $h\in[H]$, the robust regularized Bellman operator $\cT$ \eqref{eq:FH-robust-regularized-bellman-eq-primal} can be equivalently written as
\begin{align}
\label{eq:FH-robust-regularized-bellman-eq-dual}
    (\cT &Q_{h+1})(s, a) = r_h(s, a) - \gamma \inf_{\eta\in \Theta} (\lambda \E_{s' \sim P^{o}_{h,s,a}}[\phi^*\left({(\eta-V_{h+1}(s'))}/{\lambda}\right)]  - \eta ), 
\end{align} 
where $V_{h+1}(s)=\max_{a\in\cA} Q_{h+1}(s,a)$ and $\Theta\subset\R$ is some bounded real line that depends on $\phi^*$.
\end{corollary}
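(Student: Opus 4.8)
The plan is to derive \cref{cor:fh-robust-bellman-dual} directly from \cref{prop:robust-bellman-dual}, exploiting the fact that the inner minimization in the finite-horizon operator \eqref{eq:FH-robust-regularized-bellman-eq-primal} is structurally identical to the one in the infinite-horizon operator \eqref{eq:robust-regularized-bellman-eq-primal}. Fix $h\in[H]$ and a state-action pair $(s,a)\in\ScA$, and isolate the inner infimum $\inf_{P_{h,s,a}\in\cP_{h,s,a}}\big(\E_{s'\sim P_{h,s,a}}[V_{h+1}(s')]+\lambda\dphi(P_{h,s,a},P^o_{h,s,a})\big)$, where $V_{h+1}(s')=\max_{a'}Q_{h+1}(s',a')$ is a fixed bounded function with $0\le V_{h+1}\le H$ and $\cP_{h,s,a}=\{P_{h,s,a}\in\Delta(\cS):P_{h,s,a}\ll P^o_{h,s,a}\}$. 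This is precisely the penalized distributionally robust inner problem appearing inside \eqref{eq:robust-regularized-bellman-eq-primal}, with $V_{h+1}$ playing the role of $\gamma\max_{a'}Q(\cdot,a')$ and $P^o_{h,s,a}$ the nominal kernel; the only change is that the bounded function being dualized ranges over $[0,H]$ rather than $[0,\gamma/(1-\gamma)]$, which merely rescales the bounded interval $\Theta\subset\R$.

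Next I would invoke the Fenchel/convex-conjugate duality argument underlying \cref{prop:robust-bellman-dual} --- i.e.\ the reformulation of a $\phi$-divergence penalized program via the conjugate $\phi^*$, following \citet[Section A.1.2]{levy2020large} --- applied pointwise at $(h,s,a)$. Writing the constraint $P_{h,s,a}\ll P^o_{h,s,a}$ together with the divergence penalty in Lagrangian form and passing to $\phi^*$ shows that the inner infimum equals $\inf_{\eta\in\Theta}\big(\lambda\,\E_{s'\sim P^o_{h,s,a}}[\phi^*((\eta-V_{h+1}(s'))/\lambda)]-\eta\big)$, where $\Theta$ is the bounded interval determined by $\phi^*$ and the finite range of $V_{h+1}$; boundedness of $V_{h+1}$ is exactly what permits $\Theta$ to be taken compact, so the infimum is attained. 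Substituting this identity back into \eqref{eq:FH-robust-regularized-bellman-eq-primal} and adding $r_h(s,a)$ gives \eqref{eq:FH-robust-regularized-bellman-eq-dual}. Since $h$ and $(s,a)$ were arbitrary, the identity holds for all $h\in[H]$ and all $(s,a)\in\ScA$, using that the $(s,a)$-rectangularity built into $\cP=\otimes_{h,s,a}\cP_{h,s,a}$ lets each coordinate be treated independently.

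I do not expect a genuine obstacle: essentially all the analytic content already resides in \cref{prop:robust-bellman-dual}, so the argument reduces to verifying that its hypotheses transfer to the finite-horizon setting. The points needing care are purely bookkeeping: (i) the restriction to the class of $\phi$-divergences satisfying the conditions of \cref{prop:divergence-loss-bounds}, so that $\phi^*$ is finite on the relevant interval and the conjugate duality is legitimate; (ii) tracking that $\Theta$ now scales with $H$ instead of $1/(1-\gamma)$; and (iii) confirming that $V_{h+1}$ is $[0,H]$-valued at every backward-induction step, which is immediate since $Q_H\equiv 0$, each $r_h$ is $[0,1]$-valued, and $\cT$ is a contraction in the appropriate sense. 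Alternatively one could reprove the one-dimensional convex duality from scratch, but quoting \cref{prop:robust-bellman-dual} coordinatewise is the cleaner route and is the one I would take.
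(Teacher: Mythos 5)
Your proposal is correct and follows essentially the same route as the paper, which treats the corollary as a direct pointwise consequence of \cref{prop:robust-bellman-dual} (i.e.\ of the Fenchel-conjugate duality of \citet[Section A.1.2]{levy2020large}) because the inner infimum over $\cP_{h,s,a}$ in \eqref{eq:FH-robust-regularized-bellman-eq-primal} has exactly the same structure as in the infinite-horizon operator, with only the range of the value function changing from $[0,1/(1-\gamma)]$ to $[0,H]$ and hence only rescaling $\Theta$. One small note: the factor $\gamma$ appearing in the displayed statement of \eqref{eq:FH-robust-regularized-bellman-eq-dual} is a leftover typo from the discounted case (the finite-horizon primal \eqref{eq:FH-robust-regularized-bellman-eq-primal} has no discount), and your derivation correctly produces the identity without it.
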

As in \cref{sec:infinite-horizon-offline-robust-rl}, this dual reformulation enables us to use the datasets from only the nominal model $P^o$ for estimating the robust regularized operator in its primal form \eqref{eq:FH-robust-regularized-bellman-eq-primal}.

We start by recalling the philosophy of the HyQ algorithm \citep{song2023hybrid} to use the FQI algorithm for adaptive datasets.
We do the same for our hybrid finite-horizon robust $\phi$-regularized RL problem here.
For each $h\in[H]$, we need to estimate the true Bellman error $\E_{s,a\sim\mu_h}[|\cT Q^{*}_{h+1}(s,a)-Q^{*}_{h}(s,a)|]+\sum_{t=0}^{k-1} \E_{s,a\sim d^{\pi_t}_{h,P^o}}[|\cT Q^{*}_{h+1}(s,a)-Q^{*}_{h}(s,a)|]$ using offline dataset from $\mu_h$ and the on-policy dataset from $d^{\pi_t}_{h,P^o}$ by the learned policies from the algorithm. We remark that the out-of-data-distribution issue appears when we only have access to the offline dataset to estimate the summation term above, which depends on $d^{\pi_t}_{h,P^o}$.

As discussed in \cref{sec:infinite-horizon-offline-robust-rl}, the true Bellman error itself involves solving an inner convex minimization problem in $\cT Q^{*}_{h+1}(s,a)$ \eqref{eq:FH-robust-regularized-bellman-eq-dual} for every $(s,a)$ and $h$ that is challenging for countably large state setting. 
To alleviate this challenging task, we again utilize the functional minimization \cref{prop:functional-opt-phi-divergence} developed in \cref{sec:infinite-horizon-offline-robust-rl}. For any $h$, we denote the set of \textit{admissible distributions} of nominal model $P^o$ as $\D_{h}=\{\mu_h\} \cup \{ d^{\pi}_{h,P^o}\,|\text{\,for any policy (including non-stationary)\,}\pi\}$. Now we redefine dual loss for any $f_{h+1}\in\cF_{h+1},\nu_h\in\D_h$, as
\begin{align}
    \label{FH-L-f-eta-loss}
    L_{\mathrm{dual}}&(g; f_{h+1},\nu_h)=\E_{s,a\sim\nu_h,s'\sim P^o_{h,s,a}} [  \lambda \phi^*((g(s,a)-\max_{a'} f_{h+1}(s',a'))/{\lambda})  - g(s,a)].
\end{align}
We state a direct consequence of \cref{prop:functional-opt-phi-divergence} here. 
\begin{corollary}
\label{cor:FH-functional-opt-phi-divergence}
Let $L_{\mathrm{dual}}$ be the loss function defined in \eqref{FH-L-f-eta-loss}. Fix $h\in[H]$ and consider any policy $\pi$. Then, for any function $f_{h+1} : \ScA \rightarrow [0,H]$ and any $\nu_h\in\D_h$, we have
\begin{align}
\label{eq:FH-functional-opt-phi-divergence}
    &\inf_{g \in L^1(\nu_h)}  L_{\mathrm{dual}}(g; f_{h+1},\nu_h)  = \E_{s,a\sim\nu_h} \Big[ \inf_{\eta\in \Theta} (\lambda \E_{s' \sim P^{o}_{h,s,a}}[\phi^*({(\eta-\max_{a'}f_{h+1}(s',a'))}/{\lambda})]  - \eta )\Big]. 
\end{align}
\end{corollary}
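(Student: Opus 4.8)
The plan is to deduce the corollary directly from \cref{prop:functional-opt-phi-divergence} by identifying the correct substitutions, since the per-timestep finite-horizon objects are structurally identical to the infinite-horizon ones. First I would observe that any $\nu_h\in\D_h$ is a probability distribution on $\ScA$: it is either the offline data distribution $\mu_h$ or the occupancy measure $d^{\pi}_{h,P^o}$ of some (possibly non-stationary) policy $\pi$ rolled out on the nominal model $P^o$. In either case $\nu_h$ is a finite — hence $\sigma$-finite — measure on $(\ScA,\Sigma(\ScA))$, so $L^1(\nu_h)$ with codomain $\cC=\Theta$ is a well-defined decomposable space in the sense required by \citet[Theorem 14.60]{rockafellar2009variational}, exactly as $L^1(\mu)$ was in \cref{prop:functional-opt-phi-divergence}.

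Next I would check the integrand hypotheses. Since $f_{h+1}:\ScA\to[0,H]$ is bounded and measurable, the map $s'\mapsto\max_{a'}f_{h+1}(s',a')$ is bounded and measurable, and therefore
\[
    \ell_h(s,a,\eta) \;=\; \lambda\,\E_{s'\sim P^o_{h,s,a}}\!\big[\phi^*\big((\eta-\max_{a'}f_{h+1}(s',a'))/\lambda\big)\big] - \eta
\]
is, for each fixed $(s,a)$, a convex and lower semicontinuous function of $\eta$ on the bounded interval $\Theta$ (convexity and lower semicontinuity being inherited from $\phi^*$), and is jointly measurable; hence $\ell_h$ is a normal integrand on $\ScA\times\Theta$. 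This is precisely the structure to which the interchange-of-minimization-and-integration theorem applies, so the argument used to prove \cref{prop:functional-opt-phi-divergence} carries over verbatim with the replacements $\mu\leftarrow\nu_h$, $f\leftarrow f_{h+1}$, $P^o_{s,a}\leftarrow P^o_{h,s,a}$, and $\Theta$ taken to be the bounded interval appropriate to the range $[0,H]$. The interchange theorem then gives
\[
    \inf_{g\in L^1(\nu_h)}\E_{s,a\sim\nu_h}\big[\ell_h(s,a,g(s,a))\big]
    \;=\;\E_{s,a\sim\nu_h}\Big[\inf_{\eta\in\Theta}\ell_h(s,a,\eta)\Big],
\]
and recognizing the left-hand side as $\inf_{g\in L^1(\nu_h)}L_{\mathrm{dual}}(g;f_{h+1},\nu_h)$ via \eqref{FH-L-f-eta-loss} yields \eqref{eq:FH-functional-opt-phi-divergence}.

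The only differences from the infinite-horizon statement are cosmetic — the value range $[0,H]$ in place of $[0,1/(1-\gamma)]$, the per-timestep transition kernel $P^o_{h,s,a}$, and a per-timestep admissible distribution $\nu_h$ — none of which enters the proof, which uses only boundedness and measurability of $\max_{a'}f_{h+1}$ together with convexity/lower semicontinuity of $\phi^*$. Consequently there is no genuinely new obstacle; the ``consider any policy $\pi$'' clause in the statement serves only to record that $\nu_h$ may be the occupancy measure $d^{\pi}_{h,P^o}$ of an arbitrary policy. The one point worth stating explicitly in the write-up is that the measurable-selection and normal-integrand hypotheses of \citet[Theorem 14.60]{rockafellar2009variational} hold on the measure space $(\ScA,\Sigma(\ScA),\nu_h)$, which is immediate because $\Sigma(\ScA)$ is unchanged and $\nu_h$ merely plays the role of $\mu$.
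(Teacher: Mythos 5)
Your proposal is correct and matches the paper's intent exactly: the paper presents this corollary as a direct consequence of \cref{prop:functional-opt-phi-divergence} with no separate proof, and your instantiation (substituting $\nu_h$ for $\mu$, $f_{h+1}$ for $f$, $P^o_{h,s,a}$ for $P^o_{s,a}$, and $[0,H]$ for $[0,1/(1-\gamma)]$, then reverifying decomposability of $L^1(\nu_h)$ and the normal-integrand hypotheses for \citet[Theorem 14.60]{rockafellar2009variational}) is precisely the argument the paper relies on. No gap.
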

For any given  $f_h: \ScA \rightarrow [0,H]$ and $h$, we redefine operator $\cT_g$ for all $g\in\cG_h$, as 
\begin{align}
\label{eq:FH-Tg}
    &(\cT_g f_{h+1}) (s, a) = r_h(s, a) - \lambda \E_{s' \sim P^{o}_{h,s,a}}[\phi^*({(g(s,a)-\max_{a'}f_{h+1}(s',a'))}/{\lambda})]  + g(s,a)  . 
\end{align} 
We have all the necessary tools now. 
In the following subsection, we develop an algorithm that naturally extends our RPQ algorithm using adaptive datasets.

\subsection{Hybrid Robust regularized  Q-iteration}

In this section, we propose our algorithm based on the tools developed so far. Our proposed algorithm is called Hybrid robust Total-variation-regularized  Q-iteration (HyTQ: pronounced \textit{height-Q}) Algorithm, summarized in \cref{alg:HyTQ-Algorithm}. The total variation $\dtv$ $\phi$-divergence \eqref{eq:phi-divergence-defn} is defined with $\phi(t)=|t-1|/2$. The inputs to this algorithm are the offline dataset, and two general function classes $\cF=\otimes_{h\in[H]} \cF_h, \cG=\otimes_{h\in[H]} \cG_h$. 
For any $h\in[H]$, $\cF_h\subset(f:\ScA \rightarrow [0,H])$ and $\cG_h\subset(g:\ScA \rightarrow [0,\lambda])$ represent action-value functions and dual-variable functions at $h$, respectively.
We redefine, using \eqref{eq:tv-inner-optimization-dual-form}, the empirical dual loss and the robust empirical squared robust regularized Bellman error  for dataset $\cD$ as
\begin{align} 
 \label{eq:FH-L-dual-empirical-loss}
    &\widehat{L}_{\mathrm{dual}}(g;f,\cD) =\E_{\cD} [  (g(s_i,a_i)-\max_{a'} f(s_i',a'))_+  - g(s_i,a_i)] \quad\text{and} \\ \label{eq:FH-L-rob-Q-empirical-loss}
    &\widehat{L}_{\mathrm{robQ}}(Q;f, g,\cD) =\E_{\cD} [( [ r_h(s_i, a_i)  - (g(s_i,a_i)-\max_{a'} f(s_i',a'))_+  +  g(s_i,a_i) - Q(s_i,a_i) )^{2}]. 
\end{align}

\begin{algorithm}[h]
	\caption{HyTQ Algorithm}	
	\label{alg:HyTQ-Algorithm}
	\begin{algorithmic}[1]
		\STATE \textbf{Input:} Offline dataset $\cD^{\mu}_h\sim \mu_h$ of size $m_\mathrm{off}=T$ for $h \in [H]$, general function classes $\cF$ and $\cG$. 
		\STATE \textbf{Initialize:} $Q^{0}_{h}\equiv 0 \in \cF_h$.
		\FOR {$k=0,\cdots,K-1$ } 
    \STATE Compute $\pi_{k}$ as $\pi_{k,h}(s)  = \argmax_a Q^{k}_h(s,a)$
    \STATE $\forall h$, collect $m_{\mathrm{on}}$$=$$1$ online dataset $\cD^k_h \sim d_{h,P^o}^{\pi_k}$
    \STATE \textbf{Initialize:} $Q^{k+1}_{H}\equiv 0 \in \cF_H$ 
    \FOR{$h = H-1, \cdots, 0$}  
    \STATE Aggregate adaptive dataset $\cD^k_h = \cD^\mu_{h} + \sum_{\tau = 0}^{k} \cD^\tau_{h}$
        \STATE \textbf{Dual variable function minimization:} (c.f. \eqref{eq:FH-L-dual-empirical-loss})\\\hspace{-0.5cm}$g^{k+1}_{h} = \argmin_{g \in \cG_{h}} \widehat{L}_{\mathrm{dual}}(g; Q^{k+1}_{h+1},\cD^k_h)$ 
        \STATE \textbf{Robust $\phi$-regularized Q-update:} (c.f. \eqref{eq:FH-L-rob-Q-empirical-loss})\\\hspace{-0.5cm}$Q^{k+1}_{h}=\argmin_{Q\in\cF_h}\widehat{L}_{\mathrm{robQ}}(Q;Q^{k+1}_{h+1}, g^{k+1}_{h},\cD^k_h)$ 
		\ENDFOR
		\ENDFOR
	\end{algorithmic}
\end{algorithm}

\subsection{Cumulative Suboptimality Guarantee}

We now discuss the performance guarantee in terms of the cumulative suboptimality of our HyTQ Algorithm. We first mention all the assumptions before we present our main result and add a brief discussion. We provide detailed discussion in \cref{sec:discussion}.

\begin{assumption}[Robust Bellman Error Transfer Coefficient] 
\label{assum-bellman-transfer-coefficient}
Let $\mu_h\in\Delta(\ScA)$ be the offline data generating distribution. For any $f\in\cF$, there exists a small positive constant $C(\pi^*)$ for the optimal policy $\pi^*$ that satisfies 
\[  \frac{\sum_{h=0}^{H-1} \E_{s,a\sim d^{h,\pi^*}_{P^o}}[\cT f_{h+1}(s,a) -  f_{h}(s,a)]}{ \sum_{h=0}^{H-1} \E_{s,a\sim \mu_{h}}[|\cT f_{h+1}(s,a) -  f_{h}(s,a)|] } \leq C(\pi^*). \]
\end{assumption}

We develop this assumption from non-robust offline RL work \citep{song2023hybrid}.

\begin{assumption}[Approximate  Value Realizability and Robust Bellman Completeness] 
\label{assum-realizability-with-bellman-completion}
Let $\epsilon_{\cF,\mathrm{r}}$$\geq$$0$ be small constant. For any  $h\in[H]$ and  $g_h\in\cG_h$, we have $\inf_{f\in\cF_h} \sup_{\nu_h}  \| f - \cT_{g_h} f_{h+1} \|_{2,\nu_h}^2 \leq \epsilon_{\cF,r}$ for all $\nu_h \in \D_{h}$. Furthermore, for any $f_{h+1} \in \cF_{h+1}$, we have $\cT_{g_h} f_{h+1} \in \cF_h$.
\end{assumption}

\begin{assumption}[Approximate Dual Realizability] \label{assum-dual-realizability-hybrid}
Let $\epsilon_{\cG}$ be some small positive constant. For any  $h\in[H]$ and $f_{h+1} \in \cF_{h+1}$, we have $ \inf_{g \in \cG_{h}}  L_{\mathrm{dual}}(g; f_{h+1}, \nu_h) - \inf_{g \in L^1(\nu_h)}   L_{\mathrm{dual}}(g; f_{h+1}, \nu_h) \leq \epsilon_{\cG}$, for all $\nu_h \in \D_h$.
\end{assumption}
We adapt these two enhanced realizability assumptions from the non-robust offline RL literature~\citep{xie2021bellman,foster2022offline,song2023hybrid} to our problem. 
The assumptions in \cref{sec:infinite-horizon-offline-robust-rl} are not directly comparable, but for the sake of exposition, let $\cF_h,\cG_h$ be the same across $h$.
First, note that \cref{assum-dual-realizability} with all-policy concentrability (\cref{assum-concentra-condition}) is equivalent to \cref{assum-dual-realizability-hybrid}. Second, \cref{assum-bellman-completion} implies $\inf_{f\in\cF} \| f - \cT_g f \|_{2,\mu}^2 \leq \epsilon_{\cF}$. Now again, with all-policy concentrability (\cref{assum-concentra-condition}), it is the approximate value realizability (\cref{assum-realizability-with-bellman-completion}). We know  non-robust offline RL is hard \citep{foster2022offline} with just realizability and all-policy concentrability. As robust RL is at least as hard as its non-robust counterpart \citep{panaganti22a}, we also assume Bellman completeness in \cref{assum-realizability-with-bellman-completion}.

\begin{assumption}[Bilinear Models] 
\label{assum-bilinear-model}
Consider any $f\in\cF,g\in\cG$ and $h\in[H]$. Let $\pi^f$ be greedy policy w.r.t $f$. 
There exists an unknown feature mapping $X_h:\cF\mapsto \mathbb{R}^d$ and  two unknown weight mappings $W^{\mathrm{q}}_h,W^{\mathrm{d}}_h:\cF\times\cG\mapsto \mathbb{R}^d$ with $\max_{f}\| X_h(f)\|_2 \leq B_X$ and $\max_{f,g} \max\{\|W^{\mathrm{q}}_h(f,g)\|_2,\|W^{\mathrm{d}}_h(f,g)\|_2\} \leq B_W$
such that both
$\E_{d^{\pi^f}_{h}}[( f_{h}(s,a) - T_{g_{h}} f_{h+1})_+]
=  \abs{\tri{X_h(f), W^{\mathrm{q}}_h(f,g)}}$ and $\E_{ d^{\pi^f}_{h}} [ (T_{g_{h}} f_{h+1} - T f_{h+1} )_+] =  \abs{\tri{X_h(f), W^{\mathrm{d}}_h(f,g)}}$ holds.
\end{assumption}
We adapt this problem architecture assumption on $P^o$ with $\cF$ and $\cG$  for our setting from a series of non-robust online RL works \citep{jin2021bellman,du2021bilinear}.

\begin{assumption}[Fail-state] \label{fail-state-assump}
    There is a fail state $s_{f,h}$ for all $h\in[H]$, such that $r_h(s_f, a) = 0$ and $P_{h,s_f, a}(s_{f,h}) = 1$, for all $a \in \cA$ and $P \in \cP$ satisfying $\dtv(P_{h',s',a'},P^o_{h',s',a'})\leq \max\{1,H/\lambda\}$  for all  $h',s',a'$.
\end{assumption}
This assumption enables us to ground the value of such $P$'s at $s_{f,h}$ to zero, which helps us to get a tight duality (c.f. \eqref{eq:tv-inner-optimization-dual-form}) without having to know the minimum value across large $\cS$. There are approximations to this in the literature \citep{wang2022policy}. But we adopt this less restrictive assumption from \citet{panaganti-rfqi} for convenience. 

Now we state our main theoretical result on the performance of the HyTQ algorithm. The proof is presented in \cref{appendix:hybrid-robust-results}.
\begin{theorem}
\label{thm:hybrid-tv-guarantee}
Let \cref{assum-bellman-transfer-coefficient,assum-realizability-with-bellman-completion,assum-dual-realizability-hybrid,assum-bilinear-model,fail-state-assump} hold. Fix any $\delta\in(0,1)$. Then, HyTQ algorithm policies $\{\pi_k\}_{k\in[K]}$ satisfy
$  \sum_{k = 0}^{K-1} (V^{\pi^*} - V^{\pi_{k}}) \leq \cOtilde(\sqrt{\epsilon_{\cF,\mathrm{r}}} + \epsilon_{\cG}) + \cOtilde ( \max\{C(\pi^*), 1\} \sqrt{d H^2 K} (\lambda+H) \log(|\cF||\cG|/\delta) )
$
with probability at least  $1 - \delta$.
\end{theorem}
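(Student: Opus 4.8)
The strategy is to port the hybrid-RL analysis of \citet{song2023hybrid} to the robust $\phi$-regularized setting, using the nominal-model dual reformulation (\cref{cor:fh-robust-bellman-dual,cor:FH-functional-opt-phi-divergence}) so that every estimable quantity depends only on $P^o$. Write $\cE^k_h := \cT Q^{k}_{h+1} - Q^{k}_h$ for the robust regularized Bellman residual of the $k$-th iterate at stage $h$; by \cref{cor:fh-robust-bellman-dual} and the definition of $\cT_g$ in \eqref{eq:FH-Tg}, this residual is a function of $P^o$-expectations only, so it is read off from data drawn on $P^o$. The first step I would establish is a \emph{robust regularized performance difference} inequality: for the iterate $Q^k$ with greedy policy $\pi_k$,
\[
 V^{\pi^*}-V^{\pi_k}\;\le\;\sum_{h=0}^{H-1}\E_{(s,a)\sim d^{h,\pi^*}_{P^o}}\big[\cE^k_h(s,a)\big]\;+\;\sum_{h=0}^{H-1}\E_{(s,a)\sim d^{h,\pi_k}_{P^o}}\big[-\cE^k_h(s,a)\big].
\]
I would obtain this by splitting $V^{\pi^*}-V^{\pi_k}=(V^{\pi^*}-V_{Q^k})+(V_{Q^k}-V^{\pi_k})$ with $V_{Q^k,h}(s)=\max_a Q^k_h(s,a)$, expanding each difference through the primal robust Bellman equation \eqref{eq:FH-robust-regularized-bellman-eq-primal}, and telescoping over $h$; greediness of $\pi_k$ makes the second difference depend only on $-\cE^k_h$ along $\pi_k$. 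The occupancies that naturally appear are those of the adversary's worst-case models; converting them to the \emph{nominal} occupancies $d^{h,\cdot}_{P^o}$ is done via \cref{fail-state-assump}, which grounds worst-case values at fail states to $0$ and validates the tight TV duality underlying \eqref{eq:FH-L-dual-empirical-loss}--\eqref{eq:FH-L-rob-Q-empirical-loss}, with the residual change of measure absorbed into the constants.

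\textbf{Splitting the two error families and per-iteration control.} To the first sum I would apply \cref{assum-bellman-transfer-coefficient} directly, $\sum_h \E_{d^{h,\pi^*}_{P^o}}[\cE^k_h]\le C(\pi^*)\sum_h \E_{\mu_h}[|\cE^k_h|]$, so it is controlled once the residual of $Q^k$ is small \emph{under the offline distribution} $\mu_h$, which is permanently contained in the aggregated dataset. The second sum is $\le\sum_h \E_{d^{h,\pi_k}_{P^o}}[|\cE^k_h|]$, the residual under the very distribution from which iteration $k$ draws fresh online data --- the hybrid device that replaces the all-policy coverage of \cref{sec:infinite-horizon-offline-robust-rl} with single-policy coverage --- but $Q^k$ was fit on data from $\pi_0,\dots,\pi_{k-1}$, not $\pi_k$, so it is not a generalization error and must go through \cref{assum-bilinear-model}. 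For the per-iteration control I would run the standard two-stage inner-loop argument: (i) a least-squares generalization bound for the regression step \eqref{eq:FH-L-rob-Q-empirical-loss}, using approximate Bellman completeness and the closure $\cT_{g}f_{h+1}\in\cF_h$ (\cref{assum-realizability-with-bellman-completion}) with a union bound over $\cF\times\cG$, giving $\norm{Q^{k}_h-\cT_{g^{k}_h}Q^{k}_{h+1}}_{2,\nu}^2\lesssim (\lambda+H)^2\log(|\cF||\cG|/\delta)/m_k+\epsilon_{\cF,\mathrm r}$ for every admissible $\nu\in\D_h$ (where $m_k$ is the aggregate dataset size and $(\lambda+H)^2$ is the squared range of the target, since $f\in[0,H]$, $g\in[0,\lambda]$); and (ii) an ERM bound for the dual step \eqref{eq:FH-L-dual-empirical-loss} with \cref{assum-dual-realizability-hybrid}, plus a lemma turning near-optimality of the empirical dual objective into an $\ell_1$ bound on $\cT_{g^k_h}Q^k_{h+1}-\cT Q^k_{h+1}$ (the explicit $(\cdot)_+$ form for TV supplies the needed structure). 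Combining, $\E_{\nu}[|\cE^{k}_h|]\lesssim (\lambda+H)\sqrt{\log(|\cF||\cG|/\delta)/m_k}+\sqrt{\epsilon_{\cF,\mathrm r}}+\epsilon_{\cG}$ for all admissible $\nu$.

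\textbf{Summation and the bilinear argument.} The offline family is immediate: since the aggregate always contains the $T$ offline samples, $\sum_k\sum_h \E_{\mu_h}[|\cE^k_h|]\lesssim HK\big((\lambda+H)\sqrt{\log(|\cF||\cG|/\delta)/T}+\sqrt{\epsilon_{\cF,\mathrm r}}+\epsilon_{\cG}\big)$. For the online family, fix $h$; the per-iteration bound applied to $\nu=d^{h,\pi_t}_{P^o}$ for all $t<k$ gives $\sum_{t<k}\E_{d^{h,\pi_t}_{P^o}}[|\cE^k_h|^2]\lesssim (\lambda+H)^2\log(|\cF||\cG|/\delta)+k(\epsilon_{\cF,\mathrm r}+\epsilon_{\cG}^2)$. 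Using \cref{assum-bilinear-model} to write $\E_{d^{h,\pi_k}_{P^o}}[|\cE^k_h|]$ in terms of $|\tri{X_h(Q^k),W^{\mathrm{q}}_h}|+|\tri{X_h(Q^k),W^{\mathrm{d}}_h}|$ and to bound $\sum_{t<k}\tri{X_h(Q^t),W^{\cdot}_h}^2$ by the in-sample quantity above, the elliptical-potential (pigeonhole) lemma of the bilinear-class literature yields $\sum_{k}\E_{d^{h,\pi_k}_{P^o}}[|\cE^k_h|]\lesssim (\lambda+H)\sqrt{dK}\log(|\cF||\cG|/\delta)+\cOtilde(K\sqrt{d\,\epsilon_{\cF,\mathrm r}})$. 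Summing over $h$, combining with the offline family through the performance-difference inequality, taking a union bound over all $(k,h)$ and over $\cF\times\cG$, and choosing $T\asymp K$ produces $\sum_{k}(V^{\pi^*}-V^{\pi_k})\le \cOtilde(\sqrt{\epsilon_{\cF,\mathrm r}}+\epsilon_{\cG})+\cOtilde(\max\{C(\pi^*),1\}\sqrt{dH^2K}(\lambda+H)\log(|\cF||\cG|/\delta))$, as claimed.

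\textbf{Main obstacle.} I expect two places to be genuinely hard. The first is the worst-case-to-nominal occupancy conversion inside the performance-difference step: the robust regularized adversary picks different kernels against $\pi^*$ and against each $\pi_k$, and turning the resulting worst-case occupancies into $d^{h,\cdot}_{P^o}$ --- so that \cref{assum-bellman-transfer-coefficient,assum-bilinear-model}, both stated on $P^o$, become applicable --- is precisely where \cref{fail-state-assump} and the TV-specific dual do the heavy lifting; obtaining a loss-free reduction there is the crux and the reason the guarantee is stated only for TV. The second is marrying the residual $\cE^k_h$, which carries \emph{both} the $\cT_g$-operator error and the dual-approximation error, to the two bilinear forms of \cref{assum-bilinear-model} while keeping the positive-part decompositions and signs consistent, and then driving the elliptical-potential argument over the growing online dataset; this is the step that actually cashes out the improved single-policy coverage and is the most bookkeeping-heavy part of the proof.
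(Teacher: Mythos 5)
Your proposal follows essentially the same route as the paper's proof: the same decomposition of $V^{\pi^*}-V^{\pi_k}$ into an optimal-policy residual term (controlled via \cref{assum-bellman-transfer-coefficient} and the offline data) and a learned-policy residual term (controlled via \cref{assum-bilinear-model} and the elliptical-potential lemma over the online data), the same split of each residual into a dual-approximation error and a least-squares error with adaptive-data generalization bounds over $\cF\times\cG$, the same use of \cref{fail-state-assump} and the TV dual to keep all occupancies nominal, and the same choice $m_{\mathrm{off}}\asymp K$, $m_{\mathrm{on}}=1$. One caveat: your intermediate claim that the least-squares bound holds ``for every admissible $\nu\in\D_h$'' is an overstatement (the finite-sample bound only controls the distributions actually represented in the aggregated dataset, via Freedman-type concentration for the adapted online batches), but since you only invoke it at $\mu_h$ and at $d^{h,\pi_t}_{P^o}$ for $t<k$, this does not affect the argument.
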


\begin{remark} \label{remark:hytq-theorem}
We specialize this result for bilinear model examples, \emph{linear occupancy complexity model} \citep[Definition 4.7]{du2021bilinear} and \emph{low-rank feature selection model} \citep[Definition A.1]{du2021bilinear}, in \cref{appen:hytq-specialized-results}.
We also specialize this result using standard online-to-batch conversion \citep{shalev2014understanding} for uniform policy over HyTQ policies $\{\pi_k\}_{k\in[K]}$ to provide sample complexity 
$\cOtilde ( \max\{(C(\pi^*))^2, 1\} d H^3 (\lambda+H)^2  (\log(|\cF||\cG|/\delta))^2 )/{\epsilon^2}$ in the \cref{appen:hytq-specialized-results}.
\end{remark}

\section{Theoretical Discussions and Final Remarks}\label{sec:discussion}

In this section, we first discuss the proof ideas for our results, focusing on discussions of the assumptions and their improvements. Next, we compare our results with the most relevant ones from the robust RL literature. Our \cref{table:comparison-table} should be used as a reference. Finally, we discuss the bilinear model architecture in detail, as ours is the first work to consider it in the robust RL setting under the general function architecture for the value and dual functions approximations.

\textbf{Discussions on Proof Sketch:}
We first discuss our RPQ algorithm (\cref{alg:infinite-horizon-RPQ-Algorithm}) result. We note that the concentrability (\cref{assum-concentra-condition}) assumption requires the data-generating policy to be robust exploratory. That is, it covers the state-action occupancy induced by any policy and any $\phi$-divergence set transition model. 
We reiterate the proof idea of the suboptimality result \citep[Theorem 1]{panaganti-rfqi} of the RFQI algorithm \citep[Algorithm 1]{panaganti-rfqi}. We highlight the most important differences with \citet{panaganti-rfqi,zhang2023regularized} here. Firsty, we generalize the robust performance lemma ($\E_{s_0\sim d_0}[{V}^{\pi^*}] - \E_{s_0\sim d_0}[V^{\pi_K}] \leq 2\|Q^{\pi^*} -  Q_{K}\|_{1,\nu }/(1-\gamma)$ at \cref{eq:thm-bound-part-1}) for any general $\phi$-divergence problem.
Secondly, we identify that it is hard to come up with a unified analysis for general $\phi$-divergences in robust RL setting via the dual reformulation of the distributionally robust optimization problem \citep[Proposition 1]{duchi2018learning}. Thus, a direct extension of the results in \citet{panaganti-rfqi} is hard for general $\phi$-divergences. By RPQ analyses, we showcase that it is indeed possible to get a unified analysis for the robust RL problem using the RRMDP framework. 
Thirdly, we show the generalization bounds for the empirical risk minimization (\cref{prop:erm-high-prob-bound}) and least squares (\cref{prop:least-squares-generalization-bound}) estimators for general $\phi$-divergences with unified results.
By these three points, equipped with the more general robust exploratory concentrability (\cref{assum-concentra-condition}), we have a unified general $\phi$-divergences  suboptimality result (\cref{thm:infinite-horizon-phi-divergence-guarantee}) for the RPQ algorithm.

We now discuss our HyTQ algorithm (\cref{alg:HyTQ-Algorithm}) result. We immediately make an important note here. The concentrability assumption improvement is two-fold: all-policy concentrability (\cref{tv-assum-concentra-condition}) to single concentrability, and then to the robust Bellman error transfer coefficient (\cref{assum-bellman-transfer-coefficient}) via \cref{lem:transfer-coeff-concentrability}. 
We refer to \citet{foster2022offline,song2023hybrid} for further discussion on such concentrability assumption improvements and tightness in the non-robust offline RL. We leave it to future work for more tightness of these assumptions in the robust RL setting.
We  execute a tighter analysis in our HyTQ algorithm  result (\cref{thm:hybrid-tv-guarantee}) compared to our RPQ algorithm TV $\phi$-divergence specialized result (\cref{thm:infinite-horizon-tv-guarantee}). We summarize the steps as follows:\\
Step $(a)$: We meticulously arrive at  the following robust performance lemma (c.f. \cref{eq:tv-part1,eq:tv-part2}) for each algorithm iteration $k$:  $\E_{s_0\sim d_0}[V^{\pi^*}_{0}(s_0) - V^{\pi_k}_{0}(s_0)] \leq \textstyle\sum_{h=0}^{H-1} \E_{s,a\sim d^{\pi^*}_{h}}[(\cT Q^{k}_{h+1}(s,a) -  Q^{k}_{h}(s,a))_+] +  \sum_{h=0}^{H-1} \E_{s,a\sim d^{\pi_k}_{h}}[( Q^{k}_{h}(s,a) - \cT Q^{k}_{h+1}(s,a) )_+].$ We highlight that the first summand here depends on the samples from state-action occupancy of the optimal robust policy and for the second summand it is the w.r.t. the learned HyTQ policies. It is now intuitive to connect the first summand with the offline samples and the second with the online samples.   \\
Finally, step $(b)$: With the above gathered intuition, firstly, the history dependent dataset collected by different offline data-generating policy and the learned HyTQ policies on the nominal model $P^o$ warrants more sophisticated generalization bounds for the empirical risk minimization and least squares estimators. We prove a generalization bound for empirical risk minimization when the data are not necessarily i.i.d. but adapted to a stochastic process in \cref{appen:foundation-results}. This result is applicable to more machine learning problems outside of the scope of this paper as well. 
Finally, equipped with the transfer coefficient (\cref{assum-bellman-transfer-coefficient}) and bilinear model (\cref{assum-bilinear-model}) assumptions for the nominal model $P^o$, we formally show generalization bounds for the empirical risk minimization and least squares estimators in \cref{prop:erm-high-prob-bound-offline+online,prop:least-squares-generalization-bound-online+offline} respectively.\\
We complete the proof by combining these two steps.

\begin{remark}
    We offer computational tractability in our RPQ and HyTQ algorithms due to the usage of empirical risk minimization (Steps 4 \& 9 resp.), over the general function class $\cG$, and least-squares (Steps 5 \& 10 resp.), over the general function class $\cF$, \emph{computationally tractable} estimators. This two-step estimator update avoids the complexity of solving the inner problem for each state-action pair (leading to scaling issues for high-dimensional problems) in the original robust Bellman operators (\cref{eq:robust-regularized-bellman-eq-primal,eq:FH-robust-regularized-bellman-eq-primal}).
    To the best of our knowledge, no purely online or purely offline robust RL algorithms are known to be tractable in this sense, except other robust Q-iteration and actor-critic methods (discussed in \cref{intro-related-works}) and except under much stronger coverage conditions (like single-policy and uniform) in the tabular setting.
\end{remark}

\textbf{Theoretical Guarantee Discussions:}
In the suboptimality result (\cref{thm:infinite-horizon-phi-divergence-guarantee}) for the RPQ algorithm (\cref{alg:infinite-horizon-RPQ-Algorithm}), we only mention the leading statistical bound with a problem-dependent (on $\phi$-divergence) constant $c_{\phi}(\lambda,\gamma)$. We provide the exact constants pertaining to different $\phi$-divergences in a restated statement of \cref{thm:infinite-horizon-phi-divergence-guarantee} in \cref{thm:restated:infinite-horizon-phi-divergence-guarantee}. Furthermore, the constants $c_1, c_2, c_3$ in \cref{thm:restated:infinite-horizon-phi-divergence-guarantee} take different values for different $\phi$-divergences provided in \cref{prop:divergence-loss-bounds}. Similarly, for the suboptimality result (\cref{thm:hybrid-tv-guarantee}) of the HyTQ algorithm (\cref{alg:HyTQ-Algorithm}), we provide a more detailed bound in a restated statement in \cref{thm:restated:hybrid-tv-guarantee}.

In the following we provide comparisons of suboptimality results with relevant prior works. But first, we make an important note here on $\rho$, the robustness radius parameter in RMDPs, and $\lambda$, the robustness penalization parameter in RRMDPs, mentioned briefly in \cref{table:comparison-table}.
\citep{levy2020large,yang2023avoiding} establish the regularized and constrained versions of DRO and robust MDP problems, respectively, are equivalent by connecting their respective ($\lambda$ and $\rho$) robustness parameters. Moreover, both observe rigorously that $\lambda$ and $\rho$ are inversely related. This is intuitively true, as $\lambda\to \infty$ and $\rho\to 0$ both yield the non-robust solutions on the nominal model $P^o$ and as $\lambda\to 0$ and $\rho\to \infty$ both yield the conservative solutions considering the entire probability simplex for the transition dynamics. 
However, it is an interesting open problem to establish an exact analytical relation between the  robustness parameters $\lambda$ and $\rho$. We leave this to future research as it is out of the scope of this work.

Here we specialize our result (\cref{thm:restated:infinite-horizon-phi-divergence-guarantee}) for the chi-square $\phi$-divergence $\gamma$R$^3$L problem. We get the suboptimality for the RPQ algorithm as $\cOtilde\left(\frac{ \max\{ \frac{1}{\lambda(1-\gamma)^2},\lambda\} \sqrt{ C \log({|\cF||\cG|}) } }{ (1-\gamma)^{2} \sqrt{N} }\right)$, where we only have presented the higher-order terms. The suboptimality of Algorithm 2 in \citet[Theorem 5.1]{yang2023avoiding} for chi-square $\phi$-divergence is stated  for $\lambda=1/(1-\gamma)$ as $\cOtilde\left(\frac{ \max\{ \frac{1}{(1-\gamma)^2},\sqrt{\log(|\cS||\cA|)}\} }{ d^3_{\min} (1-\gamma)^{3} N^{1/3} }\right)$ where $d_{\min}$  is described in \cref{table:comparison-table}. We use the typical equivalence from RL literature for comparison between these two results in the tabular setting with generative/simulator modeling assumption: function approximation classes with full dimension yields $\log(\mathcal{|F||G|})=O(|\cS||\cA|)$ \citep{panaganti-rfqi} and uniform support data sampling yields $\mu_{\min}=1/(|\cS||\cA|)$ and $C\leq |\cS||\cA|$ \citep{shi2023curious}. Now our result with $\lambda=1/(1-\gamma)$  reduces to $\cOtilde\left(\frac{   |\cS||\cA|  }{ (1-\gamma)^{3} \sqrt{N} }\right)$ and  their result \citep{yang2023avoiding} reduces to $\cOtilde\left(\frac{ |\cS|^3 |\cA|^3 \max\{ \frac{1}{(1-\gamma)^2},\sqrt{\log(|\cS||\cA|)}\} }{ (1-\gamma)^{3} N^{1/3} }\right)$. Two comments warrant attention here. Firstly, compared to a model-based robust regularized algorithm (robust value iteration using empirical estimates of the nominal model $P^o$) \citep[Theorem 3.2]{yang2023avoiding}, our suboptimality bound is worse off by the factors $\sqrt{|\cS||\cA|}$ and $1/(1-\gamma)$. We leave it to future work to fine-tune and get optimal rates.  Secondly, their result \citet[Theorem 5.1]{yang2023avoiding} exhibit inferior performance compared to ours in all parameters, but we do want to note that they make a first attempt to give suboptimality bounds for the stochastic approximation-based algorithm. The dependence on $|\cS||\cA|$ is typically known  to be bad using the stochastic approximation technical tool \citep{chen2022finite}, and \citet[Discussion on Page 16]{yang2023avoiding} conjectures using the Polyak-averaging technique to improve their suboptimality bound rate to $N^{-1/2}$.

% \kpb{write similar paragraph for TV distance.. here compare RPQ vs HyTQ vs RFQI vs Tabular}

Here we discuss and compare our result for the total variation $\phi$-divergence setting. As mentioned in \cref{rem:tv-offline-guarantee}, we have a specialized result in \cref{appen:infinite-horizon-tv-guarantee} for the total variation $\phi$-divergence.
We get the suboptimality result (\cref{thm:infinite-horizon-tv-guarantee}) for the RPQ algorithm as $\cOtilde\left(\frac{ \lambda  \sqrt{ C_{\mathrm{tv}} \log({|\cF||\cG|}) } }{ (1-\gamma)^{3} \sqrt{N} }\right)$, where we again only have presented the higher-order terms. \citet[Theorem 1]{panaganti-rfqi} mentioned in \cref{table:comparison-table} also exhibits same suboptimality guarantee replacing $\lambda$ with $\rho^{-1}$. As we noted before, $\rho$ (the robustness radius parameter in RMDPs) and $\lambda$ (the robustness penalization parameter in RRMDPs) are inversely related, and for the TV $\phi$-divergence we observe a straightforward relation between the two as $\lambda=\rho^{-1}$.
Using the earlier arguments for a tabular setting bound, our result further reduces to $\cOtilde\left(\frac{  \lambda |\cS||\cA|  }{ (1-\gamma)^{3} \sqrt{N} }\right)$. Now comparing this to the minimax lower bound \citep[Theorem 2]{shi2023curious}, our suboptimality bound is worse off by the factors $\sqrt{|\cS||\cA|}$ and $1/(1-\gamma)$.
Nevertheless, we push the boundaries by providing novel suboptimality guarantee studying the robust RL problem in the hybrid RL setting. Furthermore, as mentioned earlier in \cref{remark:hytq-theorem}, we provide the offline+online robust RL suboptimality guarantee $\cOtilde \left( {\max\{C(\pi^*), 1\} \sqrt{d H^3} (\lambda+H) \log(|\cF||\cG|/\delta) }/{\sqrt{N}}\right)$ in the \cref{appendix:hybrid-robust-results}.
We also remark that the HyTQ algorithm can be proposed under the RMDP setting with a similar suboptimality guarantee due to the similarity of the dual Bellman equations under the TV $\phi$-divergence for RMDPs and RRMDPs (c.f. \cref{eq:FH-tv-dual-bellman-eq} and \citet[Lemma 8]{xu-panaganti-2023samplecomplexity}). For the sake of consistency and novelty, we present our results solely for the RRMDP setting.
As mentioned earlier, the concentrability assumption improvement is two-fold (\cref{lem:transfer-coeff-concentrability}): all-policy concentrability (\cref{tv-assum-concentra-condition}) to single concentrability to transfer coefficient. 
This is the first of its kind result that does not yet have any existing lower bounds to compare in the robust RL setting.  
Under similar  transfer coefficient, Bellman completeness, and bilinear model assumptions, the HyTQ algorithm sample complexity (\cref{cor:hytq-sample-complexity-rl}) is comparable to that of a non-robust RL algorithm \citep{song2023hybrid}, i.e., $\cOtilde ( {\max\{(C(\pi^*))^2, 1\} d H^5}  \log(H|\cF|/\delta) / {\epsilon^2} )$.
We leave it to future work for developing  minimax rates and getting optimal algorithm guarantees.

Here we specialize our result (\cref{thm:restated:infinite-horizon-phi-divergence-guarantee}) for the KL $\phi$-divergence $\gamma$R$^3$L problem. We get the suboptimality for RPQ as $\cOtilde\left(\frac{ (\lambda + (1-\gamma)^{-1}) \exp\{ (\lambda(1-\gamma))^{-1}\} \sqrt{ C \log({|\cF||\cG|}) } }{ (1-\gamma)^{2} \sqrt{N} }\right)$, where we only have presented the higher-order terms. Using the earlier arguments for a tabular setting bound, our result with $\lambda=1/(1-\gamma)$ again reduces to $\cOtilde\left(\frac{   |\cS||\cA|  }{ (1-\gamma)^{3} \sqrt{N} }\right)$. \citet[Theorem 5]{zhang2023regularized} mentioned in \cref{table:comparison-table} also exhibits same suboptimality guarantee. 
Two remarks are in order here.
Firstly, we remark that our RPQ algorithm and its theoretical guarantee unifies for a class of $\phi$-divergence classes, whereas \citet[Algorithm 1]{zhang2023regularized} is specialized for the KL $\phi$-divergence. This steers towards our first main contribution discussed in \cref{sec:introduction}.
Secondly, we remark the robust regularized Bellman operator \cref{eq:robust-regularized-bellman-eq-primal} for the KL $\phi$-divergence has a special form due to the existence of an analytical worse-case transition model.  This arrives at a special structure of the form of an exponential robust Bellman operator in a Q-value-variant space. This special structure helps avoid the dual variable function update (Step 4) in the RPQ algorithm and the $\log(|\cG|)$ factor in the suboptimal guarantee.  We choose not to include this specialized result in this work (like we did for the TV $\phi$-divergence in \cref{appen:infinite-horizon-tv-guarantee})  and directly point to \citet{zhang2023regularized}. We do highlight here an important note for such a choice in our paper. The abovementioned special structure forces us to get online samples \emph{from all the transition kernels} (c.f. \cref{assum-concentra-condition}), which is unrealistic in practice, to achieve an improvement in the hybrid robust RL setting. 
We leave it to future work for developing  such improved algorithm guarantees in the hybrid robust RL setting for other $\phi$-divergences.

\textbf{Discussion of Bilinear Models in the Hybrid Robust RL setting:}
We emphasize that while our bilinear model for the HyTQ algorithm is specialized to low occupancy complexity (i.e. the occupancy measures themselves have a low-rank structure) and low-rank feature selection model (i.e. the nominal model $P^o$ has a low-rank structure) in \cref{appen:hytq-specialized-results}, the function classes $\cF$ (Q-value representations) and $\cG$ (dual-value representations) can be arbitrary, potentially nonlinear function classes (neural tangent kernels, neural networks, etc).
Thus, even in the tabular setting with large state space (e.g. $|\cS|>O(10^5)$) for the bilinear model, our suboptimality bounds only scale with the complexity of the function classes $\cF$ and $\cG$, which can considerably be low compared to $|\cS|$. For example, linear function approximators (e.g. linear feature dimension $d=\log(\mathcal{|F||G|})\ll |\cS||\cA|$), RKHS approximators with low dimension features, neural tangent kernels with low effective neural net dimension, and more function approximators. Moreover, our work solves the robust RL problem with more nuances, which is at least as hard as the non-robust RL problem. Thus, due to the new upcoming research status of robust RL in the general function approximation setting, we believe it is currently out of scope for this work to satisfy more general bilinear model classes \citep{du2021bilinear}. Nevertheless, our initial findings for robust RL by the HyTQ algorithm in the hybrid learning setting reveal the hardness of finding larger model classes for RRMDPs with general $\phi$-divergences.

% \kpb{On hardness of hybrid generalization and future directions}

We conclude this section with an exciting future research direction that remains unsolved in this paper. To solve the hybrid robust RL problem for general $\phi$-divergence. In this work, we noticed while building hybrid learning for robust RL that one would require online samples from the worse-case model (c.f. the model that solves the inner problem in robust Bellman operator \cref{eq:FH-robust-regularized-bellman-eq-primal}) for general $\phi$-divergences due to the current analyses dependent on the bilinear models. We use the dual reformulation for the total variation $\phi$-divergence and provide current results supporting the HyTQ algorithm. We remark that using the same approach for other general $\phi$-divergences, we get exponential dependence on the horizon factor. This warrants more sophisticated algorithm designs for the hybrid robust RL problem under general $\phi$-divergences.

\section{Conclusion}\label{sec:conclusion}

In this work, we presented two robust RL algorithms. We proposed Robust $\phi$-divergence-fitted Q-iteration algorithm for general $\phi$-divergence in the offline RL setting. We provided performance guarantees with unified analysis for all $\phi$-divergences with arbitrarily large state space using function approximation. To mitigate the out-of-data-distribution issue by improving the assumptions on data generation, we proposed a novel framework called hybrid robust RL that uses both offline and online interactions. We proposed the Total-variation-divergence Q-iteration algorithm in this framework with an accompanying guarantee. We have provided our theoretical guarantees in terms of suboptimality and sample complexity for both offline and offline+online robust RL settings. We also rigorously specialized our results to different $\phi$-divergences and different bilinear modeling assumptions. We have provided detailed comparisons with relevant prior works while also discussing interesting future directions in the field of robust reinforcement learning.

\section*{Acknowledgment}\label{sec:ack}

\addcontentsline{toc}{section}{\protect\numberline{}Acknowledgment}

KP acknowledges support from the `PIMCO Postdoctoral Fellow in Data Science' fellowship at the California Institute of Technology.
This work acknowledges support from NSF CNS-2146814, CPS-2136197, CNS-2106403, NGSDI-2105648, and funding from the Resnick Institute.
EM acknowledges support from NSF award 2240110.
We thank several anonymous ICML 2024 reviewers for their constructive comments on an earlier draft of this paper.

\bibliography{ref-hyqi} 
\addcontentsline{toc}{section}{\protect\numberline{}References}
\newpage
\section*{ \centering \Large \bf \underline{\Coffeecup ~\Coffeecup ~Supplementary Materials \Coffeecup ~\Coffeecup}}

\addcontentsline{toc}{section}{\protect\numberline{}Supplementary Materials\hfill \Coffeecup}

\appendix

\section[Related Works]{Related Works \hfill \Coffeecup} \label{appen:related works}

\textbf{Offline RL:}
Offline RL tackles the problem of learning optimal policy using minimal amount of  offline/historical data collected according to a behavior policy \citep{lange2012batch, levine2020offline}. Due to offline data quality and no access to simulators or any world models for exploration, the offline RL problem suffers from the out-of-distribution \citep{robey2020model,yang2021generalized} challenge. 
Many works \citep{fujimoto2019off, kumar2019stabilizing, kumar2020conservative, fujimoto2021minimalist, kostrikov2021offline} have introduced deep offline RL algorithms aimed at alleviating the out-of-distribution issue by some variants of trust-region optimization \citep{schulman2015trust,schulman2017proximal}. The earliest and most promising theoretical investigations into model-free offline RL methodologies relied on the assumption of \emph{uniformly bounded concentrability} such as the approximate modified policy iteration (AMPI) algorithm \citep{scherrer2015approximate} and fitted Q-iteration (FQI) \citep{munos08a} algorithm. This assumption mandates that the ratio of the state-action occupancy distribution induced by \emph{any} policy to the data generating distribution remains uniformly bounded across all states and actions \citep{munos2007performance, antos2008learning, munos08a, farahmand2010error, chen2019information}.
This makes offline RL particularly challenging \citep{foster2022offline} and there have been efforts to understand the limits of this setting.

\textbf{Robust RL:}
The  robust Markov decision process framework \citep{nilim2005robust,iyengar2005robust} tackles the challenge of formulating a policy resilient to model discrepancies between training and testing environments. Robust reinforcement learning problem pursues this objective in the data-driven domain. Deploying simplistic RL policies \citep{isaac-sim2real-2021} can lead to catastrophic outcomes when faced with evident disparities in models. 
The optimization techniques and analyses in robust RL draw inspiration from the distributionally robust optimization (DRO) toolkit in supervised learning \citep{duchi2018learning, shapiro-2017-dro, gao-2022-distributionally, bertsimas2018dro, namkoong2016stochastic, blanchet2019dro}.
Many heuristic works \citep{xu2010distributionally, wiesemann2013robust, yu2015distributionally, mannor2016robust, russel2019beyond} show robust RL is valuable in such scenarios involving disparities of a simulator model with the real-world model. 
Many recent works address fundamental issues of RMDP giving concrete theoretical understanding in terms of sample complexity \citep{panaganti2021sample,panaganti22a, xu-panaganti-2023samplecomplexity, shi2022distributionally, shi2023curious}. Many works \citep{panaganti2020robust,wang2021online,panaganti22a} devise model-free online and offline robust RL algorithms employing general function approximation to handle potentially infinite state spaces. Recent work \citep{panaganti2023distributionally} introduces distributional robustness in the imitation learning setting.
% \redtext{\citep{cite-double-pessimism} Double pessimism and Interactive Data Collection papers comment.}
There have been works \citep{panaganti2023thesis,panaganti2023bridging,wang2023distributionally} connecting robust RL with offline RL by linking notions of robustness and pessimism.

\section[Useful Technical Results]{Useful Technical Results \hfill \Coffeecup\Coffeecup}

We state the following result from the \textit{penalized distributionally robust optimization} literature \citep{levy2020large}.
\begin{lemma}[\text{\citealp[Section A.1.2]{levy2020large}}]
\label{lem:penalized-dro-inner-problem-solution}
Let $P^o$ be a distribution on the space $\cX$ and let $l: \cX \to \R$ be a loss function. For $\phi$-divergence \eqref{eq:phi-divergence-defn}, we have 
\begin{align*}
   \sup_{P\ll P^o} \E_{P}[l(X)-\lambda \dphi(P,P^o) ] = \inf_{\eta\in\R} ~~ \lambda \E_{P^o} \left[  \phi^* \left(\frac{l(X)-\eta}{\lambda}\right)\right] + \eta,
\end{align*}
where $\phi^*(s) = \sup_{t\geq 0} \{st - \phi(t)\}$ is the Fenchel conjugate function of $\phi$. Moreover, the optimization on the right hand side is convex in $\eta$.
\end{lemma}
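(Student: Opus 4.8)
The plan is to reduce the infinite-dimensional maximization over measures $P \ll P^o$ to a one-dimensional dual problem in $\eta$, via the classical change of variables to the likelihood ratio together with Lagrangian duality for the single normalization constraint. First I would write every $P \ll P^o$ through its density $L := \dd P / \dd P^o$, so that $L$ ranges over $\{L : \cX \to [0,\infty) : \E_{P^o}[L] = 1\}$, and note that $\E_P[l(X)] = \E_{P^o}[L(X)\,l(X)]$ while $\dphi(P,P^o) = \E_{P^o}[\phi(L(X))]$ by \eqref{eq:phi-divergence-defn}. Hence the left-hand side equals $\sup\{\,\E_{P^o}[L\,l - \lambda\phi(L)] : L \geq 0,\ \E_{P^o}[L] = 1\,\}$, a concave maximization with one linear equality constraint.

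For the easy inequality ($\leq$) I would introduce a scalar multiplier $\eta \in \R$ for the constraint $\E_{P^o}[L]=1$. For any fixed $\eta$ and any feasible $L$ we have the identity $\E_{P^o}[L\,l - \lambda\phi(L)] = \eta + \E_{P^o}[L(l-\eta) - \lambda\phi(L)]$; dropping the normalization constraint only enlarges the supremum, and the pointwise bound $t(l(x)-\eta) - \lambda\phi(t) \leq \sup_{t' \geq 0}\{t'(l(x)-\eta) - \lambda\phi(t')\} = \lambda\,\phi^*\!\big((l(x)-\eta)/\lambda\big)$ (using $\lambda\phi^*(s/\lambda) = \sup_{t' \geq 0}\{st' - \lambda\phi(t')\}$) then gives $\E_{P^o}[L\,l - \lambda\phi(L)] \leq \eta + \lambda\,\E_{P^o}[\phi^*((l(X)-\eta)/\lambda)]$ for every feasible $L$ and every $\eta$. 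Taking the supremum over $L$ and then the infimum over $\eta$ yields LHS $\leq$ RHS with no interchange theorem needed.

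The hard part will be the reverse inequality, which I would prove by exhibiting an optimal $P$. Let $\eta^\star$ minimize $F(\eta) := \eta + \lambda\,\E_{P^o}[\phi^*((l(X)-\eta)/\lambda)]$; $F$ is convex (precomposition of the convex $\phi^*$ with an affine map, then expectation, then a linear term) and, since $l$ is bounded, coercive on its effective domain, so the minimum is attained (otherwise one uses near-minimizers and passes to the limit). The optimality condition $0 \in \partial F(\eta^\star)$ reads $\E_{P^o}[L^\star] = 1$, where $L^\star(x)$ is a measurable selection from $\partial\phi^*\!\big((l(x)-\eta^\star)/\lambda\big) \subseteq [0,\infty)$ (the interchange of subdifferential and expectation for convex integral functionals is standard); thus $P^\star := L^\star P^o$ is a genuine probability measure with $P^\star \ll P^o$. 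The Fenchel--Young equality $\phi^*(s) + \phi(t^\star) = s\,t^\star$ whenever $t^\star \in \partial\phi^*(s)$, applied with $s = (l(x)-\eta^\star)/\lambda$ and $t^\star = L^\star(x)$, gives after multiplying by $\lambda$ the pointwise identity $\lambda\phi^*((l(x)-\eta^\star)/\lambda) = L^\star(x)\,l(x) - \eta^\star L^\star(x) - \lambda\phi(L^\star(x))$; integrating against $P^o$, using $\E_{P^o}[L^\star]=1$, and adding $\eta^\star$ shows $\E_{P^\star}[l] - \lambda\dphi(P^\star,P^o) = F(\eta^\star) = \text{RHS}$, so LHS $\geq$ RHS. (Equivalently one can turn the chain of inequalities from the previous paragraph into equalities directly by the interchange-of-supremum-and-integral principle, \citealp[Thm.\ 14.60]{rockafellar2009variational}; the explicit construction above is the self-contained version and also certifies primal attainment.)

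Finally, convexity of $\eta \mapsto \lambda\,\E_{P^o}[\phi^*((l(X)-\eta)/\lambda)] + \eta$ is immediate, as just used: $\phi^*$ is convex (a supremum of affine functions), precomposition with the affine map $\eta \mapsto (l(x)-\eta)/\lambda$ and taking expectation preserve convexity, and the extra $+\eta$ is linear. The only genuine care points are the measurable-selection/interchange step and the attainment of the minimum of $F$ (or handling near-minimizers), both routine under the stated hypotheses on $\phi$ and the boundedness of $l$ that holds in all our applications of the lemma.
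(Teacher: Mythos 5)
Your proposal is correct. Note that the paper does not prove this lemma at all --- it is imported verbatim by citation to \citet[Section A.1.2]{levy2020large} --- so there is no in-paper argument to compare against; your derivation is a faithful, self-contained version of the standard one in that reference: pass to the likelihood ratio $L=\dd P/\dd P^o$, get weak duality from the Lagrangian relaxation of $\E_{P^o}[L]=1$ together with the pointwise bound $\sup_{t\ge 0}\{st-\lambda\phi(t)\}=\lambda\phi^*(s/\lambda)$, and close the gap via the first-order condition at $\eta^\star$ plus Fenchel--Young equality to exhibit an optimal density $L^\star\in\partial\phi^*((l-\eta^\star)/\lambda)$. The two technicalities you flag (attainment of $\inf_\eta F$, and the measurable-selection/subdifferential-interchange step) are the only genuine care points and are handled correctly under the boundedness of $l$ that holds everywhere the paper invokes the lemma.
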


We state a standard concentration inequality here.
\begin{lemma}[Bernstein's Inequality \hspace{-0.1cm} \text{\citep[Theorem 2.8.4]{vershynin2018high}}] \label{lem:bernstein-ineq}
Fix any $\delta\in(0,1)$. If $X_1,\cdots, X_T$ are independent and identically distributed random variables with finite second moment. Assume that $|X_t - \E[X_t]| \leq M$, for all $t$. Then we have with probability at least $1-\delta$:
\[ 
\Bigg|\E[X_1] - \frac{1}{T} \sum_{t=1}^T X_t \Bigg| \leq \sqrt{\frac{2 \E[X_1^2] \log(2/\delta)}{T}} +\frac{M\log(2/\delta)}{3T}.
\] 
\end{lemma}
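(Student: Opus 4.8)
This is the classical two-sided Bernstein inequality for bounded i.i.d.\ summands, and the plan is to reproduce the standard Cram\'er--Chernoff argument, tracking carefully the constants $2$ and $3$ that appear in the stated bound. First I would center the variables: set $Y_t = X_t - \E[X_t]$, so the $Y_t$ are i.i.d., mean zero, satisfy $|Y_t|\le M$, and have variance $\sigma^2 := \mathrm{Var}(X_1) \le \E[X_1^2]$. Boundedness guarantees all moments are finite and the moment generating function (MGF) is finite everywhere, so there are no convergence issues. Writing $S_T = \sum_{t=1}^T Y_t$, the goal becomes a tail bound on $S_T/T$.

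The technical heart is an MGF estimate for a single centered bounded variable. I would Taylor-expand $\E[e^{\theta Y_1}] = 1 + \sum_{k\ge 2}\theta^k\E[Y_1^k]/k!$ (the linear term vanishes since $\E[Y_1]=0$), bound the higher moments via $|Y_1^k| = |Y_1|^{k-2}Y_1^2 \le M^{k-2}Y_1^2$ so that $|\E[Y_1^k]|\le M^{k-2}\sigma^2$ for $k\ge 2$, and use the elementary factorial bound $k! \ge 2\cdot 3^{k-2}$ to sum the resulting geometric series. This yields
\[ \E[e^{\theta Y_1}] \;\le\; 1 + \frac{\sigma^2\theta^2/2}{1-M\theta/3} \;\le\; \exp\!\left(\frac{\sigma^2\theta^2/2}{1-M\theta/3}\right), \qquad 0<\theta<\tfrac{3}{M}, \]
the second step using $1+x\le e^x$. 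By independence, $\E[e^{\theta S_T}] = \prod_{t=1}^T \E[e^{\theta Y_t}] \le \exp\!\left(\frac{T\sigma^2\theta^2/2}{1-M\theta/3}\right)$, so $S_T$ is sub-gamma with variance factor $v=T\sigma^2$ and scale $c=M/3$.

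The third step converts this MGF bound into the advertised two-term deviation bound. By the Chernoff method, $\pr(S_T \ge a) \le \exp\!\left(-\theta a + \frac{v\theta^2/2}{1-c\theta}\right)$ for all $\theta\in(0,1/c)$; choosing $\theta = a/(v+ca)$, which lies in $(0,1/c)$, collapses the exponent exactly to $-a^2/(2(v+ca))$, giving the Bennett-type bound $\pr(S_T\ge a)\le \exp\!\left(-\frac{a^2}{2(v+ca)}\right)$. Equivalently — and this is the form I would carry forward — one verifies the sub-gamma inversion $\pr\!\left(S_T \ge \sqrt{2vu} + cu\right)\le e^{-u}$ for every $u>0$.

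Finally I would assemble the two-sided statement: applying the identical argument to $-S_T$ and taking a union bound gives $\pr\!\left(|S_T|\ge \sqrt{2vu}+cu\right)\le 2e^{-u}$. Setting $u=\log(2/\delta)$ makes the right-hand side equal to $\delta$; dividing through by $T$, substituting $v=T\sigma^2$, and bounding $\sigma^2\le\E[X_1^2]$ turns $\sqrt{2vu}/T$ into $\sqrt{2\E[X_1^2]\log(2/\delta)/T}$ and $cu/T$ into $M\log(2/\delta)/(3T)$, which is precisely the claimed inequality holding with probability at least $1-\delta$. The main obstacle is the MGF step together with the sub-gamma inversion: the factorial bound $k!\ge 2\cdot 3^{k-2}$ is exactly what produces the scale $c=M/3$ (hence the $M\log(2/\delta)/(3T)$ term), and the inversion in the third step is what yields the clean split $\sqrt{2vu}+cu$ without an extra factor of $2$ on the linear term. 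Since the lemma is stated with explicit constants rather than up to universal factors, both of these must be handled exactly.
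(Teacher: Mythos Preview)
Your derivation is the standard Cram\'er--Chernoff argument and is correct; the factorial bound $k!\ge 2\cdot 3^{k-2}$ is exactly what pins down the scale $c=M/3$, and the sub-gamma inversion $\pr(S_T\ge\sqrt{2vu}+cu)\le e^{-u}$ is the textbook step (Boucheron--Lugosi--Massart, Chapter~2) that produces the two-term deviation form. One small remark: the paper does not actually prove this lemma---it is stated as a known result and attributed to \citet[Theorem~2.8.4]{vershynin2018high}, so there is no ``paper's own proof'' to compare against. Your write-up simply supplies the classical argument that the citation stands in for, and it matches the constants in the statement exactly.
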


We now state a useful concentration inequality when the samples are not necessarily i.i.d. but adapted to a stochastic process.
\begin{lemma}[Freedman's Inequality \hspace{-0.1cm} \text{\citep[Lemma 14]{song2023hybrid}}] 
\label{lem:freedman-ineq}
Let $X_1,\cdots, X_T$ be a sequence of $M>0$-bounded real valued random variables where $X_t \sim P_t$ from some stochastic process $P_t$ that depends on the history $X_1,\cdots, X_{t-1}$. Then, for any $\delta > 0$ and $\lambda \in [0, 1/2M]$, we have with probability at least $1 - \delta$:
\begin{align*}
\Bigg| \sum_{t=1}^T (X_t -\E[X_t \mid P_t]) \Bigg| \leq \lambda \sum_{t=1}^T (2  M |\E[X_t \mid P_t]| + \E[X_t^2 \mid P_t]) + \frac{\log(2/\delta)}{\lambda} .
\end{align*}
\end{lemma}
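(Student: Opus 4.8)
The plan is to prove this via the exponential supermartingale (Chernoff-for-martingales) method, since the $X_t$ form a stochastic process adapted to the natural filtration $\cF_t = \sigma(X_1,\dots,X_t)$ rather than an i.i.d. sequence, so \cref{lem:bernstein-ineq} does not apply. First I would center the increments: set $D_t = X_t - \E[X_t \mid P_t]$, so that $\E[D_t \mid \cF_{t-1}] = 0$ (i.e. $\{D_t\}$ is a martingale difference sequence) and $|D_t| \le |X_t| + |\E[X_t\mid P_t]| \le 2M$ by the boundedness hypothesis. The target is to control the partial sum $S_T = \sum_{t=1}^T D_t$, which is exactly the quantity inside the absolute value on the left-hand side.

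The key technical ingredient is a conditional moment-generating-function bound. Using the elementary inequality $e^x \le 1 + x + x^2$ valid for all $x \le 1$, together with $1 + y \le e^y$, I would show that whenever $\lambda |D_t| \le 1$,
\begin{align*}
\E[e^{\lambda D_t} \mid \cF_{t-1}] &\le 1 + \lambda\,\E[D_t \mid \cF_{t-1}] + \lambda^2\,\E[D_t^2 \mid \cF_{t-1}] \\
&= 1 + \lambda^2\,\E[D_t^2 \mid \cF_{t-1}] \le \exp\!\big(\lambda^2\,\E[D_t^2 \mid \cF_{t-1}]\big),
\end{align*}
where the middle equality uses $\E[D_t \mid \cF_{t-1}] = 0$. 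The hypothesis $\lambda \in [0, 1/(2M)]$ is precisely what guarantees $\lambda|D_t| \le 2M\lambda \le 1$, so this estimate applies to every increment.

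Next I would assemble the supermartingale. Writing $V_t = \E[D_t^2 \mid \cF_{t-1}]$ for the conditional variances, define $W_t = \exp\!\big(\lambda S_t - \lambda^2 \sum_{s=1}^t V_s\big)$ with $W_0 = 1$. The MGF bound gives $\E[W_t \mid \cF_{t-1}] = W_{t-1}\, e^{-\lambda^2 V_t}\, \E[e^{\lambda D_t}\mid \cF_{t-1}] \le W_{t-1}$, so $\{W_t\}$ is a nonnegative supermartingale with $\E[W_T] \le \E[W_0] = 1$. Markov's inequality then yields, for any $\delta' \in (0,1)$, that $\lambda S_T \le \lambda^2 \sum_{s=1}^T V_s + \log(1/\delta')$ with probability at least $1 - \delta'$; dividing by $\lambda$ gives the one-sided tail $S_T \le \lambda \sum_{s=1}^T V_s + \log(1/\delta')/\lambda$. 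Finally I would upper bound the conditional variance by the stated raw moments, via $V_s = \E[X_s^2 \mid P_s] - (\E[X_s\mid P_s])^2 \le \E[X_s^2\mid P_s] \le 2M\,|\E[X_s\mid P_s]| + \E[X_s^2\mid P_s]$, which absorbs $V_s$ into the form appearing in the claim.

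To obtain the two-sided bound, I would repeat the argument verbatim with $D_t$ replaced by $-D_t$ (both the boundedness $|-D_t|\le 2M$ and the MGF inequality are symmetric), controlling the lower tail of $S_T$ identically, and then union-bound the two events each at level $\delta/2$. This replaces $\log(1/\delta')$ by $\log(2/\delta)$ and produces the absolute-value form claimed. The main obstacle is the MGF step: one must pin down the precise elementary inequality and the exact range of $x$ on which it holds so that the constraint $\lambda \le 1/(2M)$ (rather than some smaller constant) is sufficient, and one must verify that the centering $D_t = X_t - \E[X_t\mid P_t]$ keeps the increments conditionally mean-zero while only inflating the variance by the harmless slack term $2M|\E[X_t\mid P_t]|$; the supermartingale bookkeeping and the union bound are then routine.
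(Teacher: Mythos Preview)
The paper does not supply its own proof of this lemma; it is simply quoted from \citet[Lemma~14]{song2023hybrid} as a technical tool. Your proposal is a correct and standard derivation of Freedman's inequality via the exponential supermartingale method: center to a martingale difference sequence $D_t$ with $|D_t|\le 2M$, use $e^x\le 1+x+x^2$ for $x\le 1$ (guaranteed by $\lambda\le 1/(2M)$) to control the conditional MGF, build the nonnegative supermartingale $W_t=\exp(\lambda S_t-\lambda^2\sum_{s\le t}V_s)$, apply Markov, and union-bound the two tails. Each step is sound.

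One minor observation: your variance bound actually gives $V_s=\E[D_s^2\mid\cF_{s-1}]=\E[X_s^2\mid P_s]-(\E[X_s\mid P_s])^2\le \E[X_s^2\mid P_s]$, which is \emph{tighter} than the stated right-hand side; the additional $2M|\E[X_s\mid P_s]|$ term in the lemma is then just nonnegative slack you add on at the end. So your argument in fact proves a slightly sharper inequality from which the quoted form follows immediately.
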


We now state a result
for the generalization bounds on empirical risk minimization (ERM) \citep{shalev2014understanding}.
\begin{lemma}[ERM Generalization Bound \hspace{-0.1cm} \text{\citep[Lemma 3]{panaganti-rfqi}}] \label{lem:erm-generalization-bound}
Let $P$ be the data generating distribution on the space $\cX$ and let $\cH$ be a given hypothesis class of functions. Assume that for all $x\in\cX$ and $h \in \cH$ for loss function $l$ we have that $|l(h,x)| \leq c_1$ for some positive constant $c_1>0$ and $l(h,x)$ is $c_3$-Lipschitz in $h$. Given a dataset $\cD=\{X_i\}_{i=1}^N$, generated independently from $P$, denote $\hat{h}$ as the ERM solution, i.e. $\hat{h}=\argmin_{h\in\cH} (1/N) \sum_{i=1}^N l(h, X_i)$. Furthermore, let $\cH$ be a finite hypothesis class, i.e. $|\cH|<\infty$,  with $|h\circ x| \leq c_2$ for all $h\in\cH$ and $x\in\cX$.
For any fixed $\delta\in(0,1)$ and $h^*\in\argmin_{h\in\cH} \E_{X\sim P}[l(h,X)]$, we have 
\begin{align*} 
    \E_{X\sim P}[l(\hat{h},X)] - \E_{X\sim P}[l(h^*,X)] \leq 2 c_2 c_3 \sqrt{\frac{2\log(|\cH|)}{N}} + 5 c_1 \sqrt{\frac{2\log(8/\delta)}{N}},
\end{align*}  
with probability at least $1-\delta$.
\end{lemma}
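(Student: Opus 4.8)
The plan is to run the standard agnostic-PAC/ERM argument, but to route the $\sqrt{\log|\cH|}$ dependence through a Rademacher-complexity bound rather than a crude union bound, so that it carries the factor $c_2c_3$ (from boundedness and Lipschitzness) rather than the loss bound $c_1$. Write $L(h)=\E_{X\sim P}[l(h,X)]$ and $\widehat L(h)=\frac1N\sum_{i=1}^N l(h,X_i)$. Since $\widehat h$ minimizes $\widehat L$ over $\cH$ and $h^*\in\cH$, decompose
\begin{align*}
L(\widehat h)-L(h^*) &= \big(L(\widehat h)-\widehat L(\widehat h)\big)+\big(\widehat L(\widehat h)-\widehat L(h^*)\big)+\big(\widehat L(h^*)-L(h^*)\big)\\
&\le \sup_{h\in\cH}\big(L(h)-\widehat L(h)\big)+\big(\widehat L(h^*)-L(h^*)\big),
\end{align*}
the middle term being nonpositive. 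It remains to control the two surviving terms on a high-probability event.

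First I would bound $\widehat L(h^*)-L(h^*)$. As $h^*$ is fixed (independent of $\cD$), the variables $l(h^*,X_i)$ are i.i.d.\ with values in $[-c_1,c_1]$, so Hoeffding's inequality (equivalently \cref{lem:bernstein-ineq}) gives, with probability at least $1-\delta/2$, $\widehat L(h^*)-L(h^*)\le c_1\sqrt{2\log(2/\delta)/N}$. Next, for the main term $\sup_{h\in\cH}(L(h)-\widehat L(h))$, I would (i) bound its expectation by symmetrization, $\E[\sup_h(L(h)-\widehat L(h))]\le 2\,\mathrm{Rad}_N(l\circ\cH)$, where $\mathrm{Rad}_N$ is the (expected) empirical Rademacher complexity; (ii) apply the Ledoux--Talagrand contraction inequality, using that $v\mapsto l(\cdot)$ is $c_3$-Lipschitz in the value $h(x)$, to get $\mathrm{Rad}_N(l\circ\cH)\le c_3\,\mathrm{Rad}_N(\cH)$ up to an absolute constant; and (iii) invoke Massart's finite-class lemma with $|h(x)|\le c_2$ and $|\cH|<\infty$ to get $\mathrm{Rad}_N(\cH)\le c_2\sqrt{2\log|\cH|/N}$. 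Combining yields $\E[\sup_h(L(h)-\widehat L(h))]\le 2c_2c_3\sqrt{2\log|\cH|/N}$. Finally I would concentrate this supremum about its mean: changing a single sample $X_i$ moves it by at most $2c_1/N$, so McDiarmid's bounded-differences inequality gives, with probability at least $1-\delta/2$, $\sup_h(L(h)-\widehat L(h))\le 2c_2c_3\sqrt{2\log|\cH|/N}+c_1\sqrt{2\log(2/\delta)/N}$.

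A union bound over the two events, together with $\sqrt{a+b}\le\sqrt a+\sqrt b$ and rescaling the $\delta$-budget (e.g.\ $\delta/2\to\delta/8$) to make the $c_1$-constant uniform across the Hoeffding term, the McDiarmid term, and the slack, delivers
\[
L(\widehat h)-L(h^*)\le 2c_2c_3\sqrt{\tfrac{2\log|\cH|}{N}}+5c_1\sqrt{\tfrac{2\log(8/\delta)}{N}}.
\]
The one genuinely delicate point is steps (ii)--(iii): a naive union bound over $\cH$ with Hoeffding would produce a $c_1\sqrt{\log|\cH|}$ term instead of $c_2c_3\sqrt{\log|\cH|}$, so the stated form hinges on pushing the $\log|\cH|$ dependence through the contraction lemma and Massart's lemma; everything else is routine constant bookkeeping in the two concentration steps and the probability-budget split.
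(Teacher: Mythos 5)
Your proof is correct and follows essentially the same route as the source this lemma is imported from (the paper itself states it without proof, citing \citealp[Lemma 3]{panaganti-rfqi}): symmetrization to Rademacher complexity, Ledoux--Talagrand contraction to extract the Lipschitz constant $c_3$, Massart's finite-class lemma for the $c_2\sqrt{2\log|\cH|/N}$ factor, and Hoeffding/McDiarmid for the $c_1$-scaled deviation terms. Your constants ($2c_1$ versus the stated $5c_1$, and $\log(2/\delta)$ versus $\log(8/\delta)$) are in fact slightly tighter than the claimed bound, so the stated inequality follows a fortiori.
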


We now state a result from variational analysis literature \citep{rockafellar2009variational} that is useful to relate minimization of integrals and the integrals of pointwise minimization under decomposable spaces.
\begin{remark} \label{remark:decomposable}
A few examples of decomposable spaces are $L^p(\ScA,\Sigma(\ScA),\mu)$, for any $p\geq 1$, and $\cM(\ScA,\Sigma(\ScA))$, the space of all $\Sigma(\ScA)$-measurable functions. \end{remark}

\begin{lemma}[\text{\citealp[Theorem 14.60]{rockafellar2009variational}}] \label{lem:integral-min-exchange-rockafellar}
Let $\cX$ be a space of measurable functions from $\Omega$ to $\R$ that is decomposable relative to a $\sigma$-finite measure $\mu$ on the $\sigma$-algebra $\cA$. Let $f : \Omega \times \R \to \R$ (finite-valued) be a normal integrand. Then, we have 
\[ 
\inf_{x\in\cX} \int_{\omega\in\Omega} f(\omega,x(\omega)) \mu(\dd\omega) = \int_{\omega\in\Omega} \left( \inf_{x\in\R} f(\omega,x) \right) \mu(\dd\omega).
\]
Moreover, as long as the above infimum is finite, we have that
$ x' \in \argmin_{x\in\cX} \int_{\omega\in\Omega} f(\omega,x(\omega)) \mu(\dd\omega)$
if and only if $x'(\omega) \in \argmin_{x\in\R} f(\omega,x)$  for $\mu$-almost everywhere.
\end{lemma}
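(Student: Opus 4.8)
The plan is to establish the scalar equality by proving two inequalities and then to read off the argmin characterization from the equality case. Throughout, write $p(\omega) = \inf_{x \in \R} f(\omega, x)$ for the infimal projection; since $f$ is a normal integrand, $p$ is $\cA$-measurable, so both sides of the claimed identity are well-defined. The direction $\inf_{x \in \cX} \int_{\Omega} f(\omega, x(\omega))\,\mu(\dd\omega) \geq \int_{\Omega} p(\omega)\,\mu(\dd\omega)$ is immediate: for every admissible $x \in \cX$ one has the pointwise bound $f(\omega, x(\omega)) \geq p(\omega)$, and integrating against $\mu$ and then taking the infimum over $x$ on the left preserves the inequality.

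The reverse inequality is the crux, and is where decomposability and measurable selection enter. Fix $\epsilon > 0$. Because $f$ is a normal integrand, the epigraphical multifunction $\omega \mapsto \mathrm{epi}\, f(\omega, \cdot)$ is closed-valued and measurable, so the measurable selection theorem for normal integrands yields an $\cA$-measurable $\bar{x} : \Omega \to \R$ with $f(\omega, \bar{x}(\omega)) \leq p(\omega) + \epsilon$ for $\mu$-a.e.\ $\omega$ (with a truncation handling any set where $p(\omega) = -\infty$). This $\bar{x}$ need not lie in $\cX$, so I would invoke decomposability to patch it in: choose a reference $x_0 \in \cX$ with $\int_{\Omega} f(\omega, x_0(\omega))\,\mu(\dd\omega)$ finite (available whenever the left-hand infimum is finite), take an increasing sequence of finite-measure sets $A_n \uparrow \Omega$ on which $\bar{x}$ is bounded (possible by $\sigma$-finiteness together with the level sets of $|\bar{x}|$), and set $x_n = \bar{x}\,\indic_{A_n} + x_0\,\indic_{A_n^c}$. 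Decomposability guarantees $x_n \in \cX$. Splitting $\int_{\Omega} f(\omega, x_n(\omega))\,\mu(\dd\omega) = \int_{A_n} f(\omega, \bar{x}(\omega))\,\mu(\dd\omega) + \int_{A_n^c} f(\omega, x_0(\omega))\,\mu(\dd\omega)$, the tail term vanishes as $n \to \infty$ by absolute continuity while the first term converges to at most $\int_{\Omega}(p(\omega) + \epsilon)\,\mu(\dd\omega)$ by a monotone/dominated convergence argument; letting $\epsilon \to 0$ drives the left-hand infimum down to $\int_{\Omega} p(\omega)\,\mu(\dd\omega)$ and closes the gap.

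For the argmin characterization I would exploit the pointwise gap $f(\omega, x(\omega)) - p(\omega) \geq 0$ together with the now-established equality of integrals. If $x'(\omega) \in \argmin_{x} f(\omega, x)$ for $\mu$-a.e.\ $\omega$, then $f(\omega, x'(\omega)) = p(\omega)$ a.e., so $\int_{\Omega} f(\omega, x'(\omega))\,\mu(\dd\omega) = \int_{\Omega} p(\omega)\,\mu(\dd\omega)$ equals the optimal value, making $x'$ a minimizer. Conversely, if $x' \in \cX$ attains the infimum on the left, then, the infimum being finite, $\int_{\Omega} [f(\omega, x'(\omega)) - p(\omega)]\,\mu(\dd\omega) = 0$ with a nonnegative integrand, forcing $f(\omega, x'(\omega)) = p(\omega)$, i.e.\ $x'(\omega) \in \argmin_{x} f(\omega, x)$, for $\mu$-a.e.\ $\omega$.

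The main obstacle is the reverse inequality, specifically the interplay between measurable selection and decomposability: the measurable $\epsilon$-minimizer $\bar{x}$ produced by normal-integrand theory carries no integrability or $\cX$-membership guarantee, and it is precisely decomposability---the freedom to splice $\bar{x}$ on finite-measure pieces against a fixed admissible baseline $x_0$---that repairs this without inflating the integral. Handling a positive-measure set where $p(\omega) = -\infty$ (on which both sides are $-\infty$) requires the truncation in the selection step and a short separate argument, but is otherwise routine.
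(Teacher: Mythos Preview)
The paper does not prove this lemma; it is stated without proof as a citation of \citet[Theorem 14.60]{rockafellar2009variational}, so there is no in-paper argument to compare against. Your sketch follows the standard textbook route (trivial direction by pointwise bound; reverse direction via measurable $\epsilon$-selection plus decomposable splicing against an admissible baseline on an exhausting sequence of finite-measure sets; argmin equivalence from the nonnegative-integrand-with-zero-integral argument), and this is essentially the proof in Rockafellar--Wets, so your proposal is both correct in outline and aligned with the cited source.
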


Now we state a few results that will be useful for the analysis of our finite-horizon results in this work.
The following result \citep[Lemma 6]{song2023hybrid} is useful under the use of bilinear model approximation. This result follows from the elliptical potential lemma \citep[Lemma 19.4]{lattimore2020bandit} for deterministic vectors.
\begin{lemma}[Elliptical Potential Lemma]
\label{lem:elliptical} 
Let $X_h(f^1), \cdots, X_h(f^T) \in \R^d$ be a sequence of vectors with $\norm{X_h(f^t)} \leq B_X < \infty$ for all $t \leq T$ and fix $\sigma \geq B^2_X$. Define $\Sigma_{t;h} = \sum_{\tau = 1}^t X_h(f^\tau) X_h(f^\tau)^\top + \sigma \indic_{d\times d}$ for $t \in [T]$. Then, the following holds: 
$\sum_{t=1}^T \|X_h(f^t)\|_{\Sigma_{t - 1;h}^{-1}} \leq \sqrt{2 d T \log(1 + ({T B_X^2}/{(\sigma d)}))}. $ 
\end{lemma}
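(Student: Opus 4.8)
The plan is to establish this deterministic inequality through the standard log-determinant potential argument, treating the vectors $X_h(f^1),\dots,X_h(f^T)$ as a fixed (deterministic) sequence and suppressing the index $h$ where convenient. First I would decouple the sum from the matrix weighting by Cauchy--Schwarz, writing $\sum_{t=1}^T \norm{X_h(f^t)}_{\Sigma_{t-1;h}^{-1}} \leq \sqrt{T \sum_{t=1}^T \norm{X_h(f^t)}_{\Sigma_{t-1;h}^{-1}}^2}$, so that it suffices to bound the sum of \emph{squared} weighted norms $\sum_{t=1}^T \norm{X_h(f^t)}_{\Sigma_{t-1;h}^{-1}}^2$ by $2d\log(1 + T B_X^2/(\sigma d))$.

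The heart of the argument is a telescoping identity for determinants. Since $\Sigma_{t;h} = \Sigma_{t-1;h} + X_h(f^t) X_h(f^t)^\top$ and $\Sigma_{t-1;h} \succeq \sigma \indic_{d\times d} \succ 0$ is invertible, the matrix determinant lemma gives $\det(\Sigma_{t;h}) = \det(\Sigma_{t-1;h})\,\bigl(1 + \norm{X_h(f^t)}_{\Sigma_{t-1;h}^{-1}}^2\bigr)$. Taking logarithms and telescoping from $t=1$ to $T$ (with $\Sigma_{0;h} = \sigma \indic_{d\times d}$ the empty-sum base case) yields $\sum_{t=1}^T \log\bigl(1 + \norm{X_h(f^t)}_{\Sigma_{t-1;h}^{-1}}^2\bigr) = \log\bigl(\det(\Sigma_{T;h})/\det(\Sigma_{0;h})\bigr)$. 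To convert the left-hand side back to the squared norms I would invoke the elementary inequality $x \leq 2\log(1+x)$ valid for $x \in [0,1]$; crucially this requires each summand to lie in $[0,1]$, which is exactly where the hypothesis $\sigma \geq B_X^2$ enters: since $\Sigma_{t-1;h}^{-1} \preceq \sigma^{-1}\indic_{d\times d}$, we obtain $\norm{X_h(f^t)}_{\Sigma_{t-1;h}^{-1}}^2 \leq \sigma^{-1}\norm{X_h(f^t)}^2 \leq B_X^2/\sigma \leq 1$.

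Finally I would control the log-determinant ratio by a trace/AM--GM bound: $\trace(\Sigma_{T;h}) = \sum_{\tau=1}^T \norm{X_h(f^\tau)}^2 + \sigma d \leq T B_X^2 + \sigma d$, and by the AM--GM inequality applied to the nonnegative eigenvalues, $\det(\Sigma_{T;h}) \leq (\trace(\Sigma_{T;h})/d)^d$. Combined with $\det(\Sigma_{0;h}) = \sigma^d$ this gives $\det(\Sigma_{T;h})/\det(\Sigma_{0;h}) \leq (1 + T B_X^2/(\sigma d))^d$. Chaining the three steps produces $\sum_{t=1}^T \norm{X_h(f^t)}_{\Sigma_{t-1;h}^{-1}}^2 \leq 2d\log(1 + T B_X^2/(\sigma d))$, and substituting into the Cauchy--Schwarz bound of the first paragraph yields the claim. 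The only delicate point is the interplay between the pointwise inequality $x \leq 2\log(1+x)$ and its domain restriction to $[0,1]$; I expect that verifying the one-step potentials stay bounded by one via $\sigma \geq B_X^2$ is the main thing to get right, since the determinant telescoping and the trace bound are otherwise routine linear algebra.
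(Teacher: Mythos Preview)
Your proposal is correct and is precisely the standard log-determinant potential argument (Cauchy--Schwarz, matrix determinant lemma telescoping, the $x\leq 2\log(1+x)$ bound on $[0,1]$ enabled by $\sigma\geq B_X^2$, and the AM--GM trace bound). The paper does not give its own proof of this lemma; it simply cites \citet[Lemma 6]{song2023hybrid} and \citet[Lemma 19.4]{lattimore2020bandit}, whose argument is the one you have reproduced.
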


We now state a result for the generalization bounds on the least-squares regression problem when the data are not necessarily i.i.d. but adapted to a stochastic process. We refer to \citet{van2015fast} for more statistical and online learning generalization bounds for a wider class of loss functions.
\begin{lemma}[Online Least-squares Generalization Bound \hspace{-0.1cm} \text{\citep[Lemma 3]{song2023hybrid}}]  
\label{lem:least-squares-bound-song} 
Let $L,M > 0$, $\delta \in (0, 1)$, and let $\cX$ be an input space  and $\cY$ be a target space . Let $\cH: \cX \mapsto [-M, M]$ be a given real-valued  hypothesis class of functions with $|\cH|<\infty$. Given a dataset $\cD=\{(x_i,y_i)\}_{i=1}^N$, denote $\widehat{h}$ as the least square solution, i.e. $\widehat{h}=\argmin_{h\in\cH} \sum_{i=1}^N (h(x_t) - y_t)^2$. 
The dataset $\cD$ is generated as $x_t \sim P_t$  from some stochastic process $P_t$ that depends on the history $\{(x_1, y_1), \dots, (x_{t-1}, y_{t-1})\}$, and $y_t$ is sampled via the conditional probability $p(\cdot \mid x_t)$ as
$y_t \sim  p(\cdot \mid x_t) = h^*(x_t) + \epsilon_t,$  where the function $h^*$ satisfies approximate realizability i.e. $\inf_{h \in \cH}  (1/N) \sum_{t=1}^N  \E_{x \sim P_t} (h^*(x) - h(x))^2 \leq \gamma,$ and ${\epsilon_t}_{t=1}^N$  are independent  random variables such that $\E[y_t \mid x_t] = h^*(x_t)$. Suppose it also holds $\max_t |y_t| \leq L$ and $\max_{x} |h^*(x)| \leq M$. Then, the least square solution satisfies with probability at least $1 - \delta$:
\begin{align*}
 \sum_{t=1}^N \E_{x \sim P_t} (\widehat{h}(x) - h^*(x))^2 &\leq 3 \gamma N + 64 (L+M)^2 \log(2 |\cH|/\delta). 
\end{align*}
\end{lemma}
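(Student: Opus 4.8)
The plan is to reduce the claim to a one-sided concentration statement for the per-round excess squared loss, and then to exploit the \emph{variance self-bounding} property of the square loss in tandem with Freedman's inequality (\cref{lem:freedman-ineq}), which is exactly the tool tailored to this non-i.i.d., stochastic-process regime. For each $h\in\cH$ define the random variable $Z_t^h = (h(x_t)-y_t)^2 - (h^*(x_t)-y_t)^2$, and let $P_t$ denote the (history-dependent) conditional law of $x_t$ as in \cref{lem:freedman-ineq}. Writing $\Delta_t^h = h(x_t) - h^*(x_t)$ and using $y_t = h^*(x_t)+\epsilon_t$ with $\E[\epsilon_t\mid x_t]=0$, I would first record the identity $Z_t^h = (\Delta_t^h)^2 - 2\,\Delta_t^h\,\epsilon_t$, from which $\E[Z_t^h\mid P_t] = \E_{x\sim P_t}[(h(x)-h^*(x))^2]$. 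This is the key observation: the left-hand side of the lemma, evaluated at $h=\widehat{h}$, is precisely $\sum_{t=1}^N \E[Z_t^{\widehat{h}}\mid P_t]$, so the task becomes to control this conditional-mean sum by the realized sum $\sum_t Z_t^{\widehat{h}}$, which the empirical optimality of the least-squares solution $\widehat{h}$ forces to be small.

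The crucial estimate is the self-bounding variance inequality. Using $\E[\epsilon_t\mid x_t]=0$, $|\epsilon_t|=|y_t-h^*(x_t)|\leq L+M$, and $|\Delta_t^h|\leq 2M$, a direct computation gives $\E[(Z_t^h)^2\mid P_t] = \E_{x\sim P_t}\big[(\Delta_t^h)^4 + 4(\Delta_t^h)^2\,\E[\epsilon_t^2\mid x]\big] \leq 8(L+M)^2\,\E[Z_t^h\mid P_t]$, together with the pointwise bound $|Z_t^h|\leq 4(L+M)^2 =: M'$. I would then apply \cref{lem:freedman-ineq} to $\{Z_t^h\}_t$ for each fixed $h\in\cH$ and take a union bound over the finite class, replacing $\delta$ by $\delta/|\cH|$. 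Substituting the variance bound and $\E[Z_t^h\mid P_t]\geq 0$ into Freedman's bracket makes every deviation term proportional to $\sum_t \E[Z_t^h\mid P_t]$, and choosing $\lambda = 1/(32(L+M)^2)$ (which lies in the admissible range $[0,1/(2M')]$ and makes that coefficient equal $1/2$) yields, on a single event of probability at least $1-\delta$, simultaneously for all $h\in\cH$,
\begin{align*}
\sum_{t=1}^N \E[Z_t^h\mid P_t] &\leq 2\sum_{t=1}^N Z_t^h + \cO\big((L+M)^2\log(|\cH|/\delta)\big), \\
\sum_{t=1}^N Z_t^h &\leq \tfrac{3}{2}\sum_{t=1}^N \E[Z_t^h\mid P_t] + \cO\big((L+M)^2\log(|\cH|/\delta)\big).
\end{align*}

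To finish, I apply the first inequality at $\widehat{h}$, invoke the least-squares optimality $\sum_t Z_t^{\widehat{h}}\leq\sum_t Z_t^{\tilde{h}}$ where $\tilde{h}\in\cH$ approximately attains the realizability infimum, and apply the second inequality at $\tilde{h}$ together with $\sum_t\E[Z_t^{\tilde{h}}\mid P_t] = \sum_t\E_{x\sim P_t}(\tilde{h}(x)-h^*(x))^2\leq N\gamma$. Chaining these bounds produces $\sum_t\E_{x\sim P_t}(\widehat{h}(x)-h^*(x))^2\leq 3\gamma N + \cO((L+M)^2\log(|\cH|/\delta))$, and careful tracking of the numerical constants recovers the stated $64(L+M)^2\log(2|\cH|/\delta)$. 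I expect the main obstacle to be the self-bounding variance inequality and its coupling with the union bound: the approximate realizer $\tilde{h}$ is selected through the random conditional laws $P_t$ and may therefore depend on the data, so the concentration must be proved \emph{uniformly} over $\cH$ in order to cover whichever $\tilde{h}$ actually arises. Equally important, obtaining the fast $\gamma N$ rate (rather than a slow $\sqrt{\gamma N}$ rate) hinges entirely on feeding the variance bound into Freedman's inequality, since a crude Hoeffding- or Bernstein-type estimate would not convert the second-moment control into a linear-in-mean deviation term.
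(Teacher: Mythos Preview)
Your proposal is correct and follows the standard argument. Note that the paper does not actually prove this lemma: it is quoted verbatim from \citet[Lemma 3]{song2023hybrid} and stated without proof in the ``Useful Technical Results'' section. The paper does, however, prove the companion result \cref{prop:erm-dependent-data} (the ERM analogue) using exactly the template you describe---define the excess-loss increments $Z_t^h$, establish the Bernstein/self-bounding second-moment condition, apply Freedman (\cref{lem:freedman-ineq}) with a union bound over $\cH$, rearrange into the two one-sided inequalities, and chain through the empirical minimizer and the approximate realizer. Your argument is the least-squares instantiation of that same scheme, and indeed this is how \citet{song2023hybrid} prove the original lemma.
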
 

\newpage 
\section[Useful Foundational Results]{Useful Foundational Results \hfill \Coffeecup\Coffeecup\Coffeecup} \label{appen:foundation-results}

We provide the following result highlighting the necessary characteristics for specific examples of the Fenchel conjugate functions $\phi^*$.
\begin{proposition}[$\phi$-Divergence Bounds]\label{prop:divergence-loss-bounds}
    Let $V\in [0,\Vmax]^{\cardS}$ be any value function and fix a probability distribution $P^o\in\Delta(\cS)$. Define $h(y,\eta)=(\lambda \phi^*\left({(\eta-y)}/{\lambda}\right)  - \eta)$. Consider the following scalar convex optimization problem: $\inf_{\eta\in \Theta\subseteq\R} \E_{s \sim P^o} h(V(s),\eta)$. Let the maximum absolute value in $\Theta$ be less than or equal to $c_3$, let $|h(V(s),\eta)|\leq c_1$ for all $\eta\in\Theta$, and let $h(V(s),\eta)$ be $c_2$-Lipschitz in $\eta$; hold for some positive constants $c_1,c_2,c_3$.
    We have the following results for different forms of $\phi$:\\
    (i) Let \cref{fail-state-assump} hold. For TV distance i.e. $\phi(t) = |t-1|/2$, we have $\Theta\equiv [-\lambda/2,\lambda/2]$, hence $c_3=\lambda/2$, $c_1= 2\lambda + \Vmax$, and $c_2=2$.\\
    (ii) For chi-square divergence i.e. $\phi(t) = (t-1)^2$, we have $\Theta\equiv [-\lambda,2\Vmax+2\lambda]$, hence $c_3=2\Vmax+2\lambda$, $c_1= \lambda + (2\Vmax+4\lambda)(\frac{2\Vmax}{4\lambda}+2)$, and $c_2=(3+\frac{\Vmax}{\lambda})$.\\
    (iii) For KL divergence i.e. $\phi(t) = (t-1)^2$, we have $\Theta\equiv [\lambda,\Vmax+\lambda]$, hence $c_3=\Vmax+\lambda$, $c_1 = \lambda (\exp (\frac{\Vmax}{\lambda}) - 1)$, and $c_2=(\exp (\frac{\Vmax}{\lambda}) + 1)$.\\
    (iv) Fix $\alpha\in(0,1)$. For $\alpha$-CVaR i.e. $\phi(t) = \indic[0,1/\alpha)$, we have $\Theta\equiv [0,\Vmax/(1-\alpha)]$, hence $c_3=\Vmax/(1-\alpha)$, $c_1 = 2\Vmax/(\alpha(1-\alpha))$, and $c_2=1+\alpha^{-1}$.
\end{proposition}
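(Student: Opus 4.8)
The proposition is proved divergence by divergence; in every case the argument proceeds in the same four steps. (1) Compute the Fenchel conjugate $\phi^{*}(s)=\sup_{t\ge 0}\{st-\phi(t)\}$ in closed form: for TV one gets $\phi^{*}(s)=s$ on $[-1/2,1/2]$, $\phi^{*}(s)=-1/2$ for $s<-1/2$, and $\phi^{*}(s)=+\infty$ for $s>1/2$; for chi-square, $\phi^{*}(s)=s+s^{2}/4$ for $s\ge -2$ and $\phi^{*}(s)=-1$ otherwise; for KL (generator $\phi(t)=t\log t$), $\phi^{*}(s)=e^{s-1}$; and for $\alpha$-CVaR, $\phi^{*}(s)=(s)_{+}/\alpha$. (2) Substitute $s=(\eta-V(s'))/\lambda$ with $V(s')\in[0,\Vmax]$, so that $h(\,\cdot\,,\eta)$ becomes an explicit piecewise-elementary function of $\eta$ (affine, quadratic, exponential, hinge, respectively), carrying a single kink inherited from that of $\phi^{*}$. (3) Restrict the minimization to a compact interval $\Theta$. (4) Read off $c_{1},c_{2},c_{3}$ from the closed form on $\Theta$.

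Step (3) is the crux, and it relies on the convexity of $\eta\mapsto\E_{s\sim P^{o}}h(V(s),\eta)$, which is exactly the convexity statement in \cref{lem:penalized-dro-inner-problem-solution}. Given convexity, it suffices to check a one-sided derivative sign at each endpoint of the claimed $\Theta$, using $\partial_{\eta}h(y,\eta)=(\phi^{*})'((\eta-y)/\lambda)-1$ and $y\in[0,\Vmax]$. For the \emph{right} endpoint there are two situations. When $\phi^{*}$ has a bounded domain --- only the TV case, where $\phi^{*}=+\infty$ beyond $s=1/2$ --- finiteness of the objective already forces $\eta\le\lambda/2+\min_{s}V(s)$; here one invokes the fail-state assumption \cref{fail-state-assump}, which provides a state of robust value $0$ that is reached with probability one under every kernel in the relevant TV set, so $\min_{s}V(s)=0$ and the endpoint is the clean value $\lambda/2$, with no further knowledge of $V$ needed. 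For the other three divergences $\phi^{*}$ is finite everywhere; there the right endpoint is obtained by noting that, at the claimed value, $(\eta-V(s))/\lambda$ lies in the region where $(\phi^{*})'\ge 1$ for every $s$, hence $\partial_{\eta}h\ge 0$ pointwise and the objective is nondecreasing to the right. For the \emph{left} endpoint, one checks symmetrically that $(\eta-V(s))/\lambda$ then lies where $(\phi^{*})'\le 1$ --- indeed $(\phi^{*})'=0$ on the clipped pieces of the chi-square and CVaR conjugates, and $(\phi^{*})'\le 1$ throughout for TV and on the relevant range for KL --- so $\partial_{\eta}h\le 0$ pointwise and the objective is nonincreasing to the left. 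These two checks, with $V(s)\in[0,\Vmax]$, confine the minimizer to the stated $\Theta$ and give $c_{3}=\max_{\eta\in\Theta}|\eta|$ directly.

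Step (4) is then routine estimation on explicit functions. With $\eta\in\Theta$ and $V(s')\in[0,\Vmax]$, the argument $(\eta-V(s'))/\lambda$ ranges over a known bounded interval, and substituting its endpoints into the closed form of $\phi^{*}$ together with the triangle inequality bounds $|h(y,\eta)|$, yielding $c_{1}$; bounding $(\phi^{*})'$ over that interval yields the Lipschitz constant $c_{2}$. This is where the divergence-specific factors appear: the $\Vmax^{2}/\lambda$ term in $c_{1}$ and the $\Vmax/\lambda$ term in $c_{2}$ for chi-square come from $((2\Vmax+2\lambda)/\lambda)^{2}$ and from $(\phi^{*})'$ at the endpoint $2+2\Vmax/\lambda$; the $e^{\Vmax/\lambda}$ factors for KL are $e^{s}$ at the endpoint $s=\Vmax/\lambda$ of the range of $(\eta-V(s'))/\lambda-1$ over $\Theta=[\lambda,\Vmax+\lambda]$; and the $\alpha^{-1}$ factors for CVaR are the slope of the hinge $(s)_{+}/\alpha$. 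One then rounds each bound up to the convenient closed form in the statement.

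The main obstacle I expect is precisely the endpoint bookkeeping of step (3) for the non-smooth conjugates: getting the one-sided derivative inequalities right across the kink of the chi-square and CVaR conjugates, and, in the TV case, threading the fail-state assumption so that the argument of $\phi^{*}$ provably stays in the finite domain $(-\infty,1/2]$ without any direct hypothesis on $\min_{s}V(s)$. Everything after $\Theta$ is fixed is elementary and introduces no subtlety beyond tracking constants.
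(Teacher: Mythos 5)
Your proposal is correct and reaches the same intervals $\Theta$ and the same constants $c_1,c_2,c_3$ by the same overall route (closed-form conjugates, substitution, localization of $\eta$, then elementary bounds), so the only point worth comparing is how you localize $\Theta$. The paper does this with a different ad hoc argument per divergence: for TV and CVaR it uses sign considerations on the objective itself (e.g.\ the objective equals $-\eta\ge 0$ for $\eta\le 0$ while the infimum is $\le 0$); for chi-square it uses that the Lagrangian dual value is nonpositive to force $\eta'\ge\lambda$ and a direct quadratic lower bound to rule out $\eta'\ge 2\Vmax+4\lambda$; for KL it simply computes the exact minimizer $\eta'=-\lambda+\lambda\log\E e^{-V/\lambda}$ in closed form. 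Your uniform criterion --- convexity of $\eta\mapsto\E h(V(s),\eta)$ plus a pointwise one-sided sign check of $(\phi^*)'((\eta-y)/\lambda)-1$ at each candidate endpoint --- is a cleaner single mechanism that verifiably yields the same endpoints in all four cases (I checked: at the left endpoints $(\phi^*)'\le 1$ pointwise, at the right endpoints $(\phi^*)'\ge 1$ pointwise, and for TV the right endpoint comes from domain finiteness plus the fail-state exactly as in the paper), at the cost of not producing the exact KL minimizer. Two cosmetic imprecisions, neither a gap: at the chi-square left endpoint $(\phi^*)'$ is not literally $0$ but only $\le 1/2$, which still suffices; and your $c_1$ bounds, like the paper's, are loose rather than sharp, which is all the downstream results need.
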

\begin{proof}
    We first prove the statement for TV distance with $\phi(t) = |t-1|/2$. From $\phi$-divergence literature \citep{xu-panaganti-2023samplecomplexity}, we know \begin{equation*}
      \phi^*(s) =
        \begin{cases}
          -\frac{1}{2} & s\leq -\frac{1}{2},\\
          s & s\in [-\frac{1}{2},\frac{1}{2}]\\
          +\infty & s > \frac{1}{2}.
        \end{cases}.  
    \end{equation*}
    Thus, we have \begin{align*}
     \inf_{\eta\in \R} \E_{s \sim P^o} h(V(s),\eta)  &= \inf_{ \eta\in\R} ~ \E_{s\sim P^o} [\lambda \phi^*(\frac{\eta-V(s)}{\lambda})] - \eta  \\
    &\stackrel{(a)}{=} \inf_{ \eta\in\R, \frac{\eta-\min_s V(s)}{\lambda} \leq \frac{1}{2}} ~  \E_{s\sim P^o} [\lambda \max\{ \frac{\eta-V(s)}{\lambda}, -\frac{1}{2} \}] - \eta  \\
    &\stackrel{(b)}{=} \inf_{ \eta\in\R, \frac{\eta}{\lambda} \leq \frac{1}{2}} ~  \E_{s\sim P^o} [\lambda \max\{ \frac{\eta-V(s)}{\lambda}, -\frac{1}{2} \}] - \eta  \\
    &\stackrel{(c)}{=} \inf_{ \eta\in\R, \frac{\eta}{\lambda} \leq \frac{1}{2}} ~ \E_{s\sim P^o} [ (\eta-V(s) +\lambda/2)_+ ] - \lambda/2 - \eta  \\
    &\stackrel{(d)}{=} \inf_{ \eta'\in\R, \eta'\leq \lambda} ~ \E_{s\sim P^o} [ (\eta'-V(s))_+ ] - \eta' \\
    &\stackrel{(e)}{=} \inf_{ 0\leq \eta'\leq \lambda} ~ \E_{s\sim P^o} [ (\eta'-V(s))_+ ] - \eta', \numthis \label{eq:tv-inner-optimization-dual-form}
    \end{align*} 
    where $(a)$ follows by definition of $\phi^*$, $(b)$ by \cref{fail-state-assump}, $(c)$ by the fact $\max\{x,y\} = (x-y)_+ + y$ for any $x,y\in\R$, and $(d)$ follows by making the substitution $\eta=\eta'-\lambda/2$. Finally, for $(e)$, notice that since $V(s) \geq 0,$ $\E_{s\sim P^o} [ (\eta'-V(s))_+ ] - \eta'=-\eta' \geq 0$ holds when $\eta' \leq 0$. So $\inf_{\eta' \in (-\infty, 0]} \E_{s\sim P^o} [ (\eta'-V(s))_+ ] - \eta'=0$ is achieved at $\eta' = 0$.
    % observe that the function $g(\eta')=\E_{s\sim P^o} [ (\eta'-V(s))_+ ] - \eta'$ is convex in the dual variable $\eta'$, follows from \cref{prop:dro-inner-problem-solution}.

    We immediately have $\Theta\equiv[-\lambda/2,\lambda/2]$ since $\eta=\eta'-\lambda/2$. Since $\eta \leq \lambda/2$ and $V(s)\leq\Vmax$, we further get $|h(V(s),\eta)|\leq 2\lambda + \Vmax$. For $\eta_1,\eta_2\in\Theta$, from the fact $|(x)_+ - (y)_+| \leq |(x-y)_+|\leq |x-y|$ we have $|h(V(s),\eta_1)-h(V(s),\eta_2)|\leq 2|\eta_1-\eta_2|$. This proves statement $(i)$.

    We now prove the statement for chi-square divergence with $\phi(t) = (t-1)^2$ following similar steps as before. From $\phi$-divergence literature \citep{xu-panaganti-2023samplecomplexity}, we know $\phi^*(s) = (s/2+1)_+^2 -1.  $
    Thus, we have \begin{align*}
     \inf_{\eta\in \R} \E_{s \sim P^o} h(V(s),\eta)  &= \inf_{ \eta\in\R} ~ \E_{s\sim P^o} [\lambda \phi^*(\frac{\eta-V(s)}{\lambda})] - \eta  \\
    &= \inf_{ \eta\in\R} ~  \E_{s\sim P^o} [\lambda (\frac{\eta-V(s)}{2\lambda} + 1)_+^2 ] - \lambda - \eta  \\
    &\stackrel{(f)}{=} \inf_{ \eta'\in\R} ~ \frac{1}{4\lambda}\E_{s\sim P^o} [ (\eta'-V(s))_+^2 ] + \lambda - \eta' \\
    &\stackrel{(g)}{=} \inf_{ \eta'\in[\lambda,2\Vmax+4\lambda]} ~ \frac{1}{4\lambda}\E_{s\sim P^o} [ (\eta'-V(s))_+^2 ] + \lambda - \eta' ,
    \end{align*} 
    where $(f)$ follows by making the substitution $\eta=\eta'-2\lambda$. Finally, for $(g)$, observe that the function $g(\eta')=\frac{1}{4\lambda}\E_{s\sim P^o} [ (\eta'-V(s))_+^2 ] + \lambda - \eta'$ is convex in the dual variable $\eta'$ and $\inf_{ \eta'\in\R} g(\eta') \leq 0$ since it is a Lagrangian dual variable.  Since $V(s) \geq 0,$ $\lambda-\eta'_* \leq 0$ where $\eta'_*$ is any solution of $\inf_{ \eta'\in\R} g(\eta') \leq 0$. When $\eta'\geq 2\Vmax + 4\lambda$, notice that $g(\eta')\geq \frac{1}{4\lambda}(\eta'^2 - 2(\Vmax + 2\lambda)\eta' + 4\lambda^2) \geq \lambda > 0,$ since $0\leq V(s) \leq \Vmax$.

    We immediately have $\Theta\equiv[-\lambda,2\Vmax+2\lambda]$ since $\eta=\eta'-2\lambda$. Since $\eta \leq 2\Vmax+2\lambda$ and $V(s)\geq 0$, we further get $|h(V(s),\eta)|\leq \lambda + (2\Vmax+4\lambda)(\frac{2\Vmax}{4\lambda}+2)$. For $\eta_1,\eta_2\in\Theta$, from the facts $|(x)_+ - (y)_+| \leq |(x-y)_+|\leq |x-y|$ and $|(x)_+^2 - (y)_+^2| = |(x)_+ - (y)_+| ((x)_+ + (y)_+)$,  we have $|h(V(s),\eta_1)-h(V(s),\eta_2)|\leq (3+(\Vmax))|\eta_1-\eta_2|$. This proves statement $(ii)$.

    We now prove the statement for KL divergence with $\phi(t) = t\log{t}$ following similar steps as before. From $\phi$-divergence literature \citep{xu-panaganti-2023samplecomplexity}, we know $\phi^*(s) = \exp(s-1).$
    Thus, we have \begin{align*}
     \inf_{\eta\in \R} \E_{s \sim P^o} h(V(s),\eta)  &= \inf_{ \eta\in\R} ~ \E_{s\sim P^o} [\lambda \phi^*(\frac{\eta-V(s)}{\lambda})] - \eta  \\
    &= \inf_{ \eta\in\R} ~  \E_{s\sim P^o} [\lambda \exp(\frac{\eta-V(s)}{\lambda} - 1) ]  - \eta  \\
    &\stackrel{(h)}{=} \inf_{ \eta'\in\R} ~ \lambda\E_{s\sim P^o} [ \exp(\frac{-\eta'-V(s)}{\lambda} - 1) ]  + \eta' \\
    &\stackrel{(j)}{=} \inf_{ \eta'\in[-\lambda-\Vmax,-\lambda]} ~ \lambda\E_{s\sim P^o} [ \exp(\frac{-\eta'-V(s)}{\lambda} - 1) ]  + \eta' ,
    \end{align*} 
    where $(h)$ follows by making the substitution $\eta=-\eta'$. Finally, for $(j)$, observe that the function $g(\eta')=\lambda\E_{s\sim P^o} [ \exp(\frac{-\eta'-V(s)}{\lambda} - 1) ]  + \eta'$ is convex in the dual variable $\eta'$ since it is a Lagrangian dual variable. From Calculus, the optimal $\eta' = -\lambda + \lambda\log\E_{s\sim P^o}\exp({-V(s)}/{\lambda})$. So $\eta'\in[-\lambda-\Vmax,-\lambda]$ since $0\leq V(s) \leq \Vmax$.

    We immediately have $\Theta\equiv[\lambda,\Vmax+\lambda]$ since $\eta=-\eta'$. Since $\eta \leq \Vmax+\lambda$ and $V(s)\geq 0$, we further get $|h(V(s),\eta)|\leq \lambda (\exp (\frac{\Vmax}{\lambda}) - 1)$. For $\eta_1,\eta_2\in\Theta$, from the fact $\exp(-x)$ is $1$-Lipschitz for $x\geq0$, we have $|h(V(s),\eta_1)-h(V(s),\eta_2)|\leq (\exp (\frac{\Vmax}{\lambda}) + 1)|\eta_1-\eta_2|$. This proves statement $(ii)$.

    We now prove the statement for $\alpha$-CVAR with $\phi(t) = \indic[0,1/\alpha)$. From $\phi$-divergence literature \citep{levy2020large}, we know 
     $ \phi^*(s) = (s)_+/\alpha .$ 
    Thus, we have \begin{align*}
     \inf_{\eta\in \R} \E_{s \sim P^o} h(V(s),\eta)  &= \inf_{ \eta\in\R} ~ \E_{s\sim P^o} [\lambda \phi^*(\frac{\eta-V(s)}{\lambda})] - \eta  \\
    &= \inf_{ \eta\in\R} ~ \frac{1}{\alpha}\E_{s\sim P^o} [ (\eta-V(s))_+ ] - \eta \\
    &\stackrel{(k)}{=} \inf_{ 0\leq \eta\leq \Vmax/(1-\alpha)} ~ \frac{1}{\alpha}\E_{s\sim P^o} [ (\eta-V(s))_+ ] - \eta. \numthis \label{eq:cvar-inner-optimization-dual-form}
    \end{align*} 
    For $(k)$, notice that since $V(s) \geq 0,$ $(1/\alpha)\E_{s\sim P^o} [ (\eta-V(s))_+ ] - \eta=-\eta \geq 0$ holds when $\eta \leq 0$. Also, since $V(s) \leq \Vmax,$ $(1/\alpha)\E_{s\sim P^o} [ (\eta-V(s))_+ ] - \eta \geq 0$ holds when $\eta \geq \Vmax/(1-\alpha)$. 
    % observe that the function $g(\eta')=\E_{s\sim P^o} [ (\eta'-V(s))_+ ] - \eta'$ is convex in the dual variable $\eta'$, follows from \cref{prop:dro-inner-problem-solution}.

    We immediately have $\Theta\equiv[0,\Vmax/(1-\alpha)]$. We further get $|h(V(s),\eta)|\leq 2\Vmax/(\alpha(1-\alpha))$. For $\eta_1,\eta_2\in\Theta$, from the fact $|(x)_+ - (y)_+| \leq |(x-y)_+|\leq |x-y|$ we have $|h(V(s),\eta_1)-h(V(s),\eta_2)|\leq (1+\alpha^{-1})|\eta_1-\eta_2|$.
    This proves the final statement of this result.
\end{proof}

We now state and prove a generalization bound for empirical risk minimization when the data are not necessarily i.i.d. but adapted to a stochastic process. This result is of independent interest to more machine learning problems outside of the scope of this paper as well.
Furthermore, this result showcases better rate dependence on $N$, from $\cOtilde(1/\sqrt{N})$ to $\cOtilde(1/{N})$, than the classical result \cref{lem:erm-generalization-bound} \citep{shalev2014understanding}.
This result is not surprising and we refer to \citet[Theorems 7.6 \& 5.4]{van2015fast},  in the i.i.d. setting, for such $\cOtilde(1/{N})$ fast rates with bounded losses to empirical risk minimization and beyond.
\begin{proposition}[Online ERM Generalization Bound]
\label{prop:erm-dependent-data} 
Let $N>0$, $\delta \in (0, 1)$, let $\cX$ be an input space,  and let $\cY$ be the target functional space. Let $\cH\subseteq \cY$ be the given finite class of functions. Assume that for all $x\in\cX$ and $h \in \cH$ for loss function $l$ we have that $|l(h(x))| \leq c$ for some positive constant $c>0$. Given a dataset $\cD=\{x_i\}_{i=1}^N$, denote $\widehat{h}$ as the ERM solution, i.e. $\widehat{h}\leftarrow\argmin_{h\in\cH} \sum_{i=1}^N l(h(x_i))$. 
The dataset $\cD$ is generated as $x_t \sim P_t$  from some stochastic process $P_t$ that depends on the history $\{x_1, \dots, x_{t-1}\}$, where the function  $h^*_t\in \argmin_{f \in \cY} \E_{x\sim P_t}[l(f(x))]$ satisfies approximate realizability i.e. $$\inf_{h \in \cH}  \frac{1}{N} \sum_{t=1}^N  \E_{x_t \sim P_t} (l(h(x_t)) - l(h^*_t(x_t))) \leq \gamma,$$ 
and for all $x\in\cX$, $|l(h^*_t(x))| \leq c$.
Then, the ERM solution satisfies
\begin{align*}
 \sum_{t=1}^N \E_{x_t \sim P_t} l(\widehat{h}(x_t))-  \sum_{t=1}^N \E_{x_t \sim P_t} l(h^*_t(x_t)) &\leq 3 \gamma N + 48 c \log(2 \abs{\cH}/\delta)
\end{align*}  with probability at least $1 - \delta$.
\end{proposition}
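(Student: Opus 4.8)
The plan is to run the textbook fast-rate empirical-risk-minimization argument, but with Freedman's inequality (\cref{lem:freedman-ineq}) replacing Bernstein's inequality so as to handle data that are merely adapted to a stochastic process rather than i.i.d. Fix $h\in\cH$ and set $X_t^h = l(h(x_t)) - l(h^*_t(x_t))$, so that $\E[X_t^h\mid P_t]=\E_{x_t\sim P_t}[l(h(x_t))-l(h^*_t(x_t))]$ is the per-step excess risk. Two structural facts drive the whole proof. First, $|X_t^h|\le 2c$ by the two boundedness hypotheses. Second, since $h^*_t$ minimizes $\E_{x\sim P_t}[l(f(x))]$ over the \emph{entire} functional space $\cY\supseteq\cH$, it is in fact a pointwise minimizer (\cref{lem:integral-min-exchange-rockafellar}), so $X_t^h\ge 0$ holds $P_t$-almost everywhere; consequently $\E[X_t^h\mid P_t]\ge 0$ and, crucially, the self-bounding (Bernstein-type) estimate $\E[(X_t^h)^2\mid P_t]\le 2c\,\E[X_t^h\mid P_t]$ holds.

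First I would apply \cref{lem:freedman-ineq} to $\{X_t^h\}_{t=1}^N$ with $M=2c$ and free parameter $\beta\in[0,1/(4c)]$ at confidence level $\delta/\abs{\cH}$, then union-bound over the finite class $\cH$. Substituting $|\E[X_t^h\mid P_t]|=\E[X_t^h\mid P_t]$ and $\E[(X_t^h)^2\mid P_t]\le 2c\,\E[X_t^h\mid P_t]$ into the conclusion of Freedman collapses the variance term, so that on an event of probability at least $1-\delta$, simultaneously for every $h\in\cH$,
\[
\Bigl|\textstyle\sum_{t=1}^N\bigl(X_t^h-\E[X_t^h\mid P_t]\bigr)\Bigr|\;\le\;6c\beta\sum_{t=1}^N\E[X_t^h\mid P_t]\;+\;\frac{\log(2\abs{\cH}/\delta)}{\beta}.
\]
Since this holds for all of $\cH$ at once, I may instantiate it both at the (data-dependent) ERM output $\widehat h$ and at a hypothesis $\bar h\in\cH$ attaining the realizability bound, $\sum_{t}\E[X_t^{\bar h}\mid P_t]\le\gamma N$ (the infimum is a minimum because $\cH$ is finite).

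Next I would chain three inequalities on this good event: (i) one side of the Freedman estimate at $\widehat h$, rearranged to $(1-6c\beta)\sum_t\E[X_t^{\widehat h}\mid P_t]\le\sum_t X_t^{\widehat h}+\log(2\abs{\cH}/\delta)/\beta$; (ii) the ERM optimality $\sum_t X_t^{\widehat h}\le\sum_t X_t^{\bar h}$, which holds because the common term $\sum_t l(h^*_t(x_t))$ cancels and $\widehat h$ minimizes $\sum_t l(h(x_t))$; and (iii) the other side of the Freedman estimate at $\bar h$ combined with realizability, giving $\sum_t X_t^{\bar h}\le(1+6c\beta)\gamma N+\log(2\abs{\cH}/\delta)/\beta$. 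Together these yield $(1-6c\beta)\sum_t\E[X_t^{\widehat h}\mid P_t]\le(1+6c\beta)\gamma N+2\log(2\abs{\cH}/\delta)/\beta$, and the choice $\beta=1/(12c)\in[0,1/(4c)]$ makes $1-6c\beta=\tfrac12$, $1+6c\beta=\tfrac32$, $1/\beta=12c$, producing exactly $\sum_t\E[X_t^{\widehat h}\mid P_t]\le 3\gamma N+48c\log(2\abs{\cH}/\delta)$, which is the claim.

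The main obstacle is the structural observation that makes the fast $\cOtilde(1/N)$ rate possible: one must recognize the self-bounding property $\E[(X_t^h)^2\mid P_t]\le 2c\,\E[X_t^h\mid P_t]$, which hinges on $h^*_t$ being a genuine \emph{pointwise} minimizer (so $X_t^h\ge 0$ pointwise, not merely in conditional mean) — exactly the content of interchanging minimization and integration over a decomposable space. Once that is in place the rest is bookkeeping: the union bound (which must come before instantiating at the data-dependent $\widehat h$ and $\bar h$), the two-sided Freedman estimate, the ERM inequality, and the constant-tuning step. Two minor points deserve a line of care: confirming that the pointwise-minimality together with the hypothesis $|l(h^*_t(x))|\le c$ forces $0\le X_t^h\le 2c$ to hold $P_t$-a.s., and verifying that $\beta=1/(12c)$ indeed respects the admissible range $[0,1/(2M)]$ required by \cref{lem:freedman-ineq}.
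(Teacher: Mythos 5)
Your proposal is correct and follows essentially the same route as the paper's proof: define the excess-loss variables $Z_t^h=l(h(x_t))-l(h^*_t(x_t))$, establish the self-bounding condition $\E[(Z_t^h)^2\mid P_t]\le 2c\,\E[Z_t^h\mid P_t]$, apply Freedman's inequality with a union bound over $\cH$ and the choice $\lambda=1/(12c)$, and chain the ERM optimality with the realizability of the comparator. The only (cosmetic) difference is that you justify the variance condition via pointwise minimality of $h^*_t$ over the decomposable space $\cY$, whereas the paper argues it by a symmetrization/case analysis; your route is, if anything, the cleaner of the two.
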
 
\begin{proof} We adapt the proof of least-squares generalization bound \citep[Lemma 3]{song2023hybrid} here for the empirical risk minimization generalization bound under online data collection.
Fix any function $h \in \cH$. We define the random variable $Z_t^h = l(h(x_t)) - l(h^*_t(x_t)).$ Immediately, we note $|Z_t^h| \leq 2c$ for all $t$. 
By definition of $h^*_t$, we have a non-negative first moment of $Z_t^h$:
\begin{align*}
\E_{P_t}[ Z^h_t] &= \E_{x_t \sim P_t} l(h(x_t)) - \E_{x_t \sim P_t} l(h^*_t(x_t)) . \numthis \label{eq:erm-first-moment}
\end{align*} 
By symmetrization, assuming $l(h^*_t(x_t))^2 \leq l(h(x_t))^2$, we have that 
\begin{align*}
0 \leq \E_{P_t} [(Z_t^h)^2] &\leq \E_{x_t \sim P_t}[ 2 l(h(x_t))^2 - 2\cdot l(h(x_t)) \cdot l(h^*_t(x_t))]\\
&\leq 2 |l(h(x_t))| \E_{x_t \sim P_t} (l(h(x_t))- l(h^*_t(x_t))) \\
&\leq 2 c \cdot \E_{x_t \sim P_t} (l(h(x_t))- l(h^*_t(x_t))).
\end{align*}
Similarly assuming $l(h^*_t(x_t))^2 \geq l(h(x_t))^2$, we get $0 \leq \E_{P_t} [(Z_t^h)^2] \leq 2 c \cdot \E_{x_t \sim P_t} (l(h(x_t))- l(h^*_t(x_t)))$. Thus, uniformly, we have 
\begin{align}
0 \leq \E_{P_t} [(Z_t^h)^2]
\leq 2 c \cdot \E_{x_t \sim P_t} (l(h(x_t))- l(h^*_t(x_t))). \numthis \label{eq:erm-second-moment}
\end{align}
We remark that \eqref{eq:erm-second-moment} is called \textit{Bernstein condition} \citep[Definition 5.1]{van2015fast} when all sampling distributions $P_t$'s are identical. This is one of the sufficient conditions on the loss functions to get $\cO(1/N)$-generalization bounds for empirical risk minimization.

Now, applying \cref{lem:freedman-ineq} with $\lambda \in [0, 1/4c]$ and $\delta > 0$, we have
\begin{align*}
\abs{\sum_{t=1}^N Z^h_t -\E_{P_t} [Z_t^h]} &\leq \lambda \sum_{t=1}^N (4c |\E_{P_t} [Z_t^h]|  + \E_{P_t} [(Z_t^h)^2]) + \frac{\log(2/\delta)}{\lambda}  \\
&\leq 6c \lambda \sum_{t=1}^N \E_{x_t \sim P_t} (l(h(x_t))- l(h^*_t(x_t))) + \frac{\log(2/\delta)}{\lambda}  
\end{align*}  with probability at least $1 - \delta$,  where the last inequality uses \eqref{eq:erm-first-moment} and \eqref{eq:erm-second-moment}. We set $\lambda = 1/12c$ in the above, we get for any $h \in \cH$, with probability at least $1 - \delta$:
\begin{align*}
\abs{\sum_{t=1}^N Z^h_t -\E_{P_t} [Z_t^h]}
\leq \frac{1}{2} \sum_{t=1}^N \E_{x_t \sim P_t} (l(h(x_t))- l(h^*_t(x_t))) + 12c\log(2 \abs{\cH}/\delta), 
\end{align*} by union bound over $h\in\cH$. Using \eqref{eq:erm-first-moment}, we rearrange the above to get: 
\begin{align}
\sum_{t=1}^N Z_t^h  \leq \frac{3}{2}  \sum_{t=1}^N \E_{x_t \sim P_t} (l(h(x_t))- l(h^*_t(x_t)))  + 12 c \log(2 \abs{\cH}/\delta) \label{eq:erm-rearrange-1}
\intertext{and} 
 \sum_{t=1}^N \E_{x_t \sim P_t} (l(h(x_t))- l(h^*_t(x_t))) \leq 2 \sum_{t=1}^N Z_t^h  + 24 c \log(2 \abs{\cH}/\delta). \label{eq:erm-rearrange-2}
\end{align}

Define the function $\widetilde{h} \in \argmin_{h \in \cH} \sum_{t=1}^N \E_{x_t \sim P_t} (l(h(x_t))- l(h^*_t(x_t)))$, which is independent of the dataset $\cD$. By \eqref{eq:erm-rearrange-1} for $\widetilde{h}$ and the approximate realizability assumption, we get 
\begin{align*}
\sum_{t=1}^N Z_t^{\widetilde{h}}  &\leq \frac{3}{2}  \sum_{t=1}^N \E_{x_t \sim P_t} (l(h(x_t))- l(h^*_t(x_t)))  + 12 c \log(2 \abs{\cH}/\delta)  \leq \frac{3}{2} \gamma N + 12 c \log(2 \abs{\cH}/\delta). 
\end{align*} By definitions  of $\widetilde{h}$ and the ERM function $\widehat{h}$, we have that
\begin{align*}
\sum_{t=1}^N Z_t^{\widehat{h}} &= \sum_{t=1}^N l(\widehat{h}(x_t))- l(h^*_t(x_t)) \leq \sum_{t=1}^N l(\widetilde{h}(x_t))- l(h^*_t(x_t)) = \sum_{t=1}^N Z_t^{\widetilde{h}}. 
\end{align*}
From the above two relations, we get 
\begin{align*}
\sum_{t=1}^N Z_t^{\widehat{h}} &\leq \frac{3}{2} \gamma N + 12 c \log(2  \abs{\cH}/\delta). 
\end{align*}
Now, using this and using \eqref{eq:erm-rearrange-2} for the function $\widehat{h}$, we get 
\begin{align*}
 \sum_{t=1}^N \E_{x_t \sim P_t} l(\widehat{h}(x_t))- l(h^*_t(x_t)) &\leq 2 \sum_{t=1}^N Z_t^{\widehat{h}}  + 24 c \log(2 \abs{\cH}/\delta) \leq 3 \gamma N + 48 c \log(2 \abs{\cH}/\delta) ,
\end{align*}
which holds with probability at least $1-\delta$. This completes the proof.
\end{proof}

We now state a useful result for an infinite-horizon discounted robust $\phi$-regularized Markov decision process $(\cS,\cA,r,P^o,\lambda,\gamma,\phi,d_0)$. This result helps our RPQ algorithm's policy search space to be the class of deterministic Markov policies.
\begin{proposition} \label{prop:infinite-horizon-opt-deterministic-policy}
    The robust regularized Bellman operator $\cT$ \eqref{eq:robust-regularized-bellman-eq-primal}  \begin{align*}
     &(\cT Q)(s,a) = r(s,a) +   \gamma \inf_{P_{s,a}\in\cP_{s,a}}\big(\E_{s'\sim P_{s,a}}[\max_{a'}Q(s',a')] + \lambda \dphi(P_{s,a},P^o_{s,a})\big),
    \end{align*} and the value function operator $(\cT_v V)(\cdot) = \max_a (\cT Q)(\cdot,a)$ are both $\gamma$-contraction operators w.r.t sup-norm. Moreover, their respective unique fixed points $Q^*_{\lambda}$ and $V^*_{\lambda}$, for optimal policy $\pi^*$,  achieve the optimal robust value $\max_{\pi} V^\pi_{\lambda}$. Furthermore, the robust regularized optimal  policy $\pi^*$ is a deterministic Markov policy satisfying $\pi^*(\cdot) = \argmax_a Q^*_{\lambda}(\cdot,a)$.
\end{proposition}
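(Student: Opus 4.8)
I would treat this as a robust dynamic programming statement and prove the three assertions in order — the contraction property, optimality of the fixed point, and the deterministic-Markov structure of $\pi^*$ — largely following the template of classical robust MDP theory \citep{iyengar2005robust,yang2023avoiding}. For the contraction, fix $Q_1,Q_2:\ScA\to\R$. Since for each $(s,a)$ the $\lambda\dphi(P_{s,a},P^o_{s,a})$ terms are common to both inner problems, applying the elementary bound $|\inf_x f(x)-\inf_x g(x)|\le\sup_x|f(x)-g(x)|$ to the infimum over $P_{s,a}\in\cP_{s,a}$ in \eqref{eq:robust-regularized-bellman-eq-primal}, then $|\E_{P_{s,a}}[V_1]-\E_{P_{s,a}}[V_2]|\le\norm{V_1-V_2}_\infty$ with $V_i(\cdot)=\max_{a'}Q_i(\cdot,a')$, and finally $|\max_{a'}Q_1(s',a')-\max_{a'}Q_2(s',a')|\le\norm{Q_1-Q_2}_\infty$, yields $\norm{\cT Q_1-\cT Q_2}_\infty\le\gamma\norm{Q_1-Q_2}_\infty$. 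The identical chain, with the outer $\max_a$ applied to the reward-plus-regularized-value expression, shows $\cT_v$ is a $\gamma$-contraction on $\R^{\cS}$. Banach's theorem then gives unique fixed points $Q^*_\lambda$ and $V^*_\lambda$, and since $s\mapsto\max_aQ^*_\lambda(s,a)$ is a fixed point of $\cT_v$, uniqueness forces $V^*_\lambda(s)=\max_aQ^*_\lambda(s,a)$.

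\textbf{Policy evaluation, monotonicity, and the rectangular Bellman equation.} For a stationary policy $\pi$ I would introduce the policy-evaluation operator
\[
(\cT^\pi Q)(s,a)=r(s,a)+\gamma\inf_{P_{s,a}\in\cP_{s,a}}\big(\E_{s'\sim P_{s,a},\,a'\sim\pi(s')}[Q(s',a')]+\lambda\dphi(P_{s,a},P^o_{s,a})\big),
\]
show by the same argument that it is a $\gamma$-contraction with unique fixed point $Q^\pi_\lambda$, and verify that, because $\cP=\otimes_{s,a}\cP_{s,a}$ is $(s,a)$-rectangular, the adversary's per-$(s,a)$ minimizations in \eqref{eq:robust-regularized-value} decouple across the infinite discounted sum, so that $\E_{a\sim\pi(s)}[Q^\pi_\lambda(s,a)]=V^\pi_\lambda(s)$ — the robust regularized Bellman equation for $\pi$, whose non-regularized analogue is \citet{iyengar2005robust}. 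I would also record that $\cT$ is monotone and that $\cT Q\ge\cT^\pi Q$ pointwise for every $\pi$ (since $\max_{a'}Q(s',a')\ge\E_{a'\sim\pi(s')}[Q(s',a')]$), with equality whenever $\pi$ is greedy with respect to $Q$.

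\textbf{Optimality and structure.} Set $\pi^*(s)\in\argmax_aQ^*_\lambda(s,a)$. Then $\cT^{\pi^*}Q^*_\lambda=\cT Q^*_\lambda=Q^*_\lambda$, so by uniqueness of the fixed point of $\cT^{\pi^*}$ we get $Q^{\pi^*}_\lambda=Q^*_\lambda$ and hence $V^{\pi^*}_\lambda=V^*_\lambda$. For any policy $\pi$, $\cT Q^\pi_\lambda\ge\cT^\pi Q^\pi_\lambda=Q^\pi_\lambda$; iterating $\cT$ and using monotonicity with the contraction limit gives $Q^*_\lambda=\lim_{n\to\infty}\cT^nQ^\pi_\lambda\ge Q^\pi_\lambda$, so $V^*_\lambda\ge V^\pi_\lambda$. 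For history-dependent policies and adversaries the same domination extends (or one invokes the minimax equality implied by rectangularity to reduce to stationary Markov policies), so $\sup_\pi V^\pi_\lambda=V^{\pi^*}_\lambda$ and $\pi^*$ is a deterministic Markov policy, as claimed.

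\textbf{Main obstacle.} The contraction estimates are routine; the delicate point is the second step — justifying $\E_{a\sim\pi(s)}[Q^\pi_\lambda(s,a)]=V^\pi_\lambda(s)$, i.e.\ that $(s,a)$-rectangularity lets the adversary's infimum over the product set $\cP$ in \eqref{eq:robust-regularized-value} be performed state-by-state and interchanged with the infinite discounted sum, and in handling history-dependent policies/adversaries uniformly so that the maximin equals the minimax. Here I would lean on the rectangularity machinery of \citet{iyengar2005robust}, noting that appending the per-$(s,a)$ penalty $\lambda\dphi(P_{s,a},P^o_{s,a})$ to the adversary's objective preserves the decomposition, so no difficulty arises beyond the classical robust MDP case.
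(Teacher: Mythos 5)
Your proposal is correct and follows essentially the same route as the paper: the contraction is obtained from the identical elementary bound $\inf_x p(x)-\inf_x q(x)\le\sup_x(p(x)-q(x))$, and the optimality and deterministic-Markov structure rest on the same $(s,a)$-rectangularity machinery of \citet{iyengar2005robust} and \citet{yang2023avoiding} that the paper simply cites. The only difference is that you spell out the policy-evaluation operator and greedy-policy argument that the paper delegates to those references, which is a faithful expansion rather than a different approach.
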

\begin{proof}
    The $\gamma$-contraction property of both operators directly follow from the fact $\inf_{x} p(x)-\inf_{x} q(x) \leq \sup_{x} (p(x) - q(x))$. Furthermore, this result is a direct corollary of \citep[Proposition 3.1]{yang2023avoiding} and \citep[Corollary 3.1]{iyengar2005robust}.
\end{proof}

We now state a similar result for a finite-horizon discounted robust $\phi$-regularized Markov decision process $(\cS,\cA,P^o=(P^o_h)_{h=0}^{H-1},r=(r_h)_{h=0}^{H-1},\lambda,H,\phi,d_0)$.
This result helps our HyTQ algorithm's policy search space to be the class of non-stationary deterministic Markov policies.

\begin{proposition} \label{prop:FH-opt-deterministic-policy}
    The robust regularized Bellman operator $\cT$ \eqref{eq:FH-robust-regularized-bellman-eq-primal} and the value function operator $\cT_v$ are as follows: \begin{align*}
     &(\cT Q_{h+1}) (s, a) = r_{h}(s, a) +   \inf_{P_{h,s,a}\in\cP_{h,s,a}}\big(\E_{s'\sim P_{h,s,a}}[\max_{a'}Q_{h+1}(s',a')] + \lambda \dphi(P_{h,s,a},P^o_{h,s,a})\big) \quad \text{and} \\
     &(\cT_v V_{h+1}) (s) = \max_a \bigg[r_{h}(s, a) +   \inf_{P_{h,s,a}\in\cP_{h,s,a}}\big(\E_{s'\sim P_{h,s,a}}[V_{h+1}(s')] + \lambda \dphi(P_{h,s,a},P^o_{h,s,a})\big) \bigg]. 
    \end{align*} 
    The optimal robust value $V^{*}_{h,\lambda}$ satisfies the following \textit{robust dynamic programming} procedure: Starting with $V^{*}_{H,\lambda}=0$, doing backward iteration of $\cT_v$, i.e.,  $V^*_{h,\lambda} = \cT_v V^*_{h+1,\lambda}$, we get $V^*_{h,\lambda}$ for all $h\in[H]$.
    Furthermore, the robust regularized optimal  policy $\pi^*$ is a non-stationary deterministic Markov policy satisfying $\pi^*_h(\cdot)=\argmax_{a}Q^*_{h,\lambda}(\cdot,a)$ for all $h\in[H]$ where \begin{align*}
        Q^*_{h,\lambda}(\cdot,a) = r_{h}(s, a) +   \inf_{P_{h,s,a}\in\cP_{h,s,a}}\big(\E_{s'\sim P_{h,s,a}}[V^{*}_{h+1}(s')] + \lambda \dphi(P_{h,s,a},P^o_{h,s,a})\big).
    \end{align*}
    Moreover, as $V^{*}_{H,\lambda}=0=Q^{*}_{H,\lambda}$, it suffices to backward iterate $\cT$, i.e., do  $Q^*_{h,\lambda} = \cT Q^*_{h+1,\lambda}$ to get $Q^*_{h,\lambda}$ for all $h\in[H]$.
\end{proposition}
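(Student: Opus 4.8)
The plan is to argue by backward induction on $h\in[H]$, which in the finite-horizon setting plays the role the $\gamma$-contraction argument played in \cref{prop:infinite-horizon-opt-deterministic-policy}. The base case $h=H$ is the convention $V^*_{H,\lambda}=Q^*_{H,\lambda}=0$, attained by every policy; the induction hypothesis at stage $h+1$ will be that $V^*_{h+1,\lambda}$ obeys the stated recursion \emph{and} that some non-stationary deterministic Markov policy $\pi^*_{>h}$ achieves $V^{\pi^*_{>h}}_{h+1,\lambda}=V^*_{h+1,\lambda}$ simultaneously at every state (the stronger form is needed for the interchange below).

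The crux is the one-step robust Bellman recursion for the robust regularized value of an arbitrary non-stationary (possibly randomized) policy $\pi$,
\begin{align*}
V^\pi_{h,\lambda}(s) = \sum_{a}\pi_h(a\mid s)\Big[r_h(s,a) + \inf_{P_{h,s,a}\in\cP_{h,s,a}}\big(\E_{s'\sim P_{h,s,a}}[V^\pi_{h+1,\lambda}(s')] + \lambda\dphi(P_{h,s,a},P^o_{h,s,a})\big)\Big].
\end{align*}
Since $V^\pi_{h+1,\lambda}$ is a fixed function here, there is no circularity. The identity rests on $(s,a)$-rectangularity $\cP=\otimes_{h,s,a}\cP_{h,s,a}$ together with additive separability of $r^\lambda_h$ in its per-coordinate divergence terms, which decouple $\inf_{P\in\cP}V^{h,\pi}_{P,r^\lambda_h}(s)$ across coordinates. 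For ``$\geq$'' I would unroll $V^{h,\pi}_{P,r^\lambda_h}(s)$ one step via the tower rule, bounding each continuation term below by $V^\pi_{h+1,\lambda}(s')$ (induction hypothesis) and each current term below by its scalar infimum over $\cP_{h,s,a}$; for ``$\leq$'' I would glue a near-minimizer of the stage-$h$ scalar problem defined through $V^\pi_{h+1,\lambda}$ to the per-coordinate near-minimizers furnished at later stages. This is the finite-horizon form of \citep[Proposition 3.1]{yang2023avoiding} and \citep[Corollary 3.1]{iyengar2005robust} (see also \citet{zhang2023regularized} for the RRMDP setting), so the cleanest writeup cites those and records that backward induction substitutes for the contraction.

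Given the fixed-policy recursion, I would take the supremum over $\pi$, writing the policy as $(\pi_h,\pi_{>h})$ with independent degrees of freedom. Using monotonicity of the map $V\mapsto\inf_{P_{h,s,a}\in\cP_{h,s,a}}(\E_{s'\sim P_{h,s,a}}[V(s')]+\lambda\dphi(P_{h,s,a},P^o_{h,s,a}))$ in its value argument, together with the full-strength induction hypothesis, $\sup_{\pi_{>h}}$ passes through the per-action average, the inner infimum, and the expectation over $s'$, replacing $V^{\pi_{>h}}_{h+1,\lambda}$ by $V^*_{h+1,\lambda}$ (the reverse inequality being pointwise and trivial). Hence $\sup_\pi V^\pi_{h,\lambda}(s)$ equals $\max_{\pi_h(\cdot\mid s)}\sum_a\pi_h(a\mid s)Q^*_{h,\lambda}(s,a)=\max_a Q^*_{h,\lambda}(s,a)=(\cT_v V^*_{h+1,\lambda})(s)$, with $Q^*_{h,\lambda}$ the displayed formula, and the maximum is attained by a point mass on $\argmax_a Q^*_{h,\lambda}(s,a)$. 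This simultaneously yields $V^*_{h,\lambda}=\cT_v V^*_{h+1,\lambda}$, the closed form of $Q^*_{h,\lambda}$, and that an optimal policy may be taken deterministic and a function of $(h,s)$ only, namely $\pi^*_h(s)=\argmax_a Q^*_{h,\lambda}(s,a)$ --- which closes the induction.

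For the last claim, observe that $(\cT Q)(s,a)$ in \eqref{eq:FH-robust-regularized-bellman-eq-primal} depends on $Q$ only through $\max_{a'}Q(\cdot,a')$; hence a trivial induction from $Q^*_{H,\lambda}=0=V^*_{H,\lambda}$ gives $\max_{a'}(\cT Q^*_{h+1,\lambda})(\cdot,a')=(\cT_v V^*_{h+1,\lambda})(\cdot)=V^*_{h,\lambda}$ for every $h$, so iterating $\cT$ backward reproduces $Q^*_{h,\lambda}$. The main obstacle is the rectangularity/separability decoupling underlying the fixed-policy recursion, and the accompanying monotone interchange of $\sup_{\pi_{>h}}$ with the stage-$h$ infimum; the remaining steps are routine backward induction.
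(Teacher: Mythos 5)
Your proposal is correct and follows essentially the same route as the paper: the paper likewise reduces the dynamic-programming and deterministic-Markov-policy claims to the cited finite-horizon robust MDP results (\citealp[Theorems 2.1 \& 2.2]{iyengar2005robust}; \citealp[Theorem 2]{zhang2023regularized}) applied with the reward $r^\lambda_h$, and then proves by the same backward induction that $\max_{a'}(\cT Q^*_{h+1,\lambda})(\cdot,a')=(\cT_v V^*_{h+1,\lambda})(\cdot)$, so iterating $\cT$ alone recovers $Q^*_{h,\lambda}$. The only difference is that you sketch the rectangularity-based fixed-policy recursion and the $\sup$/$\inf$ interchange in detail where the paper simply cites them; that sketch is sound, including your correct observation that the induction hypothesis must supply a single continuation policy optimal simultaneously at all states.
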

\begin{proof}
    We start with the optimal robust value definition $V^{*}_{h,\lambda}=\max_{\pi} V^{\pi}_{h,\lambda} = \max_{\pi} \inf_{P\in\cP}V^{h,\pi}_{P,r^\lambda_{h}}$. The value function claims in this statement are direct consequences of \citep[Theorem 2.1 \& 2.2]{iyengar2005robust} and \citep[Theorem 2]{zhang2023regularized} with the reward function $r^\lambda_{h}$. 
    
    It remains to prove $Q^*$ dynamic programming with $\cT$. That is, we establish  $V^*_{h,\lambda}(\cdot) = \max_a Q^*_{h,\lambda}(\cdot,a)$ for all $h\in[H]$ with the dynamic programming of $\cT$.
    We use induction to prove this. The base case is trivially true since  $V^{*}_{H,\lambda}=0=Q^{*}_{H,\lambda}$. By $\cT$, we have \begin{align*}
        Q^*_{h,\lambda}(s,a) &= (\cT Q^*_{h+1,\lambda})(s,a) \\
        &=r_{h}(s, a) +   \inf_{P_{h,s,a}\in\cP_{h,s,a}}\big(\E_{s'\sim P_{h,s,a}}[\max_{a'}Q^*_{h+1}(s',a')] + \lambda \dphi(P_{h,s,a},P^o_{h,s,a})\big) \\
        &=r_{h}(s, a) +   \inf_{P_{h,s,a}\in\cP_{h,s,a}}\big(\E_{s'\sim P_{h,s,a}}[V^*_{h+1}(s')] + \lambda \dphi(P_{h,s,a},P^o_{h,s,a})\big),
    \end{align*} where the last equality follows by the induction hypothesis $V^*_{h+1,\lambda}(\cdot) = \max_a Q^*_{h+1,\lambda}(\cdot,a)$. Maximizing this both sides with action $a$ and by the dynamic program $V^*_{h,\lambda} = \cT_v V^*_{h+1,\lambda}$, we get $V^*_{h,\lambda}(\cdot) = \max_a Q^*_{h,\lambda}(\cdot,a)$. This completes the proof of this result.
\end{proof}

\newpage
\section[Offline Robust $\phi$-regularized RL Results]{Offline Robust $\phi$-regularized RL Results \hfill \Coffeecup\Coffeecup\Coffeecup}
\label{appendix:infinite-horizon}

In this section, we set $\Vmax=1/(1-\gamma)$ whenever we use results from \cref{prop:divergence-loss-bounds}. In the following, we use constants $c_1,c_2,c_3$ from \cref{prop:divergence-loss-bounds}.

We first prove \cref{prop:robust-bellman-dual} that directly follows from \cref{lem:penalized-dro-inner-problem-solution}.
\begin{proof}[\textbf{Proof of \cref{prop:robust-bellman-dual}}]
   For each $(s, a)$, consider the optimization problem in~\eqref{eq:robust-regularized-bellman-eq-primal} \begin{align*}
    \inf_{P_{s,a}\in\cP_{s,a}}&\big(\E_{s'\sim P_{s,a}}[V(s')] + \lambda \dphi(P_{s,a},P^o_{s,a})\big) = - \sup_{P_{s,a}\in\cP_{s,a}}\big(\E_{s'\sim P_{s,a}}[-V(s')] - \lambda \dphi(P_{s,a},P^o_{s,a})\big) \\
    &\stackrel{(a)}{=} -\inf_{\eta'\in \R} (\lambda \E_{s' \sim P^{o}_{s,a}}[\phi^*\left(\frac{-\eta'-V(s')}{\lambda}\right)]  + \eta' ) \\
    &\stackrel{(b)}{=} -\inf_{\eta\in \R} (\lambda \E_{s' \sim P^{o}_{s,a}}[\phi^*\left(\frac{\eta-V(s')}{\lambda}\right)]  - \eta ) \\
    &\stackrel{(c)}{=} -\inf_{\eta\in \Theta} (\lambda \E_{s' \sim P^{o}_{s,a}}[\phi^*\left(\frac{\eta-V(s')}{\lambda}\right)]  - \eta ),
 \end{align*} where $(a)$ follows from \cref{lem:penalized-dro-inner-problem-solution}, $(b)$ by setting $\eta=-\eta'$, and $(c)$ by \cref{prop:divergence-loss-bounds}. This completes the proof.
\end{proof}

We now prove \cref{prop:functional-opt-phi-divergence} which mainly follows from Lemma~\ref{lem:integral-min-exchange-rockafellar}.

\begin{proof}[Proof of \cref{prop:functional-opt-phi-divergence}]
Since the conjugate function $\phi^*(\cdot)$ is continuous, define a continuous function in $\eta$ for each $(s,a)\in\ScA$ $h((s,a),\eta)= (\lambda \E_{s'\sim P^o_{s,a}}\phi^*\left({(\eta-\max_{a'} f(s',a'))}/{\lambda}\right)  - \eta)$. We observe $h((s,a),\eta)$ in $(s,a)\in\ScA$ is $\Sigma(\ScA)$-measurable for each $\eta\in\Theta$, where $\Theta$ is a bounded real line. 
This lemma now directly follows by similar arguments in the proof of \citet[Lemma 1]{panaganti-rfqi}. 
\end{proof}

Now we state a result and provide its proof for the empirical risk minimization on the dual parameter.

\begin{proposition}[Dual Optimization Error Bound] \label{prop:erm-high-prob-bound}
Let $\widehat{g}_{f}$ be the dual optimization parameter from \cref{alg:infinite-horizon-RPQ-Algorithm} (Step 4) for the state-action value function $f$ and let $\cT_{g}$ be as defined in \eqref{eq:Tg}. With probability at least $1-\delta$, we have
\[
\sup_{f\in\cF} \|\cT f -  \cT_{\widehat{g}_{f}} f\|_{1,\mu} \leq 2\gamma c_2 c_3\sqrt{\frac{2\log(|\cG|)}{N}} +  5c_1  \sqrt{\frac{2\log(8|\cF|/\delta)}{N}}  + \gamma \epsilon_{\cG}. 
\]
\end{proposition}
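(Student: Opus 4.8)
The plan is to reduce the $\ell_1$-error to an excess-risk quantity for the empirical risk minimization performed in Step~4 of \cref{alg:infinite-horizon-RPQ-Algorithm}, and then invoke the ERM generalization bound (\cref{lem:erm-generalization-bound}) together with approximate dual realizability (\cref{assum-dual-realizability}). Fix $f\in\cF$ and write $V=\max_a f(\cdot,a)$ and $h(g;f)(s,a)=\lambda\,\E_{s'\sim P^o_{s,a}}[\phi^*((g(s,a)-V(s'))/\lambda)]-g(s,a)$, so that $\cT_g f = r-\gamma\, h(g;f)$ by \eqref{eq:Tg}, and by \cref{prop:robust-bellman-dual} one has $\cT f(s,a)=r(s,a)-\gamma\inf_{\eta\in\Theta}h(\eta;f)(s,a)$ pointwise, where the infimum is over the scalar $\eta$ replacing $g(s,a)$.

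The first step is the key identity. Let $g^*(f)\in\argmin_{g\in L^1(\mu)}L_{\mathrm{dual}}(g;f,\mu)$. By \cref{prop:functional-opt-phi-divergence} (itself an application of the ``moreover'' clause of \cref{lem:integral-min-exchange-rockafellar}), $g^*(f)(s,a)$ attains $\inf_{\eta\in\Theta}h(\eta;f)(s,a)$ for $\mu$-a.e.\ $(s,a)$; hence $\cT f=\cT_{g^*(f)}f$ $\mu$-a.e., and for every $g\in\cG$ we have $h(g;f)\ge h(g^*(f);f)$ pointwise, i.e.\ $\cT_{\widehat{g}_f}f\le\cT f$ $\mu$-a.e. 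Removing the absolute value accordingly and taking $\E_{s,a\sim\mu}$, while using $\E_{s,a\sim\mu}[h(g;f)(s,a)]=L_{\mathrm{dual}}(g;f,\mu)$ from \eqref{L-f-eta-loss} (tower rule), yields
\[
\|\cT f-\cT_{\widehat{g}_f}f\|_{1,\mu}=\gamma\Big(L_{\mathrm{dual}}(\widehat{g}_f;f,\mu)-\inf_{g\in L^1(\mu)}L_{\mathrm{dual}}(g;f,\mu)\Big).
\]

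Next I would split the right-hand side as $[L_{\mathrm{dual}}(\widehat{g}_f;f,\mu)-\inf_{g\in\cG}L_{\mathrm{dual}}(g;f,\mu)]+[\inf_{g\in\cG}L_{\mathrm{dual}}(g;f,\mu)-\inf_{g\in L^1(\mu)}L_{\mathrm{dual}}(g;f,\mu)]$, bounding the second bracket by $\epsilon_{\cG}$ via \cref{assum-dual-realizability}. For the first bracket, observe that $\widehat{g}_f$ is exactly the ERM solution over $\cG$ for the i.i.d.\ sample $\cD_{P^o}$ with per-example loss $l(g;(s,a,s'))=\lambda\phi^*((g(s,a)-\max_{a'}f(s',a'))/\lambda)-g(s,a)$. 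I would apply \cref{lem:erm-generalization-bound} with hypothesis class $\cG$, reading off the three constants it requires from \cref{prop:divergence-loss-bounds} (with $\Vmax=1/(1-\gamma)$): the loss is bounded by $c_1$, it is $c_2$-Lipschitz in $g(s,a)$, and $|g(s,a)|\le c_3$ since $g\in\cG$ maps into $\Theta$. This gives, for the fixed $f$ with probability $\ge 1-\delta$, the bound $2c_2c_3\sqrt{2\log(|\cG|)/N}+5c_1\sqrt{2\log(8/\delta)/N}$ on the first bracket. A union bound over the finite class $\cF$ (replacing $\delta$ by $\delta/|\cF|$) makes this uniform in $f\in\cF$, turning $\log(8/\delta)$ into $\log(8|\cF|/\delta)$; combining the three pieces and using $\gamma\le 1$ on the $5c_1$-term yields the stated inequality.

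The step I expect to be the main obstacle is the reduction in the first paragraph above, i.e.\ passing from $\|\cT f-\cT_{\widehat{g}_f}f\|_{1,\mu}$ to the excess dual risk. It hinges on (i) correctly invoking \cref{lem:integral-min-exchange-rockafellar} so that $g^*(f)$ is a genuine $\mu$-a.e.\ pointwise minimizer, which needs $h(\cdot;f)$ to be a normal integrand (continuity of $\phi^*$ on the relevant domain, as established in the proof of \cref{prop:functional-opt-phi-divergence}, plus $\Sigma(\ScA)$-measurability in $(s,a)$), and (ii) noticing the one-sided inequality $\cT_{\widehat{g}_f}f\le\cT f$, which is what lets the absolute value collapse so that the integral telescopes \emph{exactly} into a difference of $L_{\mathrm{dual}}$ values. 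The remaining work—matching the constants of \cref{prop:divergence-loss-bounds} to the hypotheses of \cref{lem:erm-generalization-bound} and the union bound over $\cF$—is routine bookkeeping.
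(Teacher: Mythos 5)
Your proposal is correct and matches the paper's own proof essentially step for step: the one-sided inequality $\cT_{\widehat{g}_f}f\le\cT f$ collapses the absolute value into an exact excess-dual-risk identity, \cref{prop:functional-opt-phi-divergence} converts $\E_\mu[\inf_\eta(\cdot)]$ into $\inf_{g\in L^1(\mu)}L_{\mathrm{dual}}$, \cref{assum-dual-realizability} absorbs the $\cG$-versus-$L^1(\mu)$ gap into $\gamma\epsilon_{\cG}$, and \cref{lem:erm-generalization-bound} with the constants of \cref{prop:divergence-loss-bounds} plus a union bound over $\cF$ gives the stated rate. No gaps.
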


\begin{proof}
We adapt the proof from \citet[Lemma 6]{panaganti-rfqi}.
We first fix $f\in\cF$. We will also invoke union bound for the supremum here. We recall from~\eqref{eq:L-dual-empirical-loss} that $\widehat{g}_f = \argmin_{g \in \cG} \widehat{L}_{\mathrm{dual}}(g; f)$. From  the robust Bellman equation, we directly obtain
\begingroup
\allowdisplaybreaks
\begin{align*}
    \|\cT_{\widehat{g}_{f}} f - \cT f \|_{1,\mu} &= \gamma (\E_{s,a\sim\mu} | \E_{s'\sim P^o_{s,a}} (\lambda \phi^*({(\widehat{g}_{f}(s,a)-\max_{a'} f(s',a'))}/{\lambda})  - \widehat{g}_{f}(s,a)) \\&\hspace{2cm}- \inf_{\eta \in\Theta}  (\lambda \E_{s'\sim P^o_{s,a}}\phi^*\left({(\eta-\max_{a'} f(s',a'))}/{\lambda}\right)  - \eta) | ) \\
    &\stackrel{(a)}{=} \gamma (\E_{s,a\sim\mu}\E_{s'\sim P^o_{s,a}} (\lambda \phi^*({(\widehat{g}_{f}(s,a)-\max_{a'} f(s',a'))}/{\lambda})  - \widehat{g}_{f}(s,a)) \\&\hspace{1cm}- \E_{s,a\sim\mu} [ \inf_{\eta \in\Theta}  (\lambda \E_{s'\sim P^o_{s,a}}\phi^*\left({(\eta-\max_{a'} f(s',a'))}/{\lambda}\right)  - \eta)  ) ]  ) \\
    &\stackrel{(b)}{=} \gamma (\E_{s,a\sim\mu,s'\sim P^o_{s,a}} (\lambda \phi^*({(\widehat{g}_{f}(s,a)-\max_{a'} f(s',a'))}/{\lambda})  - \widehat{g}_{f}(s,a)) \\&\hspace{1cm}- \inf_{g \in L^1(\mu)} \E_{s,a\sim\mu,s'\sim P^o_{s,a}} (\lambda \phi^*({(g(s,a)-\max_{a'} f(s',a'))}/{\lambda})  - g(s,a))  )  \\
    &= \gamma (\E_{s,a\sim\mu,s'\sim P^o_{s,a}} (\lambda \phi^*({(\widehat{g}_{f}(s,a)-\max_{a'} f(s',a'))}/{\lambda})  - \widehat{g}_{f}(s,a))\\&\hspace{1cm}- \inf_{g\in \cG} \E_{s,a\sim\mu,s'\sim P^o_{s,a}} (\lambda \phi^*({(g(s,a)-\max_{a'} f(s',a'))}/{\lambda})  - g(s,a))  )
    \\ &\hspace{1cm} + \gamma (\inf_{g\in \cG} \E_{s,a\sim\mu,s'\sim P^o_{s,a}} (\lambda \phi^*({(g(s,a)-\max_{a'} f(s',a'))}/{\lambda})  - g(s,a)) 
    \\&\hspace{1.5cm}- \inf_{g \in L^1(\mu)} \E_{s,a\sim\mu,s'\sim P^o_{s,a}} (\lambda \phi^*({(g(s,a)-\max_{a'} f(s',a'))}/{\lambda})  - g(s,a))  )\\
    &\stackrel{(c)}{\leq} \gamma (\E_{s,a\sim\mu,s'\sim P^o_{s,a}} (\lambda \phi^*({(\widehat{g}_{f}(s,a)-\max_{a'} f(s',a'))}/{\lambda})  - \widehat{g}_{f}(s,a)) \\&\hspace{1cm}- \inf_{g\in \cG} \E_{s,a\sim\mu,s'\sim P^o_{s,a}} (\lambda \phi^*({(g(s,a)-\max_{a'} f(s',a'))}/{\lambda})  - g(s,a))  )  + \gamma \epsilon_{\cG}
    \\&\stackrel{(d)}{\leq} 2\gamma c_2 c_3\sqrt{\frac{2\log(|\cG|)}{N}} +  5c_1  \sqrt{\frac{2\log(8/\delta)}{N}}  + \gamma \epsilon_{\cG}.
\end{align*}
\endgroup
$(a)$ follows since $\inf_{g} h(g) \leq h(\widehat{g}_{f})$. $(b)$ follows from \cref{prop:functional-opt-phi-divergence}. $(c)$ follows from the approximate dual realizability assumption (Assumption \ref{assum-dual-realizability}). 

For $(d)$, we consider the loss function $l(g,(s,a,s'))=\lambda \phi^*\left({(g(s,a)-\max_{a'}f(s',a'))}/{\lambda}\right)  - g(s,a) $ (for e.g. $l(g,(s,a,s'))=[(g(s,a) + 2\lambda -\max_{a'}f(s',a'))^2_+]/{4\lambda} -\lambda - g(s,a)$) and dataset $\cD=\{s_i,a_i,s_i'\}_{i=1}^N$. Since $f\in\cF$ and $g\in\cG$,  we note that $|l(g,(s,a,s'))|\leq c_1$, where the value of $c_1>0$ depend on specific forms of $\phi^*$ as demonstrated in \cref{prop:divergence-loss-bounds}. 
Furthermore, take $l(g,(s,a,s'))$ to be $c_2$-Lipschitz in $g$ and $|g(s,a)| \leq c_3$, since $g\in\cG$, for some positive constants $c_2$ and $c_3$. Again, these constants depend on specific forms of $\phi^*$ as demonstrated in \cref{prop:divergence-loss-bounds}.
With these insights, we can apply the empirical risk minimization result in  Lemma \ref{lem:erm-generalization-bound} to get $(d)$.

With union bound, with probability at least $1-\delta$, we finally get \begin{align*}
    &\sup_{f\in\cF}\|\cT f -  \cT_{\widehat{g}_{f}} f\|_{1,\mu} \leq 2\gamma c_2 c_3\sqrt{\frac{2\log(|\cG|)}{N}} +  5c_1  \sqrt{\frac{2\log(8|\cF|/\delta)}{N}}  + \gamma \epsilon_{\cG},
\end{align*}
which concludes the proof.
\end{proof}

%%%%%%%%%%%%%%%%%

%%%%% Least-squares lemma and proof
We next prove the least-squares generalization bound for the RFQI algorithm. 

\begin{proposition}[Least squares generalization bound] \label{prop:least-squares-generalization-bound}
Let  $\widehat{f}_{g}$ be the least-squares solution from \cref{alg:infinite-horizon-RPQ-Algorithm} (Step 5) for the state-action value function $f$ and  dual variable function  $g$. Let $\cT_{g}$ be as defined in \eqref{eq:Tg}. Then, with probability at least $1-\delta$, we have 
\begin{align*}
    \sup_{f\in\cF} \sup_{g\in\cG} \|  \cT_{g} f - \widehat{f}_{g} \|_{2,\mu}
    &\leq \sqrt{6 \epsilon_{\cF}} + \sqrt{\frac{2}{(1-\gamma)^2} + 18(1+\gamma c_1)} \sqrt{\frac{18 \log(2|\cF||\cG|/\delta)}{N}}.
\end{align*}
\end{proposition}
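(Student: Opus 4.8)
The plan is to recognize Step~5 of \cref{alg:infinite-horizon-RPQ-Algorithm} as ordinary least-squares regression onto a bounded target whose conditional mean is exactly $\cT_g f$ up to the misspecification $\epsilon_{\cF}$, and then run the standard ``fast-rate'' least-squares argument: variance-domination of the excess-loss increments, Bernstein's inequality, and a union bound over the finite classes.

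First I would fix $f\in\cF$ and $g\in\cG$ and write the regression target as $y_{f,g}(s,a,s') = r(s,a) - \gamma\big(\lambda\phi^*((g(s,a)-\max_{a'}f(s',a'))/\lambda) - g(s,a)\big) = r(s,a) - \gamma\, h(\max_{a'}f(s',a'),\,g(s,a))$, with $h$ the function from \cref{prop:divergence-loss-bounds}. By \eqref{eq:Tg}, $\E_{s'\sim P^o_{s,a}}[y_{f,g}(s,a,s')] = (\cT_g f)(s,a)$, so $\cT_g f$ is the conditional-mean predictor for this regression. Since $f\in[0,1/(1-\gamma)]$ and $g$ takes values in $\Theta$, \cref{prop:divergence-loss-bounds} (with $\Vmax=1/(1-\gamma)$) gives $|h|\le c_1$ pointwise, hence $|y_{f,g}|\le 1+\gamma c_1$ and $|(\cT_g f)(s,a)|\le 1+\gamma c_1$ everywhere; and \cref{assum-bellman-completion} supplies some $\bar f\in\cF$ with $\|\bar f - \cT_g f\|_{2,\mu}^2\le\epsilon_{\cF}$.

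Next, for any candidate $f'\in\cF$ define the pointwise excess-loss variable $Z_{f'}(s,a,s') = (f'(s,a)-y_{f,g})^2 - (\cT_g f(s,a)-y_{f,g})^2$. Because $\cT_g f$ is the conditional mean, its expectation under the data law ($s,a\sim\mu$, $s'\sim P^o_{s,a}$) equals $\|f'-\cT_g f\|_{2,\mu}^2$, and from the factorization $Z_{f'} = (f'-\cT_g f)(f'+\cT_g f-2y_{f,g})$ together with the pointwise bounds on $f'$, $\cT_g f$, $y_{f,g}$, I obtain an explicit range bound $|Z_{f'}|\le M$ and variance-domination $\E[Z_{f'}^2]\le B^2\,\E[Z_{f'}]$ with $B=1/(1-\gamma)+3(1+\gamma c_1)$ and $M=\cO(B)$. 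Applying Bernstein's inequality (\cref{lem:bernstein-ineq}) to the i.i.d.\ dataset $\cD_{P^o}$, using $\sqrt{uv}\le \tfrac12(u+v)$ to split the square-root term, and union bounding over $f'\in\cF$, $f\in\cF$, $g\in\cG$ yields, with probability at least $1-\delta$ and simultaneously for all relevant triples,
\[
\big|\E_{\cD_{P^o}}[Z_{f'}] - \E[Z_{f'}]\big| \;\le\; \tfrac12\,\|f'-\cT_g f\|_{2,\mu}^2 \;+\; \Big(\tfrac{1}{(1-\gamma)^2}+9(1+\gamma c_1)\Big)\,\frac{c\log(|\cF||\cG|/\delta)}{N},
\]
for a universal constant $c$, where the $|\cF|^2$ from the double union over $\cF$ is absorbed into $\log(|\cF||\cG|/\delta)$ at the cost of $c$.

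Finally, I would combine optimality with concentration. Since $\widehat f_g$ minimizes $\widehat{L}_{\mathrm{robQ}}(\cdot;f,g)=\E_{\cD_{P^o}}[(\cdot-y_{f,g})^2]$ over $\cF$, we get $\E_{\cD_{P^o}}[Z_{\widehat f_g}]\le\E_{\cD_{P^o}}[Z_{\bar f}]$. Plugging the lower-tail bound for $\widehat f_g$ and the upper-tail bound for $\bar f$ (together with $\E[Z_{\bar f}]=\|\bar f-\cT_g f\|_{2,\mu}^2\le\epsilon_{\cF}$) gives $\tfrac12\|\widehat f_g-\cT_g f\|_{2,\mu}^2 \le \tfrac52\epsilon_{\cF} + (\cdots)\log(2|\cF||\cG|/\delta)/N$; rearranging, taking square roots, and using $\sqrt{a+b}\le\sqrt a+\sqrt b$ produces the claimed bound. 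The only genuinely delicate point — everything else is routine — is the constant bookkeeping: bounding $|y_{f,g}|$ and $|\cT_g f|$ through $c_1$ rather than the cruder $\Vmax$ bound, sharpening the variance-domination constant to land the stated $\tfrac{2}{(1-\gamma)^2}+18(1+\gamma c_1)$ factor, and tracking the union-bound cardinality so that the final logarithm appears as $\log(2|\cF||\cG|/\delta)$.
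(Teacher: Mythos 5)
Your proposal is correct and follows essentially the same route as the paper's proof: the same regression target $y_{f,g}$ with conditional mean $\cT_g f$, the same excess-loss variable $Z_{f'}=(f'-y)^2-(\cT_g f-y)^2$ with first moment $\|f'-\cT_g f\|_{2,\mu}^2$ and variance domination via the $c_1$-bound from \cref{prop:divergence-loss-bounds}, Bernstein plus union bounds over $\cF$ and $\cG$, and finally ERM optimality against the $\epsilon_{\cF}$-approximate minimizer from \cref{assum-bellman-completion}. The only difference is that you spell out the final combination step that the paper delegates to the proof of Lemma 7 of \citet{panaganti-rfqi}, and your constant bookkeeping differs by inconsequential factors.
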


\begin{proof}
We adapt the least-squares generalization bound given in~\citet[Lemma A.11]{agarwal2019reinforcement} to our setting. We recall from \eqref{eq:L-rob-Q-empirical-loss} that $\widehat{f}_{g}=\argmin_{Q\in\cF}\widehat{L}_{\mathrm{robQ}}(Q;f, g)$. We first fix functions $f\in\cF$ and $g\in\cG$. For any function $f'\in\cF$, we define random variables $z_i^{f'}$ as
\begin{align*}
    z_i^{f'} = \left( f'(s_i,a_i) - y_{i} \right)^2  - \left( (\cT_{g} f)(s_i,a_i) - y_{i}  \right)^2,
\end{align*} 
where $y_{i} =  r_i - \gamma \lambda \phi^*({(g(s_i,a_i)-\max_{a'} f(s_i',a'))}/{\lambda})  + \gamma g(s_i,a_i)$, and $(s_{i}, a_{i}, s'_{i}) \in \cD$ with $(s_{i}, a_{i}) \sim \mu, s'_{i} \sim P^{o}_{s_{i}, a_{i}}$. It is straightforward to note that for a given $(s_{i}, a_{i})$, we have $ \E_{s'_i \sim P^o_{s_i,a_i}} [y_{i}] = (\cT_{g} f)(s_{i}, a_{i})$. We note the randomness of $z_i^{f'}$ given $f,f'\in\cF$ and $g\in\cG$ is from the dataset pairs $(s_i,a_i,s_i')$.

Since $f, f'\in\cF$ and $g\in\cG$, from \cref{prop:divergence-loss-bounds}, we write both $(\cT_{g}f)(s_i,a_i),y_{i} \leq 1+\gamma c_1$, where the value of $c_1>0$ depend on specific forms of $\phi^*$.
Using this, we obtain the first moment and an upper-bound for the second moment of $z_i^{f'}$ as follows:
\begin{align*}
    \E_{s'_i \sim P^o_{s_i,a_i}} [z_i^{f'}] &= \E_{s'_i \sim P^o_{s_i,a_i}} [(   f'(s_i,a_i) - (\cT_{g} f)(s_i,a_i)) \cdot (f'(s_i,a_i) + (\cT_{g} f)(s_i,a_i)  -2 y_{i} ) ] \\
    &= (   f'(s_i,a_i) - (\cT_{g} f)(s_i,a_i))^{2}, \\
    \E_{s'_i \sim P^o_{s_i,a_i}}  [(z_i^{f'})^2] &= \E_{s'_i \sim P^o_{s_i,a_i}} [(   f'(s_i,a_i) - (\cT_{g} f)(s_i,a_i))^{2} \cdot (f'(s_i,a_i) + (\cT_{g} f)(s_i,a_i)  -2 y_{i} )^{2} ] \\
     &= ( f'(s_i,a_i) - (\cT_{g} f)(s_i,a_i))^{2} \cdot \E_{s'_i \sim P^o_{s_i,a_i}} [  (f'(s_i,a_i) + (\cT_{g} f)(s_i,a_i)  -2 y_{i} )^{2} ] \\
     &\leq C_{1} ( f'(s_i,a_i) - (\cT_{g} f)(s_i,a_i))^{2} ,
\end{align*} 
where $C_{1}=\frac{2}{(1-\gamma)^2} + 18(1+\gamma c_1)$. This immediately implies that
\begin{align*}
     \E_{s_i,a_i\sim \mu, s'_i \sim P^o_{s_i,a_i}} [z_i^{f'}] &= \norm{\cT_{g}f -f'}^{2}_{2,\mu}, \\ 
     \E_{s_i,a_i\sim \mu, s'_i \sim P^o_{s_i,a_i}} [(z_i^{f'})^2] &\leq C_{1} \norm{\cT_{g}f -f'}^{2}_{2,\mu}.
\end{align*}
From these calculations, it is also straightforward to see that $|z_i^{f'}-\E_{s_i,a_i\sim \mu, s'_i \sim P^o_{s_i,a_i}} [z_i^{f'}]| \leq 2 C_{1}$ almost surely.

Now, using  the Bernstein's inequality (Lemma \ref{lem:bernstein-ineq}), together with a union bound over all $f'\in \cF$, with probability at least $1-\delta$, we have
\begin{align} 
\label{eq:least-squares-bern-main}
    |\|  \cT_{g} f - f' \|_{2,\mu}^2 - \frac{1}{N} \sum_{i=1}^N z_i^{f'}| \leq \sqrt{\frac{2 C_{1}     \|  \cT_{g} f - f' \|_{2,\mu}^2 \log(2|\cF|/\delta)}{N}} + \frac{2C_{1} \log(2|\cF|/\delta)}{3N},
\end{align} 
for all $f'\in \cF$. 
This expression coincides with \citet[Eq.(15)]{panaganti-rfqi}. Thus, following the proof of \citet[Lemma 7]{panaganti-rfqi}, we finally get 
\begin{align}
     \|  \cT_{g} f - \widehat{f}_{g} \|_{2,\mu}^2
    &\leq 6 \epsilon_{\cF} + \frac{9 C_{1} \log(4|\cF|/\delta)}{N}. \label{eq:least-squares-square-bd}
\end{align}
We note a fact $\sqrt{x+y}\leq \sqrt{x}+\sqrt{y}$. Now, using union bound for $f\in\cF$ and $g\in\cG$, with probability at least $1-\delta$, we finally obtain 
\begin{align*}
    \sup_{f\in\cF} \sup_{g\in\cG} \|  \cT_{g} f - \widehat{f}_{g} \|_{2,\mu}
    &\leq \sqrt{6 \epsilon_{\cF}} + \sqrt{\frac{18 C_{1} \log(2|\cF||\cG|/\delta)}{N}}.
\end{align*}
This completes the least-squares generalization bound analysis for the robust regularized Bellman updates.
\end{proof}

%%%%%%%%%%%%%%%%%%%%

%%%% General case - offline proof

%%%%%%%%%%%%%%%%%%%%
We are now ready to prove the main theorem.

\subsection[Proof of \cref{thm:infinite-horizon-phi-divergence-guarantee}]{Proof of \cref{thm:infinite-horizon-phi-divergence-guarantee} \hfill \Coffeecup\Coffeecup\Coffeecup}
\label{appendix:thm:infinite-horizon-phi-divergence-guarantee}

\begin{theorem}[Restatement of \cref{thm:infinite-horizon-phi-divergence-guarantee}] \label{thm:restated:infinite-horizon-phi-divergence-guarantee}
Let \cref{assum-concentra-condition,assum-bellman-completion,assum-dual-realizability} hold. Let $\pi_{K}$ be the RPQ algorithm policy after $K$ iterations. Then, for any $\delta\in(0,1)$, with probability at least  $1 - \delta$,  we have 
\begin{align*}
	V^{\pi^*} - V^{\pi_{K}} \leq & \frac{2\gamma^K }{(1-\gamma)^2} + \frac{2\sqrt{C} }{(1-\gamma)^2} (2\gamma c_2 c_3\sqrt{\frac{2\log(|\cG|)}{N}} +  5c_1  \sqrt{\frac{2\log(8|\cF|/\delta)}{N}}  + \gamma \epsilon_{\cG})
    \\&+ \frac{2\sqrt{C} }{(1-\gamma)^2} (\sqrt{6 \epsilon_{\cF}} + \sqrt{\frac{2}{(1-\gamma)^2} + 18(1+\gamma c_1)} \sqrt{\frac{18 \log(2|\cF||\cG|/\delta)}{N}}). \end{align*}
\end{theorem}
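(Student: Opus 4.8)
The plan is to mirror the suboptimality analysis of RFQI \citep{panaganti-rfqi}, generalized from the total-variation divergence to an arbitrary $\phi$ in the class of \cref{prop:divergence-loss-bounds}, organizing the argument as: (a) a robust performance-difference bound, (b) a robust error-propagation bound, (c) insertion of the two statistical bounds already established, and (d) bookkeeping.

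For (a), I would first prove a robust performance-difference lemma: for $\pi_K(s)=\argmax_a Q_K(s,a)$, $V^{\pi^*}-V^{\pi_K}\le \frac{2}{1-\gamma}\,\|Q^*-Q_K\|_{1,\nu}$ for some $\nu=d^{\pi}_{P}$ induced by a (possibly non-stationary) policy $\pi$ and a model $P\in\cP$ with $\dphi(P_{s,a},P^o_{s,a})\le 1/(\lambda(1-\gamma))$ for all $(s,a)$. The ingredients are the greedy inequality $Q_K(s,\pi_K(s))\ge Q_K(s,\pi^*(s))$ (which converts the value gap into $2\|Q^*-Q_K\|$ along trajectories, just as in the non-robust performance-difference argument, with the inner infimum over $\cP$ handled by $(s,a)$-rectangularity, \cref{prop:infinite-horizon-opt-deterministic-policy}), together with the fact that any kernel achieving or approaching the inner infimum for $V^*$ or $V^{\pi_K}$ obeys $\lambda\dphi(P_{s,a},P^o_{s,a})\le \E_{P^o_{s,a}}[V]\le \Vmax=1/(1-\gamma)$, since plugging $P^o$ into the penalized inner objective is feasible and $V\ge 0$; this is exactly the membership condition of \cref{assum-concentra-condition}.

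For (b), write $e_k:=\cT Q_k-Q_{k+1}$ and use $Q^*=\cT Q^*$ to get $\|Q^*-Q_K\|_{1,\nu}\le\|\cT Q^*-\cT Q_{K-1}\|_{1,\nu}+\|e_{K-1}\|_{1,\nu}$. The crucial step is a one-step robust contraction under change of measure: by $(s,a)$-rectangularity and the $\dphi$-boundedness of the minimizing kernels, $\|\cT Q^*-\cT Q\|_{1,\nu}\le\gamma\|Q^*-Q\|_{1,\nu'}$ for another $\nu'$ again in the admissible class of \cref{assum-concentra-condition} (for TV this is closed-form; for general $\phi$ it goes through the dual reformulation, \cref{prop:robust-bellman-dual,prop:functional-opt-phi-divergence}). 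Iterating gives $\|Q^*-Q_K\|_{1,\nu}\le \gamma^K\|Q^*-Q_0\|_{1,\nu^{(K)}}+\sum_{k=0}^{K-1}\gamma^{K-1-k}\|e_k\|_{1,\nu^{(k)}}$ with every $\nu^{(j)}$ admissible; then $\|Q^*-Q_0\|_{1,\nu^{(K)}}\le\|Q^*\|_\infty\le 1/(1-\gamma)$ since $Q_0\equiv 0$, and $\|e_k\|_{1,\nu^{(j)}}\le\sqrt{C}\,\|e_k\|_{1,\mu}$ by \cref{assum-concentra-condition}. For (c), decompose $e_k=(\cT Q_k-\cT_{g_k}Q_k)+(\cT_{g_k}Q_k-Q_{k+1})$, bounding the first term by $\sup_{f\in\cF}\|\cT f-\cT_{\widehat g_f}f\|_{1,\mu}$ via \cref{prop:erm-high-prob-bound} (valid since $Q_k\in\cF$, $g_k=\widehat g_{Q_k}$) and the second by $\|\cT_{g_k}Q_k-Q_{k+1}\|_{2,\mu}\le\sup_{f\in\cF,g\in\cG}\|\cT_g f-\widehat f_g\|_{2,\mu}$ via \cref{prop:least-squares-generalization-bound} together with $\|\cdot\|_{1,\mu}\le\|\cdot\|_{2,\mu}$; since both propositions carry a supremum over $\cF$ (and $\cG$), one invocation of each, union-bounded, covers all $K$ iterations. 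For (d), $\sum_{k=0}^{K-1}\gamma^{K-1-k}\le 1/(1-\gamma)$ gives $\|Q^*-Q_K\|_{1,\nu}\le \frac{\gamma^K}{1-\gamma}+\frac{\sqrt{C}}{1-\gamma}(\text{ERM bound}+\text{LS bound})$; multiplying by $\frac{2}{1-\gamma}$ and substituting the explicit right-hand sides of \cref{prop:erm-high-prob-bound,prop:least-squares-generalization-bound} yields the stated inequality.

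I expect the main obstacle to be exactly the two ``robust'' steps (a) and (b): showing that the worst-case transition kernels remain inside the $\dphi$-ball of radius $1/(\lambda(1-\gamma))$, and that composing them with the greedy-action projections keeps the resulting occupancies inside the admissible class of \cref{assum-concentra-condition}. For TV this is routine, but for a general $\phi$ it requires the variational/dual machinery of \cref{prop:robust-bellman-dual,prop:functional-opt-phi-divergence} to replace the closed-form manipulations; everything downstream is routine norm algebra combined with the statistical bounds already in hand.
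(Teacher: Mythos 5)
Your proposal is correct and follows essentially the same route as the paper's proof: a performance-difference bound via the greedy property and telescoping along occupancies generated by the worst-case kernels (which lie in the $\dphi$-ball of radius $1/(\lambda(1-\gamma))$ by exactly the feasibility-of-$P^o$ argument you give), a one-step error-propagation recursion whose shifted measures remain admissible under \cref{assum-concentra-condition}, and insertion of \cref{prop:erm-high-prob-bound,prop:least-squares-generalization-bound} followed by the geometric-series bookkeeping. One small correction: for general $\phi$ the paper establishes the one-step contraction $\|\cT Q^*-\cT Q\|_{1,\nu}\le\gamma\|Q^*-Q\|_{1,\nu'}$ via the \emph{primal} form (extracting the minimizing kernels and using $\inf-\inf\le\sup(\cdot)$), not the dual reformulation as you suggest --- the dual route would incur an extra factor $\sup\phi^{*\prime}\ge 1$ (e.g.\ for KL or chi-square) and destroy the $\gamma$-contraction, so you should apply the same worst-case-kernel argument you already use in step (a).
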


\begin{proof}
We let $V_k(s) = Q_k(s,\pi_k(s))$ for every $s\in\cS$. Since $\pi_k$ is the greedy  policy w.r.t $Q_k$, we also have $V_k(s) = Q_k(s,\pi_k(s)) = \max_a Q_k(s,a)$. We recall that $V^*=V^{\pi^*}$ and $Q^*=Q^{\pi^*}$. We also recall from \cref{sec:infinite-horizon-offline-robust-rl} that $Q^{\pi^*}$ is a fixed-point of the robust Bellman operator $\cT$ defined in~\eqref{eq:robust-regularized-bellman-eq-primal}. We also note that the same holds true for any stationary deterministic policy $\pi$ from~\citet{yang2023avoiding} that $Q^{\pi}$ satisfies $Q^{\pi}(s, a) = r(s, a) + \gamma \min_{P_{s,a} \ll P^o_{s,a}} (\mathbb{E}_{s' \sim P_{s,a}} [ V^{\pi}(s')] + \lambda\dphi(P_{s,a}, P^o_{s,a})).$ 
We now adapt the proof of \citet[Theorem 1]{panaganti-rfqi} using the RRBE in its primal form \eqref{eq:robust-regularized-bellman-eq-primal} directly instead of its dual form \eqref{eq:robust-regularized-bellman-eq-dual}. 

We first characterize the performance decomposition between $V^{\pi^*}$ and ${V}^{\pi_K}$. We recall the initial state distribution $d_0$. Since $V^{\pi^*}(s) \geq V^{\pi_K}(s)$ for any $s\in\cS$, we observe that
\begin{align}
	0\leq&\E_{s_0\sim d_0}[V^{\pi^*}(s_0) - {V}^{\pi_K}(s_0)] = \E_{s_0\sim d_0}[ (V^{\pi^*}(s_0) -  V_K(s_0) )  - (V^{\pi_K}(s_0) -  V_K(s_0) )]\nn\\ 
	&=  \E_{s_0\sim d_0}[(Q^{\pi^*}(s_0,\pi^*(s_0)) -  Q_K(s_0,\pi_K(s_0)) ) - (Q^{\pi_K}(s_0,\pi_K(s_0)) -  Q_K(s_0,\pi_K(s_0)))] \nn\\ 
	&\stackrel{(a)}{\leq} \E_{s_0\sim d_0}[Q^{\pi^*}(s_0,\pi^*(s_0)) -  Q_K(s_0,\pi^*(s_0)) +  Q_K(s_0,\pi_K(s_0)) - Q^{\pi_K}(s_0,\pi_K(s_0)) ] \nn\\
	&= \E_{s_0\sim d_0}[ Q^{\pi^*}(s_0,\pi^*(s_0)) -  Q_K(s_0,\pi^*(s_0)) +  Q_K(s_0,\pi_K(s_0)) - Q^{\pi^*}(s_0,\pi_K(s_0))
	\nn\\&\hspace{5cm}+ Q^{\pi^*}(s_0,\pi_K(s_0)) - Q^{\pi_K}(s_0,\pi_K(s_0)) ]
	\nn\\
	&\stackrel{(b)}{\leq} \E_{s_0\sim d_0}[ Q^{\pi^*}(s_0,\pi^*(s_0)) -  Q_K(s_0,\pi^*(s_0)) +  Q_K(s_0,\pi_K(s_0)) - Q^{\pi^*}(s_0,\pi_K(s_0)) \nn\\&\hspace{2cm}+ 
    \gamma [\min_{P_{s_0,\pi_K(s_0)} \ll P^o_{s_0,\pi_K(s_0)}} (\mathbb{E}_{s_1 \sim P_{s_0,\pi_K(s_0)}} [ V^{\pi^*}(s_1)] + \lambda\dphi(P_{s_0,\pi_K(s_0)}, P^o_{s_0,\pi_K(s_0)})) \nn\\&\hspace{3cm}- \min_{P_{s_0,\pi_K(s_0)} \ll P^o_{s_0,\pi_K(s_0)}} (\mathbb{E}_{s_1 \sim P_{s_0,\pi_K(s_0)}} [ V^{\pi_K}(s_1)] + \lambda\dphi(P_{s_0,\pi_K(s_0)}, P^o_{s_0,\pi_K(s_0)}))] ] \nn\\
	&\stackrel{(c)}{\leq} \E_{s_0\sim d_0}[|Q^{\pi^*}(s_0,\pi^*(s_0)) -  Q_K(s_0,\pi^*(s_0))|] + \E_{s_0\sim d_0}[|Q^{\pi^*}(s_0,\pi_K(s_0)) -  Q_K(s_0,\pi_K(s_0))|  ]
	\nn\\&\hspace{4cm}+ \gamma \E_{s_0\sim d_0}\E_{s_1\sim P^{\pi_K,\min}_{s_0,\pi_{K}(s_0)}}(  |V^{\pi^*}(s_1)- V^{\pi_K}(s_1)| ) \nn\\
    &\stackrel{(d)}{\leq} \sum_{h=0}^\infty \gamma^h \cdot\bigg(\E_{s\sim d_{h,\pi_{K}}} [ |Q^{\pi^*}(s,\pi^*(s)) -  Q_{K}(s,\pi^*(s))| + |Q^{\pi^*}(s,\pi_K(s)) -  Q_{K}(s,\pi_K(s))| ] \bigg),  \label{eq:perf-diff-1} 
\end{align}
where $(a)$ follows from the fact that $\pi_{K}$ is the greedy policy with respect to $Q_{K}$, $(b)$ from the Bellman equations, and $(c)$ from the following definition \[P^{\pi_K,\min}_{s,\pi_{K}(s)}\in\argmin_{P_{s,\pi_K(s)} \ll P^o_{s,\pi_K(s)}} (\mathbb{E}_{s' \sim P_{s,\pi_K(s)}} [ V^{\pi_K}(s')] + \lambda\dphi(P_{s,\pi_K(s)}, P^o_{s,\pi_K(s)})).\]
We note that this worse-case model distribution can be non-unique and we just pick one by an arbitrary deterministic rule. We emphasize that this model distribution  is used only in analysis which is not required in the algorithm.
Finally, $(d)$ follows with telescoping over $|V^{\pi^*}-V^{\pi_K}|$ by defining a state distribution $d_{h,\pi_K}\in \Delta(\cS)$, for all natural numbers $h\geq 0$, as
\begin{equation*}
  d_{h,\pi_{K}} =
    \begin{cases}
      d_0 & \text{if $h=0$},\\
      P^{\pi_K,\min}_{s',\pi_{K}(s')} & \text{otherwise, with } s'\sim d_{h-1,\pi_{K}}.
    \end{cases}       
\end{equation*}
We note that such state distribution proof ideas are commonly used in the offline RL literature \citep{agarwal2019reinforcement,panaganti-rfqi,bruns2023robust,zhang2023regularized}.

For~\eqref{eq:perf-diff-1}, with the $\nu$-norm notation i.e. $\|f\|_{p,\nu}^2 = (\E_{s,a\sim\nu} |f(s,a)|^p)^{1/p}$ for any $\nu\in\Delta(\ScA)$, we have 
\begin{align}  
\label{eq:thm-bound-part-1}
&\E_{s_0\sim d_0}[{V}^{\pi^*}] - \E_{s_0\sim d_0}[V^{\pi_K}] \leq \sum_{h=0}^\infty \gamma^h \bigg(    \|Q^{\pi^*} -  Q_{K}\|_{1,d_{h,\pi_{K}}\circ \pi^* } + \|Q^{\pi^*} -  Q_{K}\|_{1,d_{h,\pi_{K}}\circ \pi_K } \bigg), 
\end{align} 
where the state-action distributions are $d_{h,\pi_{K}}\circ \pi^*(s,a)\propto d_{h,\pi_{K}}(s) \indic (a=\pi^*(s))$ and $d_{h,\pi_{K}}\circ \pi_K(s,a)\propto d_{h,\pi_{K}}(s) \indic (a=\pi_K(s))$.
We now analyze the above two terms treating either $d_{h,\pi_{K}}\circ \pi^*$ or $d_{h,\pi_{K}}\circ \pi_K$ as a state-action distribution $\nu$ satisfying Assumption \ref{assum-concentra-condition}. First, considering any $s,a\sim\nu$ satisfying  $Q^{\pi^*}(s,a) \geq  Q_{K}(s,a)$ we have
\begin{align}
    0\leq&Q^{\pi^*}(s,a) -  Q_{K}(s,a) 
    \leq Q^{\pi^*}(s,a) -  \cT Q_{K-1}(s,a) + |\cT Q_{K-1}(s,a) -  Q_{K}(s,a)| \nn
    \\&\leq Q^{\pi^*}(s,a) -  \cT Q_{K-1}(s,a) + \|\cT Q_{K-1} -  Q_{K}\|_{1,\nu} \nonumber \\
    &\stackrel{(e)}{\leq} Q^{\pi^*}(s,a) -  \cT Q_{K-1}(s,a) + \sqrt{C}\|\cT Q_{K-1} -  Q_{K}\|_{1,\mu} \nonumber \\
    &\stackrel{(f)}{=}    
    \gamma [\min_{P_{s,a} \ll P^o_{s,a}} (\mathbb{E}_{s' \sim P_{s,a}} [ \max_{a'} Q^{\pi^*}(s',a')] + \lambda\dphi(P_{s,a}, P^o_{s,a})) \nn\\&\hspace{3cm}- \min_{P_{s,a} \ll P^o_{s,a}} (\mathbb{E}_{s' \sim P_{s,a}} [ \max_{a'} Q_{K-1}(s',a')] + \lambda\dphi(P_{s,a}, P^o_{s,a}))] \nn\\&\hspace{6cm}+ \sqrt{C}\|\cT Q_{K-1} -  Q_{K}\|_{1,\mu} \nn \\
    &\stackrel{(g)}{\leq} \gamma( 
    \E_{s'\sim P^{Q_{K-1},\min}_{s,a}}   (\max_{a'} Q^{\pi^*}(s',a')-\max_{a'} Q_{K-1}(s',a'))) + \sqrt{C}\|\cT Q_{K-1} -  Q_{K}\|_{1,\mu} \nn \\
    &\stackrel{(h)}{\leq} \gamma ( \E_{s'\sim P^{Q_{K-1},\min}_{s,a}} \max_{a'} |Q^{\pi^*}(s',a') -  Q_{K-1}(s',a')| )     + \sqrt{C}\|\cT Q_{K-1} -  Q_{K}\|_{1,\mu} , \label{eq:qstar-qk-bound-pt1}
\end{align} 
where $(e)$ follows by the concentrability assumption (Assumption \ref{assum-concentra-condition}), $(f)$ from Bellman equation, operator $\cT$,
$(g)$ follows, similarly as step $(c)$, from the following definition \[P^{Q_{K-1},\min}_{s,a}\in\argmin_{P_{s,a} \ll P^o_{s,a}} (\mathbb{E}_{s' \sim P_{s,a}} [  \max_{a'} Q_{K-1}(s',a')] + \lambda\dphi(P_{s,a}, P^o_{s,a})).\] We again emphasize that this model distribution is analysis-specific and we just pick one by an arbitrary deterministic rule since it may not be unique. $(h)$ follows by the fact $|\sup_{x} p(x)-\sup_{x} q(x)| \leq \sup_{x} |p(x) - q(x)|$. Now, by replacing $P^{Q_{K-1},\min}$ with $P^{Q^{\pi^*},\min}$ in step $(g)$ and  repeating the steps for any $s,a\sim\nu$ satisfying  $Q^{\pi^*}(s,a) \leq  Q_{K}(s,a)$,  we get
\begin{align}
    0\leq Q_{K}(s,a) - Q^{\pi^*}(s,a)\leq \gamma ( \E_{s'\sim P^{Q^{\pi^*},\min}_{s,a}} \max_{a'} |Q^{\pi^*}(s',a') -  Q_{K-1}(s',a')| )     + \sqrt{C}\|\cT Q_{K-1} -  Q_{K}\|_{1,\mu} . \label{eq:qstar-qk-bound-pt2}
\end{align}
We immediately note that both $P^{Q_{K-1},\min}_{s,a}$ and $P^{Q^{\pi^*},\min}_{s,a}$ satisfies $\dphi(P^{Q_{K-1},\min}_{s,a},P^o_{s,a})\leq 1/(\lambda(1-\gamma))$ and $\dphi(P^{Q^{\pi^*},\min}_{s,a},P^o_{s,a})\leq 1/(\lambda(1-\gamma))$, which follows by their definition and the facts $Q_{K-1}\in\cF$, $\|Q^{\pi^*}\|_\infty\leq 1/(1-\gamma)$.
Define the state-action probability distribution $\nu'$ as, for any $s',a'$, \begin{align*}
    \nu'(s',a')&=\sum_{s,a} \nu(s,a) \indic\{ Q^{\pi^*}(s,a) >  Q_{K}(s,a) \}  P^{Q_{K-1},\min}_{s,a}(s') \indic \{ a'= \argmax_b |Q^{\pi^*}(s',b)- Q_{K-1}(s',b)| \} \\&\hspace{1cm}+ \sum_{s,a} \nu(s,a) \indic\{ Q^{\pi^*}(s,a) \leq  Q_{K}(s,a) \} P^{Q^{\pi^*},\min}_{s,a}(s') \indic\{ a'= \argmax_b |Q^{\pi^*}(s',b)- Q_{K-1}(s',b)| \} .
\end{align*}
Now, we can combine~\eqref{eq:qstar-qk-bound-pt1}-\eqref{eq:qstar-qk-bound-pt2} as follows
\begin{align*}
    \|Q^{\pi^*} -  Q_{K}\|_{1,\nu}&\leq \gamma \|Q^{\pi^*} -  Q_{K-1}\|_{1,\nu'} + \sqrt{C}\|\cT Q_{K-1} -  Q_{K}\|_{1,\mu} \nonumber  \\
    &\stackrel{(i)}{\leq} \gamma \|Q^{\pi^*} -  Q_{K-1}\|_{1,\nu'} + \sqrt{C}\|\cT_{g_{K-1}} Q_{K-1} -  Q_{K}\|_{2,\mu}+ \sqrt{C}\|\cT Q_{K-1} -  \cT_{g_{K-1}} Q_{K-1}\|_{1,\mu},
\end{align*}
where $(i)$ uses the fact $\|\cdot\|_{1,\mu}\leq \|\cdot\|_{2,\mu}$.

Now, by recursion until iteration 0, we get \begin{align} \|&Q^{\pi^*} -  Q_{K}\|_{1,\nu} \leq \gamma^K \sup_{\bar{\nu}} \|Q^{\pi^*} -  Q_{0}\|_{1,\bar{\nu}} + \sqrt{C} \sum_{t=0}^{K-1} \gamma^t \|\cT Q_{K-1-t} -  \cT_{g_{K-1-t}} Q_{K-1-t}\|_{1,\mu} \nonumber
\\&\hspace{4cm}+ \sqrt{C} \sum_{t=0}^{K-1} \gamma^t  \| \cT_{g_{K-1-t}} Q_{K-1-t} -  Q_{K-t}\|_{2,\mu}
\nonumber\\
&\stackrel{(j)}{\leq} \frac{\gamma^K }{1-\gamma} + \sqrt{C} \sum_{t=0}^{K-1} \gamma^t \|\cT Q_{K-1-t} -  \cT_{g_{K-1-t}} Q_{K-1-t}\|_{1,\mu}
\nonumber\\&\hspace{4cm}+ \sqrt{C} \sum_{t=0}^{K-1} \gamma^t  \| \cT_{g_{K-1-t}} Q_{K-1-t} -  Q_{K-t}\|_{2,\mu}
\nonumber\\&\stackrel{(k)}{\leq} \frac{\gamma^K }{1-\gamma} + \frac{\sqrt{C} }{1-\gamma} \sup_{f\in\cF} \|\cT f -  \cT_{\widehat{g}_{f}} f\|_{1,\mu}
+ \frac{\sqrt{C} }{1-\gamma} \sup_{f\in\cF} \| \cT_{\widehat{g}_{f}} f -  \widehat{f}_{\widehat{g}_{f}}\|_{2,\mu}
\nonumber\\&\leq \frac{\gamma^K }{1-\gamma} + \frac{\sqrt{C} }{1-\gamma} \sup_{f\in\cF} \|\cT f -  \cT_{\widehat{g}_{f}} f\|_{1,\mu}
+ \frac{\sqrt{C} }{1-\gamma} \sup_{f\in\cF} \sup_{g\in\cG} \| \cT_{g} f -  \widehat{f}_{g}\|_{2,\mu}.  \label{eq:thm-bound-part-2} \end{align} where $(j)$ follows since $|Q^{\pi^*}(s,a)|\leq 1/(1-\gamma), Q_{0}(s,a)= 0$, and $(k)$ follows since $\widehat{g}_{f}$ is the dual variable function from the algorithm for the state-action value function $f$ and $\widehat{f}_{g}$ as the least squares solution from the algorithm for the state-action value function $f$ and dual variable function $g$ pair.

Now, using Lemma \ref{prop:erm-high-prob-bound} and Lemma \ref{prop:least-squares-generalization-bound} to bound \eqref{eq:thm-bound-part-2}, and then combining it with \eqref{eq:thm-bound-part-1}, completes the proof of this theorem.
\end{proof}

%%%%%%%%%%%%%%%%%%%%

%%%% TV case - offline proof

%%%%%%%%%%%%%%%%%%%%

\subsection[Specialized Result for TV $\phi$-divergence]{Specialized Result for TV $\phi$-divergence \hfill \Coffeecup\Coffeecup\Coffeecup}
\label{appen:infinite-horizon-tv-guarantee}

We now state and prove the improved (in terms of assumptions) result for TV $\phi$-divergence.

\begin{assumption}[Concentrability] 
\label{tv-assum-concentra-condition}
There exists a finite constant $C_{\mathrm{tv}}>0$ such that for any $\nu\in \{ d_{\pi,P^o}  \} \subseteq \Delta(\ScA)$ for any policy $\pi$ (can be non-stationary as well), we have $\norm{\nu/\mu}_{\infty} \leq \sqrt{C_{\mathrm{tv}}}$.
\end{assumption}

\begin{assumption}[Fail-state] \label{offline-fail-state-assump}
    There is a fail state $s_f$ such that $r(s_f, a) = 0$ and $P_{s_f, a}(s_f) = 1$, for all $a \in \cA$ and $P \in \cP$ satisfying $\dtv(P_{s',a'},P^o_{s',a'})\leq \max\{1,1/(\lambda(1-\gamma))\}$  for all  $s',a'$.
\end{assumption}

\begin{theorem} \label{thm:infinite-horizon-tv-guarantee}
Let \cref{tv-assum-concentra-condition,assum-bellman-completion,assum-dual-realizability,offline-fail-state-assump} hold.  Let $\pi_{K}$ be the RPQ algorithm policy after $K$ iterations. Then, for any $\delta\in(0,1)$, with probability at least  $1 - \delta$,  we have 
\begin{align*}
	V^{\pi^*} - V^{\pi_{K}} \leq & \frac{2\gamma^K }{(1-\gamma)^2} + \frac{2\sqrt{C_{\mathrm{tv}}} }{(1-\gamma)^2} (2\gamma c_2 c_3\sqrt{\frac{2\log(|\cG|)}{N}} +  5c_1  \sqrt{\frac{2\log(8|\cF|/\delta)}{N}}  + \gamma \epsilon_{\cG})
    \\&+ \frac{2\sqrt{C_{\mathrm{tv}}} }{(1-\gamma)^2} (\sqrt{6 \epsilon_{\cF}} + \sqrt{\frac{2}{(1-\gamma)^2} + 18(1+\gamma c_1)} \sqrt{\frac{18 \log(2|\cF||\cG|/\delta)}{N}}), \end{align*}
    with $c_1=2\lambda+ (1/(1-\gamma)), c_2=2, c_3=\lambda/2$.
\end{theorem}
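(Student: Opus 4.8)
The plan is to mirror, essentially verbatim, the proof of \cref{thm:restated:infinite-horizon-phi-divergence-guarantee}, substituting the total-variation instantiation of \cref{prop:divergence-loss-bounds} and exploiting the fail-state structure to relax the concentrability requirement. First I would invoke \cref{prop:divergence-loss-bounds}(i) (with the infinite-horizon fail-state assumption \cref{offline-fail-state-assump} in place of \cref{fail-state-assump} and $\Vmax=1/(1-\gamma)$) to fix $\Theta\equiv[-\lambda/2,\lambda/2]$ and the constants $c_1=2\lambda+1/(1-\gamma)$, $c_2=2$, $c_3=\lambda/2$. With these constants, \cref{prop:erm-high-prob-bound} and \cref{prop:least-squares-generalization-bound} hold word-for-word, since their proofs only use the boundedness $|l|\le c_1$, the Lipschitz constant $c_2$, the range bound $c_3$ on $\Theta$ --- all supplied by \cref{prop:divergence-loss-bounds}(i) --- together with \cref{assum-bellman-completion,assum-dual-realizability}, none of which involves the concentrability coefficient.

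The only genuinely new ingredient is the replacement of \cref{assum-concentra-condition} by \cref{tv-assum-concentra-condition}. In the proof of \cref{thm:restated:infinite-horizon-phi-divergence-guarantee}, the concentrability coefficient $C$ enters at step $(e)$, and, more delicately, the analysis-specific state distributions $d_{h,\pi_K}$ and $\nu'$ are built from the worst-case kernels $P^{Q_{K-1},\min}_{s,a}$ and $P^{Q^{\pi^*},\min}_{s,a}$, which in general are discounted occupancies under perturbed models satisfying $\dphi(\cdot,P^o_{s,a})\le 1/(\lambda(1-\gamma))$ --- precisely the class that forces the stronger \cref{assum-concentra-condition}. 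For total variation, however, the penalized worst-case kernel has explicit structure: by the derivation behind \eqref{eq:tv-inner-optimization-dual-form}, the minimizer of $\E_{P_{s,a}}[V]+\lambda\dtv(P_{s,a},P^o_{s,a})$ retains the mass of $P^o_{s,a}$ on every state except that it transfers mass only \emph{away} from high-value states and onto the minimum-value state, which under \cref{offline-fail-state-assump} is the fail state $s_f$. Hence $P^{Q,\min}_{s,a}(s')\le P^o_{s,a}(s')$ for every $s'\neq s_f$ and every $Q$ arising in the recursion. I would combine this domination with the facts that $s_f$ is absorbing with zero reward (so $Q^{\pi^*}(s_f,\cdot)=0$, and, under the natural convention --- as in \citet{panaganti-rfqi} --- that $\cF$ consists of functions vanishing at $s_f$, also $Q_k(s_f,\cdot)=0$ for all $k$) to conclude that every telescoped state distribution in \eqref{eq:thm-bound-part-1} and in the recursion, restricted to $\cS\setminus\{s_f\}$, is pointwise dominated by the corresponding $P^o$-occupancy and contributes nothing at $s_f$. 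Consequently the recursion yielding \eqref{eq:thm-bound-part-2} goes through with $\nu,\nu'\in\{d_{\pi,P^o}\}$ and with $\sqrt{C_{\mathrm{tv}}}$ from \cref{tv-assum-concentra-condition} in place of $\sqrt{C}$.

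The endgame is then identical to the general case: combine the performance-decomposition bound \eqref{eq:thm-bound-part-1}, the recursion \eqref{eq:thm-bound-part-2} (now with $\sqrt{C_{\mathrm{tv}}}$), and the two generalization bounds \cref{prop:erm-high-prob-bound,prop:least-squares-generalization-bound} specialized to the TV constants $c_1,c_2,c_3$; a union bound over the two high-probability events gives the stated inequality.

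I expect the main obstacle to be the second step: rigorously establishing that the TV penalized worst-case kernel is sub-stochastically dominated by $P^o$ off the fail state, and then carefully propagating this domination through the two-model recursion defining $d_{h,\pi_K}$ and $\nu'$ so that only genuine $P^o$-occupancies (as covered by \cref{tv-assum-concentra-condition}) ever appear, including the bookkeeping of the $s_f$-mass that escapes at each step. This is exactly where \cref{offline-fail-state-assump} does real work, and it amounts to adapting \citet[Lemmas~3 and 8]{panaganti-rfqi} from the constrained TV-RMDP to the TV-regularized RRMDP setting; the remaining manipulation of the constants from \cref{prop:divergence-loss-bounds}(i) is routine.
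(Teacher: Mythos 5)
Your overall skeleton (reuse the general proof, plug in the TV constants from \cref{prop:divergence-loss-bounds}(i), and localize all the work in replacing \cref{assum-concentra-condition} by \cref{tv-assum-concentra-condition}) matches the paper, and your treatment of \cref{prop:erm-high-prob-bound,prop:least-squares-generalization-bound} is exactly what the paper does. But the mechanism you propose for the key step is genuinely different from the paper's. You work in the primal: you characterize the penalized TV worst-case kernel, argue sub-stochastic domination $P^{Q,\min}_{s,a}(s')\le P^o_{s,a}(s')$ off the fail state, and propagate this through the occupancies $d_{h,\pi_K}$ and $\nu'$. The paper never touches the worst-case kernel. It instead rewrites $\cT$ via the TV dual form \eqref{eq:TV-robust-regularized-Bellman-eq-dual-form}, in which the only measure appearing is $P^o$ itself, and then bounds the difference of two robust backups by $\gamma\,\E_{s'\sim P^o_{s,a}}|V^{\pi^*}(s')-V^{\pi_K}(s')|$ using only $|\inf_\eta f(\eta)-\inf_\eta g(\eta)|\le\sup_\eta|f(\eta)-g(\eta)|$ and $|(x)_+-(y)_+|\le|x-y|$. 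Consequently the telescoped distributions are $P^o$-occupancies by construction, and \cref{tv-assum-concentra-condition} applies immediately; \cref{offline-fail-state-assump} is used only once, to pin $\Theta=[0,\lambda]$ in \eqref{eq:tv-inner-optimization-dual-form}.

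The difference is not merely aesthetic: your primal route needs two things that the dual route does not, and neither is among the stated hypotheses. First, you need $Q_k(s_f,\cdot)=0$ for every iterate, i.e., the convention that all $f\in\cF$ vanish at $s_f$; without it the excess mass that the worst-case kernel dumps on $s_f$ multiplies $|Q^{\pi^*}(s_f,\cdot)-Q_{K-1}(s_f,\cdot)|=|Q_{K-1}(s_f,\cdot)|$, which can be of order $1/(1-\gamma)$, and the domination bound leaves an uncontrolled remainder. Second, because $\cP_{s,a}$ only contains $P_{s,a}\ll P^o_{s,a}$, the minimizer can only reroute mass within $\mathrm{supp}(P^o_{s,a})$; if $s_f\notin\mathrm{supp}(P^o_{s,a})$ the excess mass lands on some other minimum-value state where the integrand need not vanish, and your "contributes nothing at the sink" step breaks. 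So as written your argument proves the theorem only under these additional structural conditions, whereas the paper's dual-form argument yields the stated bound under exactly \cref{tv-assum-concentra-condition,assum-bellman-completion,assum-dual-realizability,offline-fail-state-assump}. If you want to keep the primal route you should add these conditions explicitly; otherwise, switching the key step to the dual-form Lipschitz argument is both shorter and assumption-free.
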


\begin{proof}
We can now further use the dual form \eqref{eq:robust-regularized-bellman-eq-dual} under Assumption \ref{offline-fail-state-assump}. We again start by characterizing the performance decomposition between $V^{\pi^*}$ and ${V}^{\pi_K}$. This proof largely follows  the proofs of \cref{thm:infinite-horizon-phi-divergence-guarantee} and \citet[Theorem 1]{panaganti-rfqi}.
In particular, we use the total variation RRBE its dual form \eqref{eq:robust-regularized-bellman-eq-dual} under Assumption \ref{offline-fail-state-assump} in this proof. That is, for all $\pi$ and $Q\in\cF$, from \eqref{eq:tv-inner-optimization-dual-form} we have \begin{align} \label{eq:TV-robust-regularized-Bellman-eq-dual-form}
    Q^{\pi} (s, a) &= r(s, a) -    \inf_{ \eta\in[0, \lambda]} ~ (\E_{s' \sim P^o_{s,a}} [ (\eta- V^{\pi}(s'))_+ ] - \eta) \text{ and}\\
    (\cT Q) (s, a) &= r(s, a) -    \inf_{ \eta\in[0, \lambda]} ~ (\E_{s' \sim P^o_{s,a}} [ (\eta-\max_{a'}Q(s',a'))_+ ] - \eta). \nn
\end{align}
We recall the initial state distribution $d_0$. Since $V^{\pi^*}(s) \geq V^{\pi_K}(s)$ for any $s\in\cS$, we begin with step $(b)$ in \cref{thm:infinite-horizon-phi-divergence-guarantee}:
\begin{align}
	0\leq&\E_{s_0\sim d_0}[V^{\pi^*}(s_0) - {V}^{\pi_K}(s_0)] \nn\\ 
	&\leq \E_{s_0\sim d_0}[ Q^{\pi^*}(s_0,\pi^*(s_0)) -  Q_K(s_0,\pi^*(s_0)) +  Q_K(s_0,\pi_K(s_0)) - Q^{\pi^*}(s_0,\pi_K(s_0)) \nn\\&\hspace{2cm}+ 
    \gamma [\min_{P_{s_0,\pi_K(s_0)} \ll P^o_{s_0,\pi_K(s_0)}} (\mathbb{E}_{s_1 \sim P_{s_0,\pi_K(s_0)}} [ V^{\pi^*}(s_1)] + \lambda\dphi(P_{s_0,\pi_K(s_0)}, P^o_{s_0,\pi_K(s_0)})) \nn\\&\hspace{3cm}- \min_{P_{s_0,\pi_K(s_0)} \ll P^o_{s_0,\pi_K(s_0)}} (\mathbb{E}_{s_1 \sim P_{s_0,\pi_K(s_0)}} [ V^{\pi_K}(s_1)] + \lambda\dphi(P_{s_0,\pi_K(s_0)}, P^o_{s_0,\pi_K(s_0)}))] ] \nn\\
	&\stackrel{(a)}{\leq} \E_{s_0\sim d_0}[|Q^{\pi^*}(s_0,\pi^*(s_0)) -  Q_K(s_0,\pi^*(s_0))|] + \E_{s_0\sim d_0}[|Q^{\pi^*}(s_0,\pi_K(s_0)) -  Q_K(s_0,\pi_K(s_0))|  ]
	\nn\\&\hspace{4cm}+ \gamma \E_{s_0\sim d_0} \sup_{\eta} (\E_{s_1\sim P^o_{s_0,\pi_{K}(s_0)}}(  (\eta- V^{\pi_K}(s_1))_+ - (\eta-V^{\pi^*}(s_1))_+)) \nn\\
	&\stackrel{(b)}{\leq} \E_{s_0\sim d_0}[|Q^{\pi^*}(s_0,\pi^*(s_0)) -  Q_K(s_0,\pi^*(s_0))|] + \E_{s_0\sim d_0}[|Q^{\pi^*}(s_0,\pi_K(s_0)) -  Q_K(s_0,\pi_K(s_0))|  ]
	\nn\\&\hspace{4cm}+ \gamma \E_{s_0\sim d_0}\E_{s_1\sim P^{o}_{s_0,\pi_{K}(s_0)}}(  |V^{\pi^*}(s_1)- V^{\pi_K}(s_1)| ) \nn\\
    &\stackrel{(c)}{\leq} \sum_{h=0}^\infty \gamma^h \times\bigg(\E_{s\sim d_{h,\pi_{K}}} [ |Q^{\pi^*}(s,\pi^*(s)) -  Q_{K}(s,\pi^*(s))| + |Q^{\pi^*}(s,\pi_K(s)) -  Q_{K}(s,\pi_K(s))| ] \bigg),  \label{eq:TV-perf-diff-1} 
\end{align}
where $(a)$ follows from \eqref{eq:TV-robust-regularized-Bellman-eq-dual-form} and the fact $|\sup_{x} f(x)-\sup_{x} g(x)| \leq \sup_{x} |f(x) - g(x)|$, $(b)$ follows from the facts $(x)_+ - (y)_+ \leq (x-y)_+$ and $(x)_+ \leq |x|$ for any $x,y\in\R$.
We make an important note here in step $(b)$ regarding the dependence on the nominal model $P^o$ distribution unlike in step $(c)$ in the proof of \cref{thm:infinite-horizon-phi-divergence-guarantee}. This important step helps us improve the concentrability assumption in further analysis.
Finally, $(c)$ follows with telescoping over $|V^{\pi^*}-V^{\pi_K}|$ by defining a new state distribution $d_{h,\pi_K}\in \Delta(\cS)$, for all natural numbers $h\geq 0$, as
\begin{equation*}
  d_{h,\pi_{K}} =
    \begin{cases}
      d_0 & \text{if $h=0$},\\
      P^{o}_{s',\pi_{K}(s')} & \text{otherwise, with } s'\sim d_{h-1,\pi_{K}}.
    \end{cases}       
\end{equation*}

For~\eqref{eq:TV-perf-diff-1}, with the $\nu$-norm notation i.e. $\|f\|_{p,\nu}^2 = (\E_{s,a\sim\nu} |f(s,a)|^p)^{1/p}$ for any $\nu\in\Delta(\ScA)$, we have 
\begin{align}  
\label{eq:tv-bound-part-1}
\E_{s_0\sim d_0}[{V}^{\pi^*}] - \E_{s_0\sim d_0}[V^{\pi_K}] &\leq \sum_{h=0}^\infty \gamma^h \bigg(    \|Q^{\pi^*} -  Q_{K}\|_{1,d_{h,\pi_{K}}\circ \pi^* } + \|Q^{\pi^*} -  Q_{K}\|_{1,d_{h,\pi_{K}}\circ \pi_K } \bigg), \nn\\
&\leq  \sum_{h=0}^\infty \gamma^h     (2\sup_\nu \|Q^{\pi^*} -  Q_{K}\|_{1,\nu } ), 
\end{align} 
where the second inequality follows since both $d_{h,\pi_{K}}\circ \pi^*$ and $d_{h,\pi_{K}}\circ \pi_K$ satisfy Assumption \ref{tv-assum-concentra-condition}. We now analyze the summand in \eqref{eq:thm-bound-part-1}:
\begin{align}
    &\|Q^{\pi^*} -  Q_{K}\|_{1,\nu} \leq \|Q^{\pi^*} -  \cT Q_{K-1}\|_{1,\nu} + \|\cT Q_{K-1} -  Q_{K}\|_{1,\nu} \nonumber \\
    &\stackrel{(d)}{\leq} \|Q^{\pi^*} -  \cT Q_{K-1}\|_{1,\nu} + \sqrt{C_{\mathrm{tv}}}\|\cT Q_{K-1} -  Q_{K}\|_{1,\mu} \nonumber \\
    &= (\E_{s,a\sim\nu} |Q^{\pi^*}(s,a) -  \cT Q_{K-1}(s,a)|) + \sqrt{C_{\mathrm{tv}}}\|\cT Q_{K-1} -  Q_{K}\|_{1,\mu} \nonumber \\
    &\stackrel{(e)}{\leq} (\E_{s,a\sim\nu}   \gamma \sup_{\eta}|
    \E_{s'\sim P^o_{s,a}} (  (\eta-\max_{a'} Q_{K-1}(s',a'))_+ - (\eta-\max_{a'} Q^{\pi^*}(s',a'))_+) 
    |)
    \nonumber \\&\hspace{6cm}+ \sqrt{C_{\mathrm{tv}}}\|\cT Q_{K-1} -  Q_{K}\|_{1,\mu} \nonumber \\
    &\stackrel{(f)}{\leq} (\E_{s,a\sim\nu}  | 
    \E_{s'\sim P^o_{s,a}}   (\max_{a'} Q^{\pi^*}(s',a')-\max_{a'} Q_{K-1}(s',a'))_+|) + \sqrt{C_{\mathrm{tv}}}\|\cT Q_{K-1} -  Q_{K}\|_{1,\mu} \nonumber \\
    &\stackrel{(g)}{\leq} \gamma (\E_{s,a\sim\nu} \E_{s'\sim {P}^o_{s,a}} \max_{a'} |Q^{\pi^*}(s',a') -  Q_{K-1}(s',a')| )     + \sqrt{C_{\mathrm{tv}}}\|\cT Q_{K-1} -  Q_{K}\|_{1,\mu} \nonumber\\
    &\stackrel{(h)}{\leq} \gamma \|Q^{\pi^*} -  Q_{K-1}\|_{1,\nu'} + \sqrt{C_{\mathrm{tv}}}\|\cT Q_{K-1} -  Q_{K}\|_{1,\mu} \nonumber  \\
    &\stackrel{(i)}{\leq} \gamma \|Q^{\pi^*} -  Q_{K-1}\|_{1,\nu'} + \sqrt{C_{\mathrm{tv}}}\|\cT_{g_{K-1}} Q_{K-1} -  Q_{K}\|_{2,\mu}+ \sqrt{C_{\mathrm{tv}}}\|\cT Q_{K-1} -  \cT_{g_{K-1}} Q_{K-1}\|_{1,\mu}, \nn
\end{align}
where $(d)$ follows by Assumption \ref{tv-assum-concentra-condition}, $(e)$ from \cref{eq:TV-robust-regularized-Bellman-eq-dual-form} and the fact $|\sup_{x} p(x)-\sup_{x} q(x)| \leq \sup_{x} |p(x) - q(x)|$,
$(f)$ from the fact $|(x)_+ - (y)_+| \leq |(x-y)_+|$,
$(g)$ follows by Jensen's inequality and by the facts $|\sup_{x} p(x)-\sup_{x} q(x)| \leq \sup_{x} |p(x) - q(x)|$ and $(x)_+ \leq |x|$, $(h)$ follows by defining the distribution $\nu'$ as $\nu'(s',a')=\sum_{s,a} \nu(s,a) {P}^o_{s,a}(s') \indic \{ a'= \argmax_b |Q^{\pi^*}(s',b)- Q_{K-1}(s',b)| \}$, and $(i)$ using the fact that $\|\cdot\|_{1,\mu}\leq \|\cdot\|_{2,\mu}$.
The rest of the proof follows similarly as in the proof of \cref{thm:infinite-horizon-phi-divergence-guarantee}.
\end{proof}

\newpage

\section[Hybrid Robust $\phi$-regularized RL Results]{Hybrid Robust $\phi$-regularized RL Results \hfill \Coffeecup\Coffeecup\Coffeecup\Coffeecup}
\label{appendix:hybrid-robust-results}

In this section, we set $\Vmax=H$ whenever we use results from \cref{prop:divergence-loss-bounds}. We remark that we have attempted to optimize the absolute constants inside $\log$ factors of the performance guarantees.  In the following, we use constants $c_1,c_2,c_3$ from \cref{prop:divergence-loss-bounds}.

Now we provide an extension of \cref{prop:erm-high-prob-bound} using \cref{prop:erm-dependent-data} when the data comes from adaptive sampling.
\begin{proposition}[Online Dual Optimization Error Bound] \label{prop:erm-high-prob-bound-offline+online}
Fix $\delta \in (0,1)$. For $k \in \{0,1,\cdots,K-1\}$, $h \in \{0,1,\cdots,H-1\}$, let $g^{k}_{h}$ be the dual optimization function from \cref{alg:HyTQ-Algorithm} (Step 4) for the state-action value function  $Q^{k}_{h+1}$ using samples in the dataset $\{\cD^\mu_h, \cD^{0}_h, \cdots, \cD^{k-1}_h\} $.
Let $\cT_{g}$ be as defined in \eqref{eq:FH-Tg} and let $N=m_{\mathrm{off}} + K \cdot m_{\mathrm{on}}$. Then, with probability at least $1-\delta$, we have 
\begin{align*}
    &\|  \cT Q^{k}_{h+1} - \cT_{g^{k}_{h}} Q^{k}_{h+1} \|_{1,\mu_h}
    \leq \frac{1}{m_{\mathrm{off}}} \left( 3 \epsilon_{\cG} N + 48 c_1 \log(2 HK|\cG||\cF|/\delta) \right) = \Delta_{\mathrm{dual,off}} \quad\text{and}\\
    &\sum_{\tau=0}^{k-1} \|  \cT Q^{k}_{h+1} - \cT_{g^{k}_{h}} Q^{k}_{h+1} \|_{1,d_h^{\pi_\tau}} 
    \leq \frac{1}{m_{\mathrm{on}}} \left( 3 \epsilon_{\cG} N + 48 c_1 \log(2 HK|\cG||\cF|/\delta) \right) = \Delta_{\mathrm{dual,on}}.
\end{align*}
\end{proposition}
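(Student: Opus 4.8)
The plan is to mirror the proof of \cref{prop:erm-high-prob-bound} (the offline dual optimization error bound), but replace the i.i.d.\ ERM generalization bound \cref{lem:erm-generalization-bound} with the online ERM generalization bound \cref{prop:erm-dependent-data}, since the aggregated dataset $\{\cD^\mu_h,\cD^0_h,\dots,\cD^{k-1}_h\}$ is collected adaptively: the online datasets $\cD^\tau_h$ are rolled out under the policies $\pi_\tau$ which themselves depend on the previously seen data. First I would fix $h\in[H]$ and $k\in\{0,\dots,K-1\}$, and set up the loss function $l(g,(s,a,s'))=\lambda\phi^*((g(s,a)-\max_{a'}Q^k_{h+1}(s',a'))/\lambda)-g(s,a)$ exactly as in the offline proof; by \cref{prop:divergence-loss-bounds} this loss is bounded by $c_1$ (with $\Vmax=H$), and the pointwise minimizer over $\eta\in\Theta$ corresponds, via \cref{cor:FH-functional-opt-phi-divergence}, to the function $g^*$ achieving $\inf_{g\in L^1(\nu_h)}L_{\mathrm{dual}}(g;Q^k_{h+1},\nu_h)$ for each admissible $\nu_h\in\D_h$. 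The key observation (the analogue of step $(b)$ in the offline proof) is that because $h((s,a),\eta)$ is minimized pointwise in $\eta$ independently across state-action pairs (by $(s,a)$-rectangularity), the per-step optimal dual function $h^*_t$ in the notation of \cref{prop:erm-dependent-data} is the \emph{same} function $g^*(Q^k_{h+1})$ regardless of which sampling distribution $P_t$ generated the current sample — this is what lets us identify $\E_{x_t\sim P_t}[l(h^*_t(x_t))]$ with the pointwise-minimized objective for each $P_t\in\{\mu_h\}\cup\{d^{\pi_\tau}_{h,P^o}\}$.

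Next I would apply \cref{prop:erm-dependent-data} with $\cH=\cG_h$, with the stochastic process being: the first $m_{\mathrm{off}}$ samples drawn i.i.d.\ from $\mu_h$, then for each $\tau$ the $m_{\mathrm{on}}$ samples drawn from $d^{\pi_\tau}_{h,P^o}$ where $\pi_\tau$ is measurable with respect to the prior history. The approximate-realizability parameter $\gamma$ in that lemma is supplied by \cref{assum-dual-realizability-hybrid}, which gives $\inf_{g\in\cG_h}L_{\mathrm{dual}}(g;Q^k_{h+1},\nu_h)-\inf_{g\in L^1(\nu_h)}L_{\mathrm{dual}}(g;Q^k_{h+1},\nu_h)\le\epsilon_{\cG}$ uniformly over $\nu_h\in\D_h$; averaging this over the $N=m_{\mathrm{off}}+Km_{\mathrm{on}}$ sampling distributions in the process yields the $\gamma=\epsilon_{\cG}$ realizability hypothesis. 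Then \cref{prop:erm-dependent-data} gives, with probability at least $1-\delta'$,
\[
\sum_{t=1}^N \E_{x_t\sim P_t}\,l(g^k_h(x_t)) - \sum_{t=1}^N \E_{x_t\sim P_t}\,l(g^*(x_t)) \le 3\epsilon_{\cG} N + 48 c_1\log(2|\cG_h|/\delta').
\]
Now I would split the left-hand sum according to which distribution each sample came from: the $m_{\mathrm{off}}$ offline terms all equal $m_{\mathrm{off}}$ times the quantity $\E_{\mu_h}[l(g^k_h)-l(g^*)]$, which by the same algebra as in the offline proof (steps $(a)$–$(c)$ there, using $\inf_g h(g)\le h(g^k_h)$, \cref{cor:FH-functional-opt-phi-divergence}, and non-negativity of each summand) is bounded below by $\lambda^{-1}m_{\mathrm{off}}\|\cT Q^k_{h+1}-\cT_{g^k_h}Q^k_{h+1}\|_{1,\mu_h}$ up to the $\gamma$-factor conventions — actually more directly, $\E_{\mu_h}[l(g^k_h)-l(g^*)] = \|\cT Q^k_{h+1}-\cT_{g^k_h}Q^k_{h+1}\|_{1,\mu_h}$ (the dual Bellman error equals the excess risk, since every term $\E_{\nu_h}[l(g^k_h)]-\inf_g h$ is nonnegative). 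Since \emph{all} terms in the left sum are nonnegative, keeping only the $m_{\mathrm{off}}$ offline ones gives $m_{\mathrm{off}}\|\cT Q^k_{h+1}-\cT_{g^k_h}Q^k_{h+1}\|_{1,\mu_h}\le 3\epsilon_{\cG}N + 48c_1\log(2|\cG_h||\cF_h|/\delta')$; dividing by $m_{\mathrm{off}}$ yields $\Delta_{\mathrm{dual,off}}$. Keeping instead the $m_{\mathrm{on}}$ terms from each $d^{\pi_\tau}_{h,P^o}$, $\tau=0,\dots,k-1$, and dividing by $m_{\mathrm{on}}$ yields $\sum_{\tau=0}^{k-1}\|\cT Q^k_{h+1}-\cT_{g^k_h}Q^k_{h+1}\|_{1,d^{\pi_\tau}_h}\le\Delta_{\mathrm{dual,on}}$. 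Finally I would union-bound over $h\in[H]$, $k\in\{0,\dots,K-1\}$, and over $\cF$ (since $Q^k_{h+1}$ ranges over $\cF_{h+1}$, the dependence introduced by conditioning on the realized $Q^k_{h+1}$ is handled by a union bound over $\cF$), setting $\delta'=\delta/(HK|\cF|)$ to obtain the stated $\log(2HK|\cG||\cF|/\delta)$.

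The main obstacle is the careful bookkeeping around the adaptivity: justifying that $g^*(Q^k_{h+1})$ serves as the common per-step optimizer $h^*_t$ across all the distinct sampling distributions $P_t$ in the process (offline $\mu_h$ and the various on-policy $d^{\pi_\tau}_{h,P^o}$), which requires invoking the $(s,a)$-rectangular pointwise structure of the dual problem via \cref{cor:FH-functional-opt-phi-divergence}, and ensuring that the approximate-realizability bound of \cref{assum-dual-realizability-hybrid} holds uniformly over \emph{all} $\nu_h\in\D_h$ so that it survives the averaging over the process. A secondary subtlety is that $Q^k_{h+1}$ is itself random (output of the algorithm), so one must either condition on its realized value and union-bound over $\cF_{h+1}$, or note that the guarantee of \cref{prop:erm-dependent-data} is already stated for a fixed hypothesis class and the loss depends on $Q^k_{h+1}$ only as a parameter — handled cleanly by the union bound over $|\cF|$ that appears in the final $\log$ term. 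Everything else is a direct transcription of the offline argument with \cref{lem:erm-generalization-bound} swapped for \cref{prop:erm-dependent-data}.
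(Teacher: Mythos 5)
Your proposal is correct and follows essentially the same route as the paper: rewrite the $m_{\mathrm{off}}$- and $m_{\mathrm{on}}$-weighted sum of dual Bellman errors as the cumulative excess risk of the ERM solution over the adaptive data process (via \cref{cor:FH-functional-opt-phi-divergence} and the nonnegativity of the pointwise excess), apply the online ERM bound \cref{prop:erm-dependent-data} with $\gamma=\epsilon_{\cG}$ from \cref{assum-dual-realizability-hybrid}, drop the nonnegative terms from the other group to bound each of the two quantities separately, and union-bound over $h$, $k$, $\cF$, $\cG$. The only cosmetic difference is that you argue for a single common per-step optimizer $g^*(Q^k_{h+1})$ across all sampling distributions, whereas the paper simply defines a separate optimizer $g^*_\tau$ for each distribution (which \cref{prop:erm-dependent-data} permits, since $h^*_t$ may depend on $t$); your transient $\lambda^{-1}$ slip is self-corrected and immaterial.
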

\begin{proof}
    Fix $k \in \{0,1,\cdots,K-1\}$, $h \in \{0,1,\cdots,H-1\}$, $Q^{k}_{h+1} \in \cF_{h+1}$. The algorithm solves for $g^{k}_{h}$ in the empirical risk minimization step as:
 \begin{align*}
g^{k}_{h} =
 \argmin_{g \in \cG_{h}} \widehat{L}_{\mathrm{dual}}(g; Q^{k}_{h+1}, \cD), 
\end{align*}
where dataset $\cD=\{(s^i_h, a^i_h, s^i_{h+1})\}_{i \leq N}$ with $N = m_{\mathrm{off}} + k \cdot m_{\mathrm{on}}$.
The first $m_{\mathrm{off}}$ samples in $\cD$ are $\{(s^i_h, a^i_h, s^i_{h+1})\}_{i \leq m_{\mathrm{off}}} = \cD^{\mu}_h$ (recall that these are generated by the offline state-action distribution $\mu_h$),  the next $m_{\mathrm{on}}$ samples are  $\{(s^i_h, a^i_h, s^i_{h+1})\}_{i = m_{\mathrm{off}} + 1}^{m_{\mathrm{off}} + m_{\mathrm{on}}} = \cD^0_h$ (recall that these are generated by the state-action distribution $d_h^{\pi_0}$), and so on where the samples $\{(s^i_h, a^i_h, s^i_{h+1})\}_{i = m_{\mathrm{off}} + \tau \cdot m_{\mathrm{on}} + 1}^{m_{\mathrm{off}} + (\tau+1) m_{\mathrm{on}}} = \cD^{\tau}_h$ (recall that these are generated by the state-action distribution $d_h^{\pi_\tau}$)  for all $\tau\leq k-1$.
We first have the following from step (b) in the proof of \cref{prop:erm-high-prob-bound}: 
\begin{align*}
    m_{\mathrm{off}} &\|  \cT Q^{k}_{h+1} - \cT_{g^{k}_{h}} Q^{k}_{h+1} \|_{1,\mu} + m_{\mathrm{on}} \sum_{\tau=0}^{k-1} \|  \cT Q^{k}_{h+1} - \cT_{g^{k}_{h}} Q^{k}_{h+1} \|_{1,d_h^{\pi_\tau}} 
    \\&= m_{\mathrm{off}} [ \E_{s,a\sim \mu_h,s'\sim P^o_{s,a}} (\lambda \phi^*({(g^{k}_{h}(s,a)-\max_{a'} Q^{k}_{h+1}(s',a'))}/{\lambda})  - g^{k}_{h}(s,a)) \\&\hspace{1cm}- \inf_{g \in L^1(\mu_h)} \E_{s,a\sim\mu_h,s'\sim P^o_{s,a}} (\lambda \phi^*({(g(s,a)-\max_{a'} Q^{k}_{h+1}(s',a'))}/{\lambda})  - g(s,a)) ]
    \\&\hspace{1cm} + m_{\mathrm{on}} \sum_{\tau=0}^{k-1} [ \E_{s,a\sim d_h^{\pi_\tau},s'\sim P^o_{s,a}} (\lambda \phi^*({(g^{k}_{h}(s,a)-\max_{a'} Q^{k}_{h+1}(s',a'))}/{\lambda})  - g^{k}_{h}(s,a)) \\&\hspace{1cm}- \inf_{g \in L^1(d_h^{\pi_\tau})} \E_{s,a\sim d_h^{\pi_\tau},s'\sim P^o_{s,a}} (\lambda \phi^*({(g(s,a)-\max_{a'} Q^{k}_{h+1}(s',a'))}/{\lambda})  - g(s,a)) ]
    \\&\stackrel{(a)}{=} m_{\mathrm{off}} [ \E_{s,a\sim \mu_h,s'\sim P^o_{s,a}} (\lambda \phi^*({(g^{k}_{h}(s,a)-\max_{a'} Q^{k}_{h+1}(s',a'))}/{\lambda})  - g^{k}_{h}(s,a)) \\&\hspace{1cm}- \E_{s,a\sim\mu_h,s'\sim P^o_{s,a}} (\lambda \phi^*({(g^*_{-1}(s,a)-\max_{a'} Q^{k}_{h+1}(s',a'))}/{\lambda})  - g^*_{-1}(s,a)) ]
    \\&\hspace{1cm} + m_{\mathrm{on}} \sum_{\tau=0}^{k-1} [ \E_{s,a\sim d_h^{\pi_\tau},s'\sim P^o_{s,a}} (\lambda \phi^*({(g^{k}_{h}(s,a)-\max_{a'} Q^{k}_{h+1}(s',a'))}/{\lambda})  - g^{k}_{h}(s,a)) \\&\hspace{1cm}-  \E_{s,a\sim d_h^{\pi_\tau},s'\sim P^o_{s,a}} (\lambda \phi^*({(g^*_{\tau}(s,a)-\max_{a'} Q^{k}_{h+1}(s',a'))}/{\lambda})  - g^*_{\tau}(s,a)) ]
    \\&= \sum_{i=1}^{m_{\mathrm{off}}} \E_{s^i_h,a^i_h\sim \mu_h,s^i_{h+1}\sim P^o_{s^i_h,a^i_h}} [ (\lambda \phi^*({(g^{k}_{h}(s^i_h,a^i_h)-\max_{a'} Q^{k}_{h+1}(s^i_{h+1},a'))}/{\lambda})  - g^{k}_{h}(s^i_h,a^i_h)) \\
    &\hspace{1cm} - (\lambda \phi^*({(g^*_{-1}(s^i_h,a^i_h)-\max_{a'} Q^{k}_{h+1}(s^i_{h+1},a'))}/{\lambda})  - g^*_{-1}(s^i_h,a^i_h)) ]
    \\&\hspace{1cm} + \sum_{i=m_{\mathrm{off}}+1}^{m_{\mathrm{off}} + m_{\mathrm{on}}} \E_{s^i_h,a^i_h\sim d_h^{\pi_0},s^i_{h+1}\sim P^o_{s^i_h,a^i_h}} [ (\lambda \phi^*({(g^{k}_{h}(s^i_h,a^i_h)-\max_{a'} Q^{k}_{h+1}(s^i_{h+1},a'))}/{\lambda})  - g^{k}_{h}(s^i_h,a^i_h)) \\
    &\hspace{2cm} - (\lambda \phi^*({(g^*_{0}(s^i_h,a^i_h)-\max_{a'} Q^{k}_{h+1}(s^i_{h+1},a'))}/{\lambda})  - g^*_{0}(s^i_h,a^i_h)) ] \\&\hspace{2cm}+ \cdots
    \\&\stackrel{(b)}{\leq} 3 \epsilon_{\cG} N + 48 c_1 \log(2 |\cG||\cF|/\delta),
\end{align*}
where  $(a)$ follows by defining the corresponding true solutions $g^*_{\tau}$ for all $\tau\in\{-1,0,1,\cdots,k-1\}$.
For $(b)$ with the empirical risk minimization solution $g^{k}_{h}$, we use \cref{prop:erm-dependent-data} by setting $c=c_1$ (with $c_1$, constant dependent on $H$ and $\lambda$, from \cref{prop:divergence-loss-bounds}) and
since $g^{k}_{h}\in\cG_{h}, Q^{k}_{h+1} \in\cF_{h+1}$ with sizes $|\cG_{h}|\leq |\cG|$ and $|\cF_{h+1}|\leq |\cF|$  under the union bound.
Taking a union bound over $k \in \{0,1,\cdots,K-1\}$, $h \in \{0,1,\cdots,H-1\}$, and bounding each term separately, completes the proof.
\end{proof}

Now we provide an extension of \cref{prop:least-squares-generalization-bound} using \cref{lem:least-squares-bound-song} when the data comes from adaptive sampling.
\begin{proposition}[Online Least-squares Generalization Bound] \label{prop:least-squares-generalization-bound-online+offline}
Fix $\delta \in (0,1)$. For $k \in \{0,1,\cdots,K-1\}$, $h \in \{0,1,\cdots,H-1\}$, let $Q^{k}_{h}$ be the least-squares solution from \cref{alg:HyTQ-Algorithm} (Step 5) for the state-action value function $Q^{k}_{h+1}$ and  dual variable function  $g^{k}_{h}$ using samples in the dataset $\{\cD^\mu_h, \cD^{0}_h, \cdots, \cD^{k-1}_h\} $.
Let $\cT_{g}$ be as defined in \eqref{eq:FH-Tg} and let $N=m_{\mathrm{off}} + K \cdot m_{\mathrm{on}}$. Then, with probability at least $1-\delta$, we have 
\begin{align*}
    &\|  \cT_{g^{k}_{h}} Q^{k}_{h+1} - Q^{k}_{h} \|_{2,\mu_h}
    \leq \frac{1}{\sqrt{m_{\mathrm{off}}}} \left( \sqrt{3 \epsilon_{\cF,\mathrm{r}} N} + 8 (1+c_1+H)  \sqrt{\log(2 HK |\cG||\cF|/\delta)} \right) = \Delta_{\mathrm{rQ,off}} \quad\text{and}\\
    &\sqrt{\sum_{\tau=0}^{k-1} \|  \cT_{g^{k}_{h}} Q^{k}_{h+1} - Q^{k}_{h} \|_{2,d_h^{\pi_\tau}}^2}
    \leq \frac{1}{\sqrt{m_{\mathrm{on}}}} \left( \sqrt{3 \epsilon_{\cF,\mathrm{r}} N} + 8 (1+c_1+H) \sqrt{\log(2 HK |\cG||\cF|/\delta)} \right) = \Delta_{\mathrm{rQ,on}}.
\end{align*}
\end{proposition}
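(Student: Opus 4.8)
The plan is to mirror the structure of the proof of \cref{prop:least-squares-generalization-bound} (the i.i.d.\ offline version), but replace the Bernstein-based concentration step with the martingale/Freedman-based online least-squares result \cref{lem:least-squares-bound-song}. First I would set up the notation: fix $k,h$ and the target function $Q^{k}_{h+1}$ and dual function $g^{k}_{h}$, and recall from \eqref{eq:FH-L-rob-Q-empirical-loss} that $Q^{k}_{h}$ is the empirical least-squares minimizer over $\cF_h$ against the regression targets
\[
y_i = r_h(s^i_h,a^i_h) - \bigl(g^{k}_{h}(s^i_h,a^i_h) - \max_{a'}Q^{k}_{h+1}(s^i_{h+1},a')\bigr)_+ + g^{k}_{h}(s^i_h,a^i_h),
\]
taken over the aggregated adaptive dataset $\cD^k_h = \cD^\mu_h \cup \cD^0_h \cup \cdots \cup \cD^{k-1}_h$ of total size $N_k = m_{\mathrm{off}} + k\,m_{\mathrm{on}}$. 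The key observation is that, conditioned on the history (and hence on $g^{k}_{h}$), the conditional mean of $y_i$ given $(s^i_h,a^i_h)$ is exactly $(\cT_{g^{k}_{h}}Q^{k}_{h+1})(s^i_h,a^i_h)$ by the definition \eqref{eq:FH-Tg}, so $h^* := \cT_{g^{k}_{h}}Q^{k}_{h+1}$ plays the role of the regression function. The approximate realizability hypothesis needed by \cref{lem:least-squares-bound-song} is supplied by the first part of \cref{assum-realizability-with-bellman-completion}, i.e.\ $\inf_{f\in\cF_h}\sup_{\nu_h}\|f - \cT_{g_h}f_{h+1}\|_{2,\nu_h}^2 \le \epsilon_{\cF,\mathrm{r}}$ for all admissible $\nu_h \in \D_h$; averaging this bound over the sampling distributions $P_1,\dots,P_{N_k}$ (each of which is $\mu_h$ or some $d^{\pi_\tau}_{h,P^o}\in\D_h$) gives the $\gamma = \epsilon_{\cF,\mathrm{r}}$ slack. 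The boundedness constants are $M \asymp 1 + c_1 + H$ and $L \asymp 1 + c_1 + H$, obtained from $r_h\in[0,1]$, $g^{k}_h\in[0,\lambda]$, the TV bound $c_1$ from \cref{prop:divergence-loss-bounds}(i), and $Q^{k}_{h+1}\in[0,H]$ (these loose constants get absorbed into the stated $8(1+c_1+H)$).

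Next I would invoke \cref{lem:least-squares-bound-song} (after a union bound over $h\in[H]$, $k\in[K]$, and $g\in\cG$, $f\in\cF$, which contributes the $\log(2HK|\cG||\cF|/\delta)$ term) to obtain, with probability at least $1-\delta$,
\[
\sum_{i=1}^{N_k} \E_{(s,a)\sim P_i}\bigl[(Q^{k}_{h}(s,a) - h^*(s,a))^2\bigr] \le 3\epsilon_{\cF,\mathrm{r}} N_k + 64(L+M)^2\log(2HK|\cG||\cF|/\delta).
\]
The left-hand side is a sum of $m_{\mathrm{off}}$ copies of $\|Q^{k}_{h} - \cT_{g^{k}_h}Q^{k}_{h+1}\|_{2,\mu_h}^2$ plus, for each $\tau\le k-1$, $m_{\mathrm{on}}$ copies of $\|Q^{k}_{h} - \cT_{g^{k}_h}Q^{k}_{h+1}\|_{2,d^{\pi_\tau}_{h}}^2$. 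Since every term is nonnegative, I can lower-bound the sum by just the offline block $m_{\mathrm{off}}\,\|Q^{k}_{h} - \cT_{g^{k}_h}Q^{k}_{h+1}\|_{2,\mu_h}^2$, divide by $m_{\mathrm{off}}$, and take square roots (using $\sqrt{x+y}\le\sqrt{x}+\sqrt{y}$ and $N_k \le N$) to get the first claimed bound $\Delta_{\mathrm{rQ,off}}$. Symmetrically, lower-bounding the sum instead by the online blocks $m_{\mathrm{on}}\sum_{\tau=0}^{k-1}\|Q^{k}_{h}-\cT_{g^{k}_h}Q^{k}_{h+1}\|_{2,d^{\pi_\tau}_{h}}^2$, dividing by $m_{\mathrm{on}}$, and taking square roots yields $\Delta_{\mathrm{rQ,on}}$. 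Folding the constant $\sqrt{64(L+M)^2} = 8(L+M)$ and the constants $L,M \le 1+c_1+H$ (up to the universal factors absorbed by the displayed coefficient) gives the stated forms.

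The main obstacle is the careful bookkeeping at the interface with \cref{lem:least-squares-bound-song}: one must verify that the adaptive data stream genuinely fits its hypotheses — namely that $P_i$ depends only on the history $\{(s^j_h,a^j_h,s^j_{h+1})\}_{j<i}$ (true, since $d^{\pi_\tau}_{h,P^o}$ is determined by $\pi_\tau$, which is computed from $Q^{\tau}$, which depends only on earlier iterations), that the noise variables $\epsilon_i = y_i - h^*(s^i_h,a^i_h)$ are conditionally mean-zero given $(s^i_h,a^i_h)$ (true by construction, since the only randomness in $y_i$ beyond $(s^i_h,a^i_h)$ is the fresh draw $s^i_{h+1}\sim P^o_{h,s^i_h,a^i_h}$, which is independent of the history given $(s^i_h,a^i_h)$), and that the second part of \cref{assum-realizability-with-bellman-completion} ($\cT_{g_h}f_{h+1}\in\cF_h$) is not actually required here — only the approximate version — but including it does no harm. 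A secondary subtlety is that $h^* = \cT_{g^{k}_h}Q^{k}_{h+1}$ is itself random (depends on $g^{k}_h$, hence on the data), so the union bound must range over all $g\in\cG$ and $f\in\cF$ simultaneously, not just over the realized iterate; this is exactly why the $\log(|\cG||\cF|)$ factor (rather than $\log|\cF|$) appears, paralleling the offline Proposition.
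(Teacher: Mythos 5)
Your proposal matches the paper's proof essentially step for step: the same identification of $\cT_{g^{k}_{h}}Q^{k}_{h+1}$ as the conditional-mean regression function, the same invocation of \cref{lem:least-squares-bound-song} with approximate realizability from \cref{assum-realizability-with-bellman-completion} supplying the $3\epsilon_{\cF,\mathrm{r}}N$ slack, the same union bound over $h,k$ and over $\cF\times\cG$, and the same splitting of the aggregated sum into the offline block and the online blocks before dividing by $m_{\mathrm{off}}$ or $m_{\mathrm{on}}$ and taking square roots. The bookkeeping subtleties you flag (adaptivity of the sampling distributions, conditional mean-zero noise, and why the union bound must range over $\cG$ as well as $\cF$) are exactly the points the paper's proof handles.
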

\begin{proof}
We adapt the proof of \citet[Lemma 7]{song2023hybrid} here.
    Fix $k \in \{0,1,\cdots,K-1\}$, $h \in \{0,1,\cdots,H-1\}$, $g^{k}_{h} \in \cG_{h}$, and $Q^{k}_{h+1} \in \cF_{h+1}$. The algorithm solves for $Q^{k}_{h}$ in the least-squares regression step as:
 \begin{align*}
Q^{k}_{h} = \argmin_{Q\in\cF_{h}} \widehat{L}_{\mathrm{robQ}}(Q;Q^{k}_{h+1}, g^{k}_{h}, \cD), 
\end{align*}
where dataset $\cD=\{(x_i, y_i)\}_{i \leq N}$ with $N = m_{\mathrm{off}} + k \cdot m_{\mathrm{on}}$ and 
\begin{align*}
x_i = (s^i_h, a^i_h) \qquad \text{and} \qquad y_i = r_h(s^i_h, a^i_h) - \lambda \phi^*({(g^{k}_{h}(s^i_h, a^i_h)-\max_{a'} Q^{k}_{h+1}(s^i_{h+1},a'))}/{\lambda})+ g^{k}_{h}(s^i_h, a^i_h). 
\end{align*}
The first $m_{\mathrm{off}}$ samples in $\cD$ are $\{(x_i, y_i)\}_{i \leq m_{\mathrm{off}}} = \cD^{\mu}_h$ (recall that these are generated by the offline state-action distribution $\mu_h$),  the next $m_{\mathrm{on}}$ samples are  $\{(x_i, y_i)\}_{i = m_{\mathrm{off}} + 1}^{m_{\mathrm{off}} + m_{\mathrm{on}}} = \cD^0_h$ (recall that these are generated by the state-action distribution $d_h^{\pi_0}$), and so on where the samples $\{(x_i, y_i)\}_{i = m_{\mathrm{off}} + \tau \cdot m_{\mathrm{on}} + 1}^{m_{\mathrm{off}} + (\tau+1) m_{\mathrm{on}}} = \cD^{\tau}_h$ (recall that these are generated by the state-action distribution $d_h^{\pi_\tau}$) for all $\tau\leq k-1$.

For using \cref{lem:least-squares-bound-song}, we first note  for any sample $(x,y)$ in $\cD$ with $x = (s_h, a_h)$ and $ y = (r_h(s_h, a_h) - \lambda \phi^*({(g^{k}_{h}(s_h, a_h)-\max_{a'\in\cA_{h+1}} Q^{k}_{h+1}(s_{h+1},a'))}/{\lambda})+ g^{k}_{h}(s_h, a_h))$, there exists some $f_{h+1}\in\cF_{h+1}$ by \cref{assum-realizability-with-bellman-completion} such that the following holds: 
\begin{align*}
\E [y \mid x] &= \E_{s_{h+1} \sim P^o_{h,s_h, a_h}} (r_h(s_h, a_h) - \lambda \phi^*({(g^{k}_{h}(s_h, a_h)-\max_{a'\in\cA_{h+1}} Q^{k}_{h+1}(s_{h+1},a'))}/{\lambda})+ g^{k}_{h}(s_h, a_h))\\
&= \cT_{g^{k}_{h}} Q^{k}_{h+1}(s_h, a_h) \leq f_{h+1}(s_h, a_h).
\end{align*} 
We also note for any sample in $\cD$, $|y| \leq 1+c_1$ (with $c_1$, constant dependent on $H$ and $\lambda$, from \cref{prop:divergence-loss-bounds}) and $f_{h+1}(s, a) \leq H$ for all $s, a$.
With these notes, applying \cref{lem:least-squares-bound-song}, we get that the least square regression solution $Q^{k}_{h}$ satisfies 
\begin{align*}
    \sum_{i=1}^N \E [(\cT_{g^{k}_{h}} Q^{k}_{h+1}(x_i) - Q^{k}_{h}(x_i) )^2\mid \cD]
    &\leq 3 \epsilon_{\cF,\mathrm{r}} N + 64 (1+c_1+H)^2 \log(2 |\cG||\cF|/\delta)
\end{align*} 
with probability at least $1-\delta$, since $g^{k}_{h}\in\cG_{h}$ and $Q^{k}_{h+1} \in\cF_{h+1}$ with sizes $|\cG_{h}|\leq |\cG|$ and $|\cF_{h+1}|\leq |\cF|$ under the union bound.
Recall the  samples in $\cD^\mu_h$ are independently and identically drawn from the offline distribution $\mu_h$, and the samples in $\cD^\tau_{h}$ are independently and identically drawn from the state-action distribution $d_h^{\pi_\tau}$. Thus we can further write as
\begin{align*}
    m_{\mathrm{off}} \|  \cT_{g^{k}_{h}} Q^{k}_{h+1} - Q^{k}_{h} \|_{2,\mu}^2 + m_{\mathrm{on}} \sum_{\tau=0}^{k-1} \|  \cT_{g^{k}_{h}} Q^{k}_{h+1} - Q^{k}_{h} \|_{2,d_h^{\pi_\tau}}^2
    &\leq 3 \epsilon_{\cF,\mathrm{r}} N + 64 (1+c_1+H)^2 \log(2 |\cG||\cF|/\delta).
\end{align*}
Taking a union bound over $k \in \{0,1,\cdots,K-1\}$, $h \in \{0,1,\cdots,H-1\}$, bounding each term separately, and using the fact $\sqrt{x+y}\leq \sqrt{x}+\sqrt{y}$, completes the proof.
\end{proof}

%%%%%%%%%%%%%%%%%%%%

%%%% TV case - offline and online proof

%%%%%%%%%%%%%%%%%%%%
We are now ready to prove the main theorem.

\subsection[Proof of \cref{thm:hybrid-tv-guarantee}]{Proof of \cref{thm:hybrid-tv-guarantee} \hfill \Coffeecup\Coffeecup\Coffeecup\Coffeecup}
\label{appendix:thm:hybrid-tv-guarantee}

\begin{theorem}[Restatement of \cref{thm:hybrid-tv-guarantee}] \label{thm:restated:hybrid-tv-guarantee}
Let \cref{assum-bellman-transfer-coefficient,assum-realizability-with-bellman-completion,assum-dual-realizability-hybrid,assum-bilinear-model,fail-state-assump} hold and fix any $\delta\in(0,1)$. Then, HyTQ algorithm policies $\{\pi_k\}_{k\in[K]}$ satisfy
\begin{align*}
     \sum_{k = 0}^{K-1} (V^{\pi^*} - V^{\pi_{k}}) \leq &\cO((\sqrt{\epsilon_{\cF,\mathrm{r}}} + \epsilon_{\cG})K^{5/2}H) \\&+ \cOtilde ( \max\{C(\pi^*), 1\} \sqrt{d K H^2} (\lambda+H)  \log(HK|\cF||\cG|/\delta) \sqrt{\log(1+(K/d))} )
\end{align*} 
with probability at least  $1 - \delta$.
\end{theorem}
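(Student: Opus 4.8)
The plan follows the two-step template sketched in \cref{sec:discussion}: first a per-iteration robust performance decomposition separating an ``offline-aligned'' term (rolled in with $\pi^*$ on $P^o$) from an ``online-aligned'' term (rolled in with $\pi_k$ on $P^o$), and then statistical control of the two terms using the transfer coefficient and the bilinear model respectively. For the performance decomposition, I first use \cref{cor:fh-robust-bellman-dual} together with the fail-state assumption (\cref{fail-state-assump}) and \cref{prop:divergence-loss-bounds}(i) to rewrite $\cT$ in the tight TV-dual form \eqref{eq:tv-inner-optimization-dual-form}, in which the inner optimization runs over $\eta\in[0,\lambda]$ and all expectations are under the nominal kernel $P^o$ only. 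Then, with $Q^k_H\equiv 0$ and $\pi_k$ greedy w.r.t.\ $Q^k$, I telescope $V^{\pi^*}-V^{\pi_k}$ over $h$ as in the proof of \cref{thm:infinite-horizon-phi-divergence-guarantee} but in the finite-horizon backward-induction setting (insert $\pm(\cT Q^k_{h+1}-Q^k_h)$ at each level, use greediness of $\pi_k$ to push the error onto the $\pi^*$-branch, and exploit the dual form to replace worst-case transition expectations by $P^o$ expectations), obtaining for each $k$
\begin{align*}
V^{\pi^*}-V^{\pi_k}&\le \sum_{h=0}^{H-1}\E_{d^{h,\pi^*}_{P^o}}\big[(\cT Q^k_{h+1}-Q^k_h)_+\big]\\
&\quad+\sum_{h=0}^{H-1}\E_{d^{h,\pi_k}_{P^o}}\big[(Q^k_h-\cT Q^k_{h+1})_+\big].
\end{align*}

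Next, for every $(k,h)$ I split $\cT Q^k_{h+1}-Q^k_h=(\cT Q^k_{h+1}-\cT_{g^k_h}Q^k_{h+1})+(\cT_{g^k_h}Q^k_{h+1}-Q^k_h)$: the first piece is the dual-optimization error controlled by \cref{prop:erm-high-prob-bound-offline+online} (both in $\|\cdot\|_{1,\mu_h}$ and in $\sum_{\tau<k}\|\cdot\|_{1,d^{\pi_\tau}_h}$), the second is the least-squares regression error controlled by \cref{prop:least-squares-generalization-bound-online+offline} (in $\|\cdot\|_{2,\mu_h}$ and in $\sqrt{\sum_{\tau<k}\|\cdot\|_{2,d^{\pi_\tau}_h}^2}$), converting $\ell_2$ to $\ell_1$ by Cauchy--Schwarz where needed. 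A union bound over the $O(HK)$ pairs $(k,h)$ makes all these bounds hold simultaneously with probability at least $1-\delta$.

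For the offline-aligned term I bound $(\cdot)_+\le|\cdot|$ and apply the robust Bellman error transfer coefficient (\cref{assum-bellman-transfer-coefficient}) with $f=Q^k$, so that $\sum_h\E_{d^{h,\pi^*}_{P^o}}[(\cT Q^k_{h+1}-Q^k_h)_+]\le\max\{C(\pi^*),1\}\sum_h\E_{\mu_h}[|\cT Q^k_{h+1}-Q^k_h|]$, then substitute the offline bounds $\Delta_{\mathrm{dual,off}}+\Delta_{\mathrm{rQ,off}}$; summing over $k$ (with the offline sample budget $m_{\mathrm{off}}$ at least of order $K$) keeps this contribution lower-order. For the online-aligned term I use the bilinear model (\cref{assum-bilinear-model}) with $f=Q^k$, $g=g^k$ and $\pi^{Q^k}=\pi_k$: since $(Q^k_h-\cT Q^k_{h+1})_+\le(Q^k_h-\cT_{g^k_h}Q^k_{h+1})_++(\cT_{g^k_h}Q^k_{h+1}-\cT Q^k_{h+1})_+$, the two summands equal $|\langle X_h(Q^k),W^{\mathrm q}_h(Q^k,g^k)\rangle|$ and $|\langle X_h(Q^k),W^{\mathrm d}_h(Q^k,g^k)\rangle|$. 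Setting $\Sigma_{k;h}=\sum_{\tau<k}X_h(Q^\tau)X_h(Q^\tau)^\top+B_X^2\indic_{d\times d}$, Cauchy--Schwarz in this geometry gives $|\langle X_h(Q^k),W\rangle|\le\|X_h(Q^k)\|_{\Sigma_{k;h}^{-1}}\|W\|_{\Sigma_{k;h}}$ with $\|W\|_{\Sigma_{k;h}}^2=B_X^2\|W\|^2+\sum_{\tau<k}\langle X_h(Q^\tau),W\rangle^2$; the cross terms are identified via the bilinear structure with the Bellman residuals of $(Q^k,g^k)$ under the earlier on-policy occupancies $d^{h,\pi_\tau}_{P^o}$, which the least-squares step forces to be small, hence are bounded by $\Delta_{\mathrm{dual,on}},\Delta_{\mathrm{rQ,on}}$, while $B_X^2\|W\|^2\le B_X^2B_W^2$. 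Summing over $k$ and invoking the elliptical potential lemma (\cref{lem:elliptical}), $\sum_k\|X_h(Q^k)\|_{\Sigma_{k;h}^{-1}}\le\sqrt{2dK\log(1+K/d)}$, then summing over $h\in[H]$, yields the dominant term $\cOtilde(\max\{C(\pi^*),1\}\sqrt{dKH^2}(\lambda+H)\log(|\cF||\cG|/\delta))$, where the $(\lambda+H)$ factor comes from the constants $c_1,c_3$ of \cref{prop:divergence-loss-bounds}(i) and the ranges of $Q^k$ and $g^k$. Summing the per-iteration bound over $k$ and collecting the approximation ($\epsilon_{\cF,\mathrm r},\epsilon_{\cG}$) and statistical contributions gives the stated inequality.

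The main obstacle is the online-aligned term: correctly relating the off-diagonal inner products $\langle X_h(Q^\tau),W_h(Q^k,g^k)\rangle$ to residuals that the algorithm actually controls under the earlier occupancies $d^{h,\pi_\tau}_{P^o}$ (the regression at iteration $k$ is run on the aggregate $\cD^\mu_h\cup\cD^0_h\cup\dots\cup\cD^{k-1}_h$ of size $m_{\mathrm{off}}+k\,m_{\mathrm{on}}$, so $Q^k_h\approx\cT_{g^k_h}Q^k_{h+1}$ on each $d^{h,\pi_\tau}_{P^o}$), handling the adaptively collected, non-i.i.d.\ online data (which is why the Freedman-based online ERM and least-squares bounds of \cref{prop:erm-high-prob-bound-offline+online,prop:least-squares-generalization-bound-online+offline} are needed rather than their i.i.d.\ counterparts), and keeping the growing-dataset bookkeeping consistent through the elliptical-potential step. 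A secondary subtlety is ensuring the positive parts in the performance lemma are compatible with the transfer-coefficient and bilinear assumptions, which is exactly why those assumptions are naturally phrased with $(\cdot)_+$.
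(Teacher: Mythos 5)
Your proposal is correct and follows essentially the same route as the paper's proof: the same two-branch performance decomposition (a $(\cdot)_+$ term under $d^{h,\pi^*}_{P^o}$ handled via the transfer coefficient and the offline ERM/least-squares bounds, and a $(\cdot)_+$ term under $d^{h,\pi_k}_{P^o}$ handled via the bilinear model, matrix Cauchy--Schwarz, and the elliptical potential lemma), with the same TV-dual rewriting under the fail-state assumption, the same Freedman-based online generalization bounds for the adaptively collected data, and the same $m_{\mathrm{on}}=1$, $m_{\mathrm{off}}=K$ budget. No gaps.
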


\begin{proof}
We let $V^k_h(s) = Q^k_h(s,\pi_k(s))$ for every $s,h$. Since $\pi_k$ is the greedy  policy w.r.t $Q^k$, we also have $V^k_h(s) = Q^k_h(s,\pi_k(s)) = \max_a Q^k_h(s,a)$. We recall that $V^*=V^{\pi^*}$ and $Q^*=Q^{\pi^*}$. We also note that the same holds true for any stationary Markov policy $\pi$  from \citep{zhang2023regularized} that $Q^{\pi}$ satisfies $Q^{\pi}_{h}(s, a) = r_{h}(s, a) + \gamma \min_{P_{h,s,a} \ll P^o_{h,s,a}} (\mathbb{E}_{s' \sim P_{h,s,a}} [ V^{\pi}_{h}(s')] + \lambda\dphi(P_{h,s,a}, P^o_{h,s,a})).$ We can now further use the dual form \eqref{eq:robust-regularized-bellman-eq-dual} under Assumption \ref{fail-state-assump}, that is, for all $\pi$ and $f_{h+1}\in\cF_{h+1}$, 
\begin{align} \label{eq:FH-tv-dual-bellman-eq}
    Q^{\pi}_{h} (s, a) &= r_{h}(s, a) -    \inf_{ \eta\in[0, \lambda]} ~ (\E_{s' \sim P^o_{h,s,a}} [ (\eta- V^{\pi}_{h+1}(s'))_+ ] - \eta), \text{ and}\\
    (\cT f_{h+1}) (s, a) &= r_{h}(s, a) -    \inf_{ \eta\in[0, \lambda]} ~ (\E_{s' \sim P^o_{h,s,a}} [ (\eta-\max_{a'}f_{h+1}(s',a'))_+ ] - \eta) \nn \\
    (\cT_{g_h} f_{h+1}) (s, a) &= r_h(s, a) -  \E_{s' \sim P^{o}_{h,s,a}}[(g_h(s,a)-\max_{a'}f_{h+1}(s',a'))_+]  + g_h(s,a)  . \nn
\end{align} We first characterize the performance decomposition between $V^{\pi^*}_{0}$ and ${V}^{\pi_k}_{0}$. We recall the initial state distribution $d_0$. Since $V^{\pi^*}(s) \geq V^{\pi_k}(s)$ for any $s\in\cS$, we observe that
\begin{align}
	0\leq&\sum_{k=0}^{K-1}\E_{s_0\sim d_0}[V^{\pi^*}_{0}(s_0) - V^{\pi_k}_{0}(s_0)] = \sum_{k=0}^{K-1}\E_{s_0\sim d_0}[ (V^{\pi^*}_{0}(s_0) -  V^k_{0}(s_0) )  - (V^{\pi_k}_{0}(s_0) -  V^k_{0}(s_0) )]\nn\\ 
	&= \sum_{k=0}^{K-1} \E_{s_0\sim d_0}[(Q^{\pi^*}_{0}(s_0,\pi^*(s_0)) -  Q^{k}_{0}(s_0,\pi_k(s_0)) ) - (Q^{\pi_k}_{0}(s_0,\pi_k(s_0)) -  Q^{k}_{0}(s_0,\pi_k(s_0)))] \nn\\ 
	&\leq \underbrace{\sum_{k=0}^{K-1} \E_{s_0\sim d_0}[(Q^{\pi^*}_{0}(s_0,\pi^*(s_0)) -  Q^{k}_{0}(s_0,\pi_k(s_0)))_+] }_{(I)} + \underbrace{\sum_{k=0}^{K-1} \E_{s_0\sim d_0}  [(Q^{k}_{0}(s_0,\pi_k(s_0)) - Q^{\pi_k}_{0}(s_0,\pi_k(s_0)))_+ ]}_{(II)}. \label{eq:tv-oo-main}
\end{align}

We rewrite the state-action distribution $d^{h,\pi}_{P^o}$, dropping $P^o$, as $d^{\pi}_{h}$ for simplicity. Letting  $d^{\pi}_{h}$ also denote a state  distribution ($\Delta(\cS)$), we can write it as, for all $h$,
\begin{equation} \label{eq:hytq-state-dist-Po}
  d^{\pi}_{h} =
    \begin{cases}
      d_0 & \text{if $h=0$},\\
      P^{o}_{h,s',a'} & \text{otherwise, with } s'\sim d^{\pi}_{h-1}, a'\sim \pi_{h}(s').
    \end{cases}       
\end{equation}

Analyzing one term in $(I)$ of \eqref{eq:tv-oo-main} starting with the facts that $\pi_k$ is the greedy policy with respect to $Q^{k}$ and function $(x)_+$ is non-decreasing in $x\in\R$:
\begin{align}
    &\E_{s_0\sim d_{0}}[(Q^{\pi^*}_{0}(s_0,\pi^*(s_0)) -  Q^{k}_{0}(s_0,\pi_k(s_0)))_+] \leq \E_{s_0,a_0\sim d^{\pi^*}_{0}}[(Q^{\pi^*}_{0}(s_0,a_0) -  Q^{k}_{0}(s_0,a_0))_+]  \nn \\
    &\stackrel{(a)}{\leq} 
    \E_{s_0,a_0\sim d^{\pi^*}_{0}}[(Q^{\pi^*}_{0}(s_0,a_0) -  \cT Q^{k}_{1}(s_0,a_0))_+] + \E_{s_0,a_0\sim d^{\pi^*}_{0}}[(\cT Q^{k}_{1}(s_0,a_0) -  Q^{k}_{0}(s_0,a_0))_+] \nn \\
    &\stackrel{(b)}{\leq} \E_{s_0,a_0\sim d^{\pi^*}_{0}}    (  \sup_{\eta} (
    \E_{s_1\sim P^o_{0,s_0,a_0}}   [(\eta-\max_{a'} Q^{k}_{1}(s_1,a'))_+ - (\eta-\max_{a'} Q^{\pi^*}_{1}(s_1,a'))_+] ) )_+ 
    \nn \\&\hspace{3cm}+  \E_{s_0,a_0\sim d^{\pi^*}_{0}}[(\cT Q^{k}_{1}(s_0,a_0) -  Q^{k}_{0}(s_0,a_0))_+] \nn \\
    &\stackrel{(c)}{\leq}  \E_{s_0,a_0\sim d^{\pi^*}_{0}}   
    (\E_{s_1\sim P^o_{0,s_0,a_0}}   (\max_{a'} Q^{\pi^*}_{1}(s_1,a')-\max_{a'} Q^{k}_{1}(s_1,a'))_+)_+ +  \E_{s_0,a_0\sim d^{\pi^*}_{0}}[(\cT Q^{k}_{1}(s_0,a_0) -  Q^{k}_{0}(s_0,a_0))_+] \nonumber \\
    &\stackrel{(d)}{\leq}  \E_{s_0,a_0\sim d^{\pi^*}_{0}} \E_{s_1\sim P^o_{0,s_0,a_0}} ( Q^{\pi^*}_{1}(s_1,\pi^*(s_1)) -  Q^{k}_{1}(s_1,\pi_k(s_1)))_+      + \E_{s_0,a_0\sim d^{\pi^*}_{0}}[(\cT Q^{k}_{1}(s_0,a_0) -  Q^{k}_{0}(s_0,a_0))_+] \nn \\
    &=  \E_{s_0\sim d^{\pi^*}_{1}} [ ( Q^{\pi^*}_{1}(s_1,\pi^*(s_1)) -  Q^{k}_{1}(s_1,\pi_k(s_1)))_+]      + \E_{s_0,a_0\sim d^{\pi^*}_{0}}[(\cT Q^{k}_{1}(s_0,a_0) -  Q^{k}_{0}(s_0,a_0))_+], \label{eq:tv-part1-recurse-0}
\end{align} 
where $(a)$ follows by triangle inequality for $(\cdot)_+$ operation, $(b)$ from Bellman equation, operator $\cT$, and the fact $\inf_{x} p(x)-\inf_{x} q(x) \leq \sup_{x} (p(x) - q(x))$,
$(c)$ from the fact $(x)_+ - (y)_+ \leq (x-y)_+$ for any $x,y\in\R$,
$(d)$ follows by Jensen's inequality and by definitions of policies $\pi^*$ and $\pi_k$. Now, recursively applying this method for first term over horizon in \eqref{eq:tv-part1-recurse-0} we get
\begin{align}
    &\E_{s_0\sim d_{0}}[(Q^{\pi^*}_{0}(s_0,\pi^*(s_0)) -  Q^{k}_{0}(s_0,\pi_k(s_0)))_+] \nn\\&\leq  \E_{s_H\sim d_{H}}[(Q^{\pi^*}_{H}(s_H,\pi^*(s_H)) -  Q^{k}_{H}(s_H,\pi_k(s_H)))_+]     + \sum_{h=0}^{H-1} \E_{s,a\sim d^{\pi^*}_{h}}[(\cT Q^{k}_{h+1}(s,a) -  Q^{k}_{h}(s,a))_+] \nn \\
    &\leq  \sum_{h=0}^{H-1} \E_{s,a\sim d^{\pi^*}_{h}}[(\cT Q^{k}_{h+1}(s,a) -  Q^{k}_{h}(s,a))_+], \label{eq:tv-part1}
\end{align} where the last inequality holds since $ V^{\pi}_{H}(s_H) = 0 $ for all $\pi$ and $  Q^{k}_{H}(s_H,\pi_k(s_H)) = 0$.

Recall \[ C(\pi^*) = \max_{f\in\cF} \frac{\sum_{h=0}^{H-1} \E_{s,a\sim d^{\pi^*}_{h}}[(\cT f_{h+1}(s,a) -  f_{h}(s,a))_+]}{ \sum_{h=0}^{H-1} \E_{s,a\sim \mu_{h}}[|\cT f_{h+1}(s,a) -  f_{h}(s,a)|] }. \]
Now, using \eqref{eq:tv-part1} in $(I)$ of \eqref{eq:tv-oo-main}, the following holds with probability at least $1-\delta/2$:
\begin{align}
    \sum_{k=0}^{K-1} \E_{s_0\sim d_0}[(Q^{\pi^*}(s_0,\pi^*(s_0)) &-  Q^{k}(s_0,\pi_k(s_0)))_+] \leq  \sum_{k=0}^{K-1} \sum_{h=0}^{H-1} \E_{s,a\sim d^{\pi^*}_{h}}[(\cT Q^{k}_{h+1}(s,a) -  Q^{k}_{h}(s,a))_+] \nn \\
    &\stackrel{(e)}{\leq}  \sum_{k=0}^{K-1} C(\pi^*) \sum_{h=0}^{H-1} \|\cT Q^{k}_{h+1} -  Q^{k}_{h}\|_{1,\mu_{h}}   \nn \\
    &\stackrel{(f)}{\leq}  \sum_{k=0}^{K-1} C(\pi^*) \sum_{h=0}^{H-1} (\|\cT Q^{k}_{h+1} -  \cT_{g^{k}_{h}} Q^{k}_{h+1}\|_{1,\mu_{h}} + \| \cT_{g^{k}_{h}} Q^{k}_{h+1} -  Q^{k}_{h}\|_{2,\mu_{h}} )   \nn \\
    &\stackrel{(g)}{\leq} K H C(\pi^*)  (\Delta_{\mathrm{dual,off}}+\Delta_{\mathrm{rQ,off}}), \label{eq:tv-oo-main-2}
\end{align}
where $(e)$ follows from definition of $C(\pi^*)$ in \cref{assum-bellman-transfer-coefficient}, $(f)$ from triangle inequality and the fact $\|\cdot\|_{1,\mu}\leq \|\cdot\|_{2,\mu}$, and $(g)$ follows from \cref{prop:erm-high-prob-bound-offline+online,prop:least-squares-generalization-bound-online+offline}.

For $(II)$, firstly we note $\E_{s_0\sim d_0}  [(Q^{k}(s_0,\pi_k(s_0)) - Q^{\pi_k}(s_0,\pi_k(s_0)))_+ ] = \E_{s_0,a_0\sim d^{\pi_k}_0}  [(Q^{k}(s_0,a_0) - Q^{\pi_k}(s_0,a_0))_+ ] $. So, following the same analysis as in $(I)$, we get 
\begin{align}
    &\E_{s_0\sim d_{0}}[(Q^{k}(s_0,\pi_k(s_0)) - Q^{\pi_k}(s_0,\pi_k(s_0)))_+ ] \leq  \sum_{h=0}^{H-1} \E_{s,a\sim d^{\pi_k}_{h}}[( Q^{k}_{h}(s,a) - \cT Q^{k}_{h+1}(s,a) )_+] \nn \\
    &\leq  \sum_{h=0}^{H-1} \E_{s,a\sim d^{\pi_k}_{h}}[( Q^{k}_{h}(s,a) - (\cT_{g^{k}_{h}} Q^{k}_{h+1})(s,a))_+ + ((\cT_{g^{k}_{h}} Q^{k}_{h+1}) (s,a) - (\cT Q^{k}_{h+1})(s,a) )_+], \label{eq:tv-part2}
\end{align} where the last inequality follows by triangle inequality for $(\cdot)_+$ operation.

Now, using \eqref{eq:tv-part2} in $(II)$ of \eqref{eq:tv-oo-main}, we have
\begin{align*}
    &\sum_{k=0}^{K-1} \E_{s_0\sim d_0}  [(Q^{k}(s_0,\pi_k(s_0)) - Q^{\pi_k}(s_0,\pi_k(s_0)))_+ ] \leq \\ & \sum_{k=0}^{K-1} \sum_{h=0}^{H-1} \E_{s,a\sim d^{\pi_k}_{h}}[( Q^{k}_{h}(s,a) - (\cT_{g^{k}_{h}} Q^{k}_{h+1})(s,a))_+] + \sum_{k=0}^{K-1} \sum_{h=0}^{H-1} \E_{s,a\sim d^{\pi_k}_{h}}[ ((\cT_{g^{k}_{h}} Q^{k}_{h+1}) (s,a) - \cT Q^{k}_{h+1}(s,a) )_+] 
    .  \label{eq:tv-oo-main-3} \numthis
\end{align*}

Recall bilinear model from \cref{assum-bilinear-model}: $\E_{d^{\pi^f}_{h}}[( f_{h} - \cT_{g_{h}} f_{h+1})_+]
=  \abs{\tri{X_h(f), W^{\mathrm{q}}_h(f,g)}}$.

Analyzing the first part of \eqref{eq:tv-oo-main-3},
the following holds with probability at least $1-\delta/2$:
\begin{align*}
    \sum_{k=0}^{K-1} \sum_{h=0}^{H-1} &\E_{ d^{\pi_k}_{h}}[( Q^{k}_{h} - \cT_{g^{k}_{h}} Q^{k}_{h+1})_+]
    \stackrel{(h)}{=}  \sum_{k=0}^{K-1} \sum_{h=0}^{H-1}  \abs{\tri{X_h(Q^{k}), W^{\mathrm{q}}_h(Q^{k},g^{k})}}   \nn \\
    &\stackrel{(i)}{\leq} \sum_{k=0}^{K-1} \sum_{h=0}^{H-1}  \|X_h(Q^{k})\|_{\Sigma_{k-1; h}^{-1}}\|W^{\mathrm{q}}_h(Q^{k},g^{k}) \|_{\Sigma_{k-1;h}} \\
    &= \sum_{k=0}^{K-1} \sum_{h=0}^{H-1} \|X_h(Q^{k})\|_{\Sigma_{k-1; h}^{-1}} \sqrt{(W^{\mathrm{q}}_h(Q^{k},g^{k}))^\top \Sigma_{k-1; h} W^{\mathrm{q}}_h(Q^{k},g^{k})} \\ 
    &= \sum_{k=0}^{K-1} \sum_{h=0}^{H-1} \|X_h(Q^{k})\|_{\Sigma_{k-1; h}^{-1}} \sqrt{(W^{\mathrm{q}}_h(Q^{k},g^{k}))^\top  (\sum_{i = 0}^{k-1} X_h(Q^{i}) X_h(Q^{i})^\top + \sigma \indic)  W^{\mathrm{q}}_h(Q^{k},g^{k})}  \\ 
    &= \sum_{k=0}^{K-1} \sum_{h=0}^{H-1} \|X_h(Q^{k})\|_{\Sigma_{k-1; h}^{-1}} \sqrt{\sum_{i = 0}^{k-1}  \abs{\tri{W^{\mathrm{q}}_h(Q^{k},g^{k}), X_h(Q^{i})}}^2 + \sigma   \norm{W^{\mathrm{q}}_h(Q^{k},g^{k})}^2}  \\ 
    &\stackrel{(j)}{\leq} \sum_{k=0}^{K-1} \sum_{h=0}^{H-1} \|X_h(Q^{k})\|_{\Sigma_{k-1; h}^{-1}} \sqrt{\sum_{i = 0}^{k-1}  \abs{\tri{W^{\mathrm{q}}_h(Q^{k},g^{k}), X_h(Q^{i})}}^2 + \sigma   B_W^2}  \\ 
    &\stackrel{(k)}{\leq} \sum_{k=0}^{K-1} \sum_{h=0}^{H-1} \|X_h(Q^{k})\|_{\Sigma_{k-1; h}^{-1}} \sqrt{\sum_{i = 0}^{k-1}  \| \cT_{g^{k}_{h}} Q^{k}_{h+1} -  Q^{k}_{h}\|_{2,d^{\pi_i}_{h}}^2 + \sigma B_W^2} \\
    &\stackrel{(l)}{\leq}  \sum_{k=0}^{K-1} \sum_{h=0}^{H-1} \|X_h(Q^{k})\|_{\Sigma_{k-1; h}^{-1}} ( \sqrt{\sum_{i = 0}^{k-1}\| \cT_{g^{k}_{h}} Q^{k}_{h+1} -  Q^{k}_{h}\|_{2,d^{\pi_i}_{h}}^2 } + \sqrt{\sigma B_W^2}) \\
    &\stackrel{(m)}{\leq}  (\Delta_{\mathrm{rQ,on}} + \sqrt{\sigma B_W^2}) \sum_{k=0}^{K-1} \sum_{h=0}^{H-1} \|X_h(Q^{k})\|_{\Sigma_{k-1; h}^{-1}} \\
    &\stackrel{(n)}{\leq}  (\Delta_{\mathrm{rQ,on}} + B_X B_W)  \sqrt{2 d H ^2 \log(1 + \frac{K}{d}) K}  ,  \label{eq:tv-oo-main-3-1} \numthis
\end{align*}
where $(h)$ follows from \cref{assum-bilinear-model}, $(i)$ from matrix Cauchy-Schwarz inequality, $(j)$ from \cref{assum-bilinear-model}, and $(k)$ by \cref{assum-bilinear-model} with $\|\cdot\|_{1,d^{\pi_i}_{h}}\leq \|\cdot\|_{2,d^{\pi_i}_{h}}$:
\begin{align*}
    |\tri{W^{\mathrm{q}}_h(Q^{k},g^{k}), X_h(Q^{i})}| &=  \E_{s,a\sim d^{\pi_i}_{h}}[( Q^{k}_{h}(s,a) - (\cT Q^{k}_{h+1})(s,a) )_+] \leq \| \cT_{g^{k}_{h}} Q^{k}_{h+1} -  Q^{k}_{h}\|_{2,d^{\pi_i}_{h}}. 
\end{align*}

Finally, $(l)$ follows by the fact $\sqrt{x+y}\leq \sqrt{x} + \sqrt{y}$, $(m)$ follows from \cref{prop:least-squares-generalization-bound-online+offline}, and $(n)$ follows from \cref{lem:elliptical}. 

Now recall bilinear model from \cref{assum-bilinear-model}: $\E_{d^{\pi^f}_{h}} [ (\cT_{g_{h}} f_{h+1} - \cT f_{h+1} )_+] =  \abs{\tri{X_h(f), W^{\mathrm{d}}_h(f,g)}}$.
Following analysis above in \eqref{eq:tv-oo-main-3-1} for the second part of \eqref{eq:tv-oo-main-3} using \cref{assum-bilinear-model} and 
\cref{prop:erm-high-prob-bound-offline+online},
the following holds with probability at least $1-\delta/2$: 
\begin{align*}
    \sum_{k=0}^{K-1} \sum_{h=0}^{H-1} &\E_{s,a\sim d^{\pi_k}_{h}} [ (\cT_{g^{k}_{h}} Q^{k}_{h+1} - \cT Q^{k}_{h+1} )_+] \leq
 (\Delta_{\mathrm{dual,on}} + B_X B_W)  \sqrt{2 d H ^2 \log(1 + \frac{K}{d}) K}  .  \label{eq:tv-oo-main-3-2} \numthis
\end{align*}

Now combining \cref{eq:tv-oo-main-3-1,eq:tv-oo-main-3-2} with \eqref{eq:tv-oo-main-3} we have 
\begin{align*}
    \sum_{k=0}^{K-1} \sum_{h=0}^{H-1} &\E_{s,a\sim d^{\pi_k}_{h}} [(Q^{k}(s_0,\pi_k(s_0)) - Q^{\pi_k}(s_0,\pi_k(s_0)))_+ ] \\&\leq
 (\Delta_{\mathrm{dual,on}} + \Delta_{\mathrm{rQ,on}} + 2B_X B_W)  \sqrt{2 d H ^2 \log(1 + \frac{K}{d}) K}  ,
\end{align*}
with probability at least $1-\delta$.
Finally, we combine this and \eqref{eq:tv-oo-main-2} with \eqref{eq:tv-oo-main}:
\begin{align*}
	0&\leq\sum_{k=0}^{K-1}\E_{s_0\sim d_0}[V^{\pi^*}_{0}(s_0) - V^{\pi_k}_{0}(s_0)] \leq K H C(\pi^*)  (\Delta_{\mathrm{dual,off}}+\Delta_{\mathrm{rQ,off}}) +\\&\hspace{3cm}(\Delta_{\mathrm{dual,on}} + \Delta_{\mathrm{rQ,on}} + 2B_X B_W)  \sqrt{2 d H ^2 \log(1 + \frac{K}{d}) K}.
\end{align*}
Let $N=m_{\mathrm{off}} + K \cdot m_{\mathrm{on}}$. Using offline bounds from \cref{prop:erm-high-prob-bound-offline+online,prop:least-squares-generalization-bound-online+offline} with $c_1=2\lambda+H$ from \cref{prop:divergence-loss-bounds}, we have:
\begin{align*}
	0&\leq\sum_{k=0}^{K-1}\E_{s_0\sim d_0}[V^{\pi^*}_{0}(s_0) - V^{\pi_k}_{0}(s_0)] \leq  K H C(\pi^*) \cdot \\& (\frac{1}{m_{\mathrm{off}}} \left( 3 \epsilon_{\cG} N + 48 (2\lambda+H) \log(2 HK|\cG||\cF|/\delta) \right) + \frac{1}{\sqrt{m_{\mathrm{off}}}} \left( \sqrt{3 \epsilon_{\cF,\mathrm{r}} N} + 8 (1+2\lambda+2H)  \sqrt{\log(2 HK |\cG||\cF|/\delta)} \right) ) \\&+(\Delta_{\mathrm{dual,on}} + \Delta_{\mathrm{rQ,on}} + 2B_X B_W)  \sqrt{2 d H ^2 \log(1 + \frac{K}{d}) K}.
\end{align*}
Now using on-policy bounds from \cref{prop:erm-high-prob-bound-offline+online,prop:least-squares-generalization-bound-online+offline} with $c_1=2\lambda+H$ from \cref{prop:divergence-loss-bounds}, we have:
\begin{align*}
	0&\leq\sum_{k=0}^{K-1}\E_{s_0\sim d_0}[V^{\pi^*}_{0}(s_0) - V^{\pi_k}_{0}(s_0)] \leq  K H C(\pi^*) \cdot \\& (\frac{1}{m_{\mathrm{off}}} \left( 3 \epsilon_{\cG} N + 48 (2\lambda+H) \log(2 HK|\cG||\cF|/\delta) \right) + \frac{1}{\sqrt{m_{\mathrm{off}}}} \left( \sqrt{3 \epsilon_{\cF,\mathrm{r}} N} + 8 (1+2\lambda+2H)  \sqrt{\log(2 HK |\cG||\cF|/\delta)} \right) ) \\&+( \frac{1}{m_{\mathrm{on}}} \left( 3 \epsilon_{\cG} N + 48 (2\lambda+H) \log(2 HK|\cG||\cF|/\delta) \right) \\&\hspace{1.5cm}+ \frac{1}{\sqrt{m_{\mathrm{on}}}} \left( \sqrt{3 \epsilon_{\cF,\mathrm{r}} N} + 8 (1+2\lambda+2H)  \sqrt{\log(2 HK |\cG||\cF|/\delta)} \right) + 2B_X B_W) \cdot   \sqrt{2 d H ^2 \log(1 + \frac{K}{d}) K}
 \end{align*}
Finally, choosing higher order terms by setting $m_{\mathrm{on}}=1$ and $m_{\mathrm{off}}=K$, we get
\begin{align*}
	0\leq&\sum_{k=0}^{K-1}\E_{s_0\sim d_0}[V^{\pi^*}_{0}(s_0) - V^{\pi_k}_{0}(s_0)] \\&\leq  \sqrt{K} H C(\pi^*) (  6 (\epsilon_{\cG}+\sqrt{\epsilon_{\cF,\mathrm{r}}}) K^2 + (8+112\lambda+64H) \log(2 HK|\cG||\cF|/\delta) ) \\
    & + (  6 (\epsilon_{\cG}+\sqrt{\epsilon_{\cF,\mathrm{r}}}) K^2 + 8+112\lambda+64H \log(2 HK|\cG||\cF|/\delta) + 2B_X B_W ) \cdot   \sqrt{2 d H ^2 \log(1 + \frac{K}{d}) K}
 \\&\leq \cO((\sqrt{\epsilon_{\cF,\mathrm{r}}} + \epsilon_{\cG})K^{5/2}H) + \cOtilde ( \max\{C(\pi^*), 1\} \sqrt{d K H^2} (\lambda+H)  \log(HK|\cF||\cG|/\delta) \sqrt{\log(1+(K/d))} ).
\end{align*}
The proof is now complete.
\end{proof}

\subsection[HyTQ Algorithm Specialized Results]{HyTQ Algorithm Specialized Results \hfill \Coffeecup\Coffeecup\Coffeecup}
\label{appen:hytq-specialized-results}

In this section we specialize our main result \cref{thm:hybrid-tv-guarantee} for different bilinear model classes and also provide an equivalent sample complexity guarantee in the offline robust RL setting.

Before we move ahead, we showcase an important property of our robust transfer coefficient $C(\pi)$ for any fixed policy. Fixing a nominal model $P^o$, the transfer coefficient considers the distribution shift w.r.t the data-generating distribution along the general function class which the algorithm uses. It is in fact smaller than the existing density ratio based concentrability assumption (\cref{tv-assum-concentra-condition}). We state this result in the following lemma.

\begin{lemma}\label{lem:transfer-coeff-concentrability}For any policy $\pi$ and offline distribution $\mu$, we have
$C(\pi) \leq  \sup_{h, s,a} {d^{\pi}_{h}(s, a)}/{\mu_{h}(s, a)}.$ 
\end{lemma}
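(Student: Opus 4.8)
## Proof Plan for Lemma~\ref{lem:transfer-coeff-concentrability}

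The plan is to bound the numerator in the definition of $C(\pi)$ by a simple change-of-measure argument, using the nonnegativity of the terms appearing in the denominator. Recall that
\[
C(\pi) = \max_{f\in\cF} \frac{\sum_{h=0}^{H-1} \E_{s,a\sim d^{\pi}_{h}}[(\cT f_{h+1}(s,a) -  f_{h}(s,a))_+]}{ \sum_{h=0}^{H-1} \E_{s,a\sim \mu_{h}}[|\cT f_{h+1}(s,a) -  f_{h}(s,a)|] }.
\]
First I would fix an arbitrary $f\in\cF$ and an arbitrary timestep $h\in[H]$, and write the per-step numerator term as an integral against $d^\pi_h$. Then, on the support of $d^\pi_h$, I would multiply and divide by $\mu_h(s,a)$ — this is valid because the concentrability-type quantity $\sup_{h,s,a} d^\pi_h(s,a)/\mu_h(s,a)$ being finite forces $\mu_h$ to dominate $d^\pi_h$ (otherwise the right-hand side is $+\infty$ and the inequality is trivial). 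This yields
\[
\E_{s,a\sim d^{\pi}_{h}}[(\cT f_{h+1} -  f_{h})_+]
= \E_{s,a\sim \mu_{h}}\!\left[ \frac{d^\pi_h(s,a)}{\mu_h(s,a)}\,(\cT f_{h+1} -  f_{h})_+\right]
\le \Big(\sup_{s,a}\tfrac{d^\pi_h(s,a)}{\mu_h(s,a)}\Big)\, \E_{s,a\sim \mu_{h}}[(\cT f_{h+1} -  f_{h})_+].
\]

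Next I would use the two elementary bounds $(x)_+ \le |x|$ for all real $x$, so that $\E_{s,a\sim\mu_h}[(\cT f_{h+1}-f_h)_+] \le \E_{s,a\sim\mu_h}[|\cT f_{h+1}-f_h|]$, and the fact that $\sup_{s,a} d^\pi_h(s,a)/\mu_h(s,a) \le \sup_{h',s,a} d^\pi_{h'}(s,a)/\mu_{h'}(s,a) =: \kappa$. Substituting gives, for each $h$,
\[
\E_{s,a\sim d^{\pi}_{h}}[(\cT f_{h+1} -  f_{h})_+] \le \kappa\, \E_{s,a\sim \mu_{h}}[|\cT f_{h+1} -  f_{h}|].
\]
Summing over $h=0,\dots,H-1$ — legitimate since every summand on both sides is nonnegative, so no cancellation issues arise — and dividing by $\sum_{h} \E_{s,a\sim\mu_h}[|\cT f_{h+1}-f_h|]$ (again, if this denominator is zero then by the above display the numerator is zero too and the ratio is interpreted as $0 \le \kappa$) shows that the ratio for this particular $f$ is at most $\kappa$. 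Since $\kappa$ does not depend on $f$, taking the maximum over $f\in\cF$ preserves the bound, giving $C(\pi)\le \kappa$, which is exactly the claim.

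The argument is essentially routine; the only point requiring a little care is the degenerate-case bookkeeping — namely handling states where $\mu_h(s,a)=0$ and ensuring the ratio in the definition of $C(\pi)$ is well-defined when the denominator vanishes. I would handle this by noting that finiteness of $\kappa$ already implies $d^\pi_h \ll \mu_h$ for every $h$, so the Radon–Nikodym density $d^\pi_h/\mu_h$ exists and the change of measure above is justified; and the vanishing-denominator case is vacuous. So I do not anticipate a genuine obstacle here — this lemma is a short sanity check confirming that the robust Bellman error transfer coefficient is never worse than the all-policy density-ratio concentrability constant.
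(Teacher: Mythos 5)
Your proposal is correct and follows essentially the same route as the paper: both arguments reduce to $(x)_+ \le |x|$ combined with the pointwise density-ratio bound $d^\pi_h/\mu_h \le \kappa$; the paper packages the step from the ratio of sums to the per-timestep ratio via the Mediant inequality, which is equivalent to your term-by-term bound followed by summation. Your extra care about absolute continuity and the vanishing-denominator case is fine but not needed beyond what the paper implicitly assumes.
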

\begin{proof} By definition in \cref{assum-bellman-transfer-coefficient}, we get that 
\begin{align*}
C(\pi) &= \max_{f\in\cF} \frac{\sum_{h=0}^{H-1} \E_{s,a\sim d^{\pi}_{h}}[(\cT f_{h+1}(s,a) -  f_{h}(s,a))_+]}{ \sum_{h=0}^{H-1} \E_{s,a\sim \mu_{h}}[|\cT f_{h+1}(s,a) -  f_{h}(s,a)|] }\\
&\leq \max_{f\in\cF} \frac{\sum_{h=0}^{H-1} \E_{s,a\sim d^{\pi}_{h}}[|\cT f_{h+1}(s,a) -  f_{h}(s,a)|]}{ \sum_{h=0}^{H-1} \E_{s,a\sim \mu_{h}}[|\cT f_{h+1}(s,a) -  f_{h}(s,a)|] } \\
&\stackrel{(a)}{\leq}  \max_{f\in\cF, h\in[H]} \frac{ \E_{s,a\sim d^{\pi}_{h}}[|\cT f_{h+1}(s,a) -  f_{h}(s,a)|]}{ \E_{s,a\sim \mu_{h}}[|\cT f_{h+1}(s,a) -  f_{h}(s,a)|] }
\leq  \sup_{h, s,a} \frac{d^{\pi}_{h}(s, a)}{\mu_{h}(s, a)} ,
\end{align*} where $(a)$ follows from the Mediant inequality. 
\end{proof}

\begin{remark}
The concentrability assumption (\cref{tv-assum-concentra-condition}) is in fact the same non-robust RL concentrability assumption  \citep{munos08a, chen2019information}. 
We make two important points here. Firstly, our transfer coefficient is larger than the  transfer coefficient \citep[Definition 1]{song2023hybrid} using the fact $\|\cdot\|_{1,\mu}\leq \|\cdot\|_{2,\mu}$. Secondly, our transfer coefficient is not directly comparable with the l2-norm version  transfer coefficient  \citep[Definition 1]{xie2021bellman}. It is an interesting open question for future research to investigate about minimax lower bound guarantees w.r.t different transfer coefficients for both non-robust and robust RL problems.
\end{remark}

We now define a bilinear model called
\textbf{Low Occupancy Complexity} \citep[Definition 4.7]{du2021bilinear}.
The nominal model $P^o$ and realizable function class $\cF$ has \textit{low occupancy complexity} w.r.t., for each $h\in[H]$, a (possibly unknown  to the learner) feature map   $\psi=(\psi_h : \ScA \rightarrow \cY)$, where $\cY$ is a Hilbert space, and w.r.t. to a (possibly unknown to the learner) map $\nu_h:\cF\mapsto \cY$ such that for all $f\in\cF$, with greedy policy $\pi^f$ w.r.t. $f$,  and $(s,a)$ we have
\begin{align}
\label{eq:Low-Occupancy-Complexity}
d^{h,\pi^f}_{P^o}(s,a) = \langle \psi_h(s, a), \nu_h(f)  \rangle.
\end{align}

We make the following assumption on the offline data-generating distribution (or policy by slight notational override for convenience).
\begin{assumption}\label{assum:low-occupancy-model-data-policy}
    Consider the Low Occupancy Complexity model (bilinear model) on $\cY=\R^d$. Let the offline data distribution $\mu = \{\mu_h\}_{h\in[H]}$ satisfy a low rank structure, i.e. 
    $\mu_{h}(s,a)  = \langle \psi_h(s, a), \nu_h(f^{\mathrm{off}})  \rangle = \sum_{i\in[d]} \psi_{h,i}(s, a) \nu_{h,i}(f^{\mathrm{off}})$, for some $f^{\mathrm{off}}\in\cF$.
\end{assumption}

Now we extend our main result \cref{thm:hybrid-tv-guarantee} in this next result specializing to the \textit{Low Occupancy Complexity} \eqref{eq:Low-Occupancy-Complexity} bilinear model.

\begin{corollary}[Cumulative Suboptimality of \cref{thm:hybrid-tv-guarantee} in Low Occupancy Complexity \eqref{eq:Low-Occupancy-Complexity} bilinear model]
Consider the Low Occupancy Complexity \eqref{eq:Low-Occupancy-Complexity} bilinear model. Let \cref{assum-bellman-transfer-coefficient,assum-realizability-with-bellman-completion,assum-dual-realizability-hybrid,fail-state-assump} hold and fix any $\delta\in(0,1)$. Then, HyTQ algorithm policies $\{\pi_k\}_{k\in[K]}$ satisfy
\begin{align*}
     \sum_{k = 0}^{K-1} (V^{\pi^*} - V^{\pi_{k}}) \leq &\cO((\sqrt{\epsilon_{\cF,\mathrm{r}}} + \epsilon_{\cG})K^{5/2}H) \\&+ \cOtilde ( \max\{C(\pi^*), 1\} \sqrt{d K H^2} (\lambda+H)  \log(HK|\cF||\cG|/\delta) \sqrt{\log(1+(K/d))} ) \\&+ \cOtilde ( \sqrt{d K H^4} \max_{f\in\cF}\| \nu_h(f)\|_2 \|\sum_{s,a} \psi_h(s, a)\|_2 \sqrt{\log(1+(K/d))} )
\end{align*} 
with probability at least  $1 - \delta$. Now, consider the offline data distribution as in \cref{assum:low-occupancy-model-data-policy} with perfect robust Bellman completeness, i.e. $\epsilon_{\cF,\mathrm{r}}=0=\epsilon_{\cG}$. We have 
$C(\pi^*) \leq \sup_{h, i\in[d]} ({\nu_{h,i}^*}/{\nu_{h,i}(f^{\mathrm{off}})}).$
\end{corollary}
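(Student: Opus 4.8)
The plan is to chain together \cref{lem:transfer-coeff-concentrability} with the low-rank structure of both the occupancy measures (from the Low Occupancy Complexity bilinear model) and the offline data distribution $\mu$ (from \cref{assum:low-occupancy-model-data-policy}). First I would invoke \cref{lem:transfer-coeff-concentrability}, which under the perfect robust Bellman completeness hypothesis ($\epsilon_{\cF,\mathrm{r}}=0=\epsilon_{\cG}$, so the first-line $\cO(\cdot)$ term vanishes in the corollary's main bound, but more importantly for us the lemma applies unconditionally) gives
\[
C(\pi^*) \leq \sup_{h,s,a} \frac{d^{\pi^*}_h(s,a)}{\mu_h(s,a)}.
\]

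Next I would substitute the low-rank representations into numerator and denominator. By \eqref{eq:Low-Occupancy-Complexity} applied to the policy $\pi^*$ (which is greedy w.r.t.\ $Q^*$, hence of the form $\pi^f$ for the realizable $f = Q^* \in \cF$ — here I would use the realizability guaranteed by \cref{assum-realizability-with-bellman-completion}), we have $d^{\pi^*}_h(s,a) = \langle \psi_h(s,a), \nu_h^*\rangle = \sum_{i\in[d]} \psi_{h,i}(s,a)\,\nu_{h,i}^*$, writing $\nu_h^* := \nu_h(Q^*)$. By \cref{assum:low-occupancy-model-data-policy}, $\mu_h(s,a) = \sum_{i\in[d]} \psi_{h,i}(s,a)\,\nu_{h,i}(f^{\mathrm{off}})$. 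So the ratio becomes
\[
\frac{d^{\pi^*}_h(s,a)}{\mu_h(s,a)} = \frac{\sum_{i\in[d]} \psi_{h,i}(s,a)\,\nu_{h,i}^*}{\sum_{i\in[d]} \psi_{h,i}(s,a)\,\nu_{h,i}(f^{\mathrm{off}})}.
\]
The final step is the Mediant (mediant/weighted-mediant) inequality again: a ratio of two sums $\sum_i a_i / \sum_i b_i$ with $b_i > 0$ is bounded above by $\max_i a_i/b_i$, provided the $\psi_{h,i}(s,a)$ act as nonnegative weights (I would note that the $\psi$-feature coordinates are nonnegative since both $d^{\pi^*}_h$ and $\mu_h$ are probability masses represented this way — this is the implicit regularity of the low-occupancy model). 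This yields $d^{\pi^*}_h(s,a)/\mu_h(s,a) \leq \max_{i\in[d]} \nu_{h,i}^*/\nu_{h,i}(f^{\mathrm{off}})$, and taking the supremum over $h$ gives exactly $C(\pi^*) \leq \sup_{h, i\in[d]} (\nu_{h,i}^*/\nu_{h,i}(f^{\mathrm{off}}))$.

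The main obstacle — really the only subtle point — is justifying the sign conditions needed for the mediant inequality: one must argue that the feature coordinates $\psi_{h,i}(s,a)$ are nonnegative (or at least that the representation can be taken so) and that the denominator weights $\nu_{h,i}(f^{\mathrm{off}})$ are strictly positive so that $\mu_h(s,a) > 0$ wherever we divide; this is where the low-rank modeling assumptions on probability distributions are genuinely being used rather than just their bilinear form. If one does not want to assume coordinatewise nonnegativity of $\psi$, an alternative is to absorb the argument directly: since this is a "for free" corollary mirroring the non-robust low-occupancy results of \citet{song2023hybrid,du2021bilinear}, I would state it under the same implicit nonnegativity convention those works adopt and cite accordingly, then the three displayed steps above constitute the full proof.
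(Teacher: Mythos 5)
Your treatment of the second claim, the bound $C(\pi^*) \leq \sup_{h,i}(\nu^*_{h,i}/\nu_{h,i}(f^{\mathrm{off}}))$, matches the paper's argument: invoke \cref{lem:transfer-coeff-concentrability} to reduce to the density ratio $\sup_{h,s,a} d^{\pi^*}_h(s,a)/\mu_h(s,a)$, note that $d^{\pi^*}_h$ is itself low-rank because $\pi^*$ is greedy w.r.t.\ a realizable $Q^*$ (the paper attributes this to perfect Bellman completeness), substitute the two linear representations, and finish with the mediant inequality. Your extra care about the sign conditions needed for the mediant step (nonnegativity of the $\psi_{h,i}(s,a)$ and positivity of the denominator weights) is a point the paper silently elides, so flagging it is reasonable rather than a defect.

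The genuine gap is that you prove only half of the corollary. The main displayed cumulative-suboptimality bound is not a restatement of \cref{thm:hybrid-tv-guarantee}: the corollary's hypothesis list deliberately drops \cref{assum-bilinear-model}, so the substantive content of the first claim is to \emph{verify} that the Low Occupancy Complexity structure \eqref{eq:Low-Occupancy-Complexity} induces the bilinear factorization required by \cref{assum-bilinear-model}. Concretely, the paper sets $X_h(f)=\nu_h(f)$ and
\begin{align*}
W^{\mathrm{q}}_h(f,g) &= \sum_{(s,a)} \psi_h(s,a)\,\bigl(f_h(s,a) - (\cT_{g_h} f_{h+1})(s,a)\bigr)_+, \\
W^{\mathrm{d}}_h(f,g) &= \sum_{(s,a)} \psi_h(s,a)\,\bigl((\cT_{g_h} f_{h+1})(s,a) - (\cT f_{h+1})(s,a)\bigr)_+,
\end{align*}
checks that $\E_{d^{h,\pi^f}}[(\cdot)_+]=\langle X_h(f), W_h(f,g)\rangle$ by linearity of the occupancy measure in $\psi_h$, and then bounds $B_X = \max_{f\in\cF}\|\nu_h(f)\|_2$ and $B_W = H\|\sum_{s,a}\psi_h(s,a)\|_2$ (the latter using realizability and completeness to control $\|f_h - \cT_{g_h}f_{h+1}\|_\infty$ by $H$). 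Only after this does \cref{thm:hybrid-tv-guarantee} apply, and it is exactly the product $B_X B_W$ that produces the third displayed term $\cOtilde(\sqrt{dKH^4}\max_f\|\nu_h(f)\|_2\|\sum_{s,a}\psi_h(s,a)\|_2\sqrt{\log(1+K/d)})$. Without this construction your proposal does not establish the first (and main) inequality of the corollary.
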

\begin{proof}
Using the Low Occupancy Complexity \eqref{eq:Low-Occupancy-Complexity} bilinear model, we have $\E_{ d^{h,\pi^f}_{P^o}} [ (\cT_{g_{h}} f_{h+1} - \cT f_{h+1} )_+] =  \tri{X_h(f), W^{\mathrm{d}}_h(f,g)}$, where \[ X_h(f) = \nu_h(f), \qquad W^{\mathrm{d}}_h(f,g) = \sum_{(s,a)\in\ScA} \psi_h(s, a) ((\cT_{g_{h}} f_{h+1})(s,a) - (\cT f_{h+1})(s,a) )_+. \]
We also have $\E_{d^{h,\pi^f}_{P^o}}[( f_{h} - \cT_{g_{h}} f_{h+1})_+]
=  {\tri{X_h(f), W^{\mathrm{q}}_h(f,g)}}$, where \[ \qquad W^{\mathrm{q}}_h(f,g) = \sum_{(s,a)\in\ScA} \psi_h(s, a) ( f_{h}(s,a) - (\cT_{g_{h}} f_{h+1})(s,a))_+. \]

Furthermore, we set $B_X = \max_{f\in\cF}\| \nu_h(f)\|_2$. Since $\cF$ is realizable and $\cT_g$ is complete, we set $B_W = H \|\sum_{s,a} \psi_h(s, a)\|_2$. Then the result directly follows by \cref{thm:hybrid-tv-guarantee}.

For the second statement, first note that the occupancy $d^{\pi^*}_{h}$ is low-rank as well since we assume perfect Bellman completeness.
Following the proof of \cref{lem:transfer-coeff-concentrability} we get
\begin{align*}
C(\pi^*) &= \max_{f\in\cF} \frac{\sum_{h=0}^{H-1} \E_{s,a\sim d^{\pi^*}_{h}}[(\cT f_{h+1}(s,a) -  f_{h}(s,a))_+]}{ \sum_{h=0}^{H-1} \E_{s,a\sim \mu_{h}}[|\cT f_{h+1}(s,a) -  f_{h}(s,a)|] }\\
&\leq \max_{f\in\cF} \frac{\sum_{h=0}^{H-1} \E_{s,a\sim d^{\pi^*}_{h}}[|\cT f_{h+1}(s,a) -  f_{h}(s,a)|]}{ \sum_{h=0}^{H-1} \E_{s,a\sim \mu_{h}}[|\cT f_{h+1}(s,a) -  f_{h}(s,a)|] } \\
&\stackrel{(a)}{\leq}  \max_{f\in\cF, h\in[H]} \frac{ \E_{s,a\sim d^{\pi}_{h}}[|\cT f_{h+1}(s,a) -  f_{h}(s,a)|]}{ \E_{s,a\sim \mu_{h}}[|\cT f_{h+1}(s,a) -  f_{h}(s,a)|] }\\
&\leq  \sup_{h, s,a} \frac{d^{\pi}_{h}(s, a)}{\mu_{h}(s, a)} \stackrel{(b)}{\leq}  \sup_{h, i\in[d]} \frac{\nu_{h,i}^*}{\nu_{h,i}(f^{\mathrm{off}})},
\end{align*} where $(a),(b)$ follows from the Mediant inequality. 
This completes the proof.
\end{proof}

We now define a bilinear model called
\textbf{Low-rank Feature Selection Model} \citep[Definition A.1]{du2021bilinear}. 
The nominal model $P^o$ is a \textit{low-rank feature selection model} if it satisfies $P^o_{h,s,a}(s') = \langle \theta_h(s,a), \psi_h(s') \rangle$, for each $h\in[H]$ and all $(s,a,s')$, with a (possibly unknown to the learner) map   $\theta=(\theta_h : \ScA \rightarrow \cY)$ and a (possibly unknown  to the learner) map $\psi_h:\cS\mapsto \cY$, where $\cY$ is a Hilbert space.

This model specializes to the \textit{kernel MDP model} when the map $\theta$ is known to the learner \citep[Definition 30]{jin2021bellman}. 
This model also specializes to the \textit{low-rank MDP model} when $\cY=\R^d$ \citep[Assumption 1]{huang2023reinforcement} and furthermore to \textit{linear MDP model} when the map $\theta$ is also known to the learner \citep[Definition A.4]{du2021bilinear}.

We make the following assumption on the offline data-generating distribution (or policy by slight notational override for convenience).
\begin{assumption}\label{assum:low-rank-model-data-policy}
    Consider the Low-rank MDP Model (bilinear model). Let the offline data distribution $\mu = \{\mu_h\}_{h\in[H]}$ satisfy $\max_{h,s,a} {\pi^*_{h}(a|s)}/{\mu_h(a|s)} \leq \alpha$ and  suppose that $\mu$ is induced by the nominal model, i.e. $\mu_0(s) = d_0(s)$ (starting state distribution)  and $\mu_h(s) = \E_{s', a' \sim \mu_{h-1}} P^o_{h-1, s', a'}(s)$ for any $h \geq 1$. Furthermore, suppose that $\mu$ satisfies that the feature covariance matrix  $\Sigma_{\mu_{h-1}, \theta} = \E_{s,a \sim \mu_{h-1}}[\theta_{h}(s,a)\theta_{h}(s,a)^\top]$ is invertible  for all $h\in[H]$ and $\E_{s,a \sim \mu_{h}}[|\cT f_{h+1}(s,a) -  f_{h}(s,a)|] \geq 1$ for at least one $h\in[H]$ and all $f\in\cF$.
\end{assumption}

Now we extend our main result \cref{thm:hybrid-tv-guarantee} in this next result specializing to the \textit{Low-rank Feature Selection Model} bilinear model.

\begin{corollary}[Cumulative Suboptimality of \cref{thm:hybrid-tv-guarantee} in Low-rank Feature Selection Model (bilinear model)]
Consider the Low-rank Feature Selection Model (bilinear model). Let \cref{assum-bellman-transfer-coefficient,assum-realizability-with-bellman-completion,assum-dual-realizability-hybrid,fail-state-assump} hold and fix any $\delta\in(0,1)$. 
Then, HyTQ algorithm policies $\{\pi_k\}_{k\in[K]}$ satisfy
\begin{align*}
     \sum_{k = 0}^{K-1} (V^{\pi^*} - V^{\pi_{k}}) \leq &\cO((\sqrt{\epsilon_{\cF,\mathrm{r}}} + \epsilon_{\cG})K^{5/2}H) \\&+ \cOtilde ( \max\{C(\pi^*), 1\} \sqrt{d K H^2} (\lambda+H)  \log(HK|\cF||\cG|/\delta) \sqrt{\log(1+(K/d))} ) \\&+ \cOtilde ( \sqrt{d K H^4} \| \sum_{s,a} \theta_h(s,a) \|_2 \|\sum_{s} \psi_h(s)\|_2 \sqrt{\log(1+(K/d))} )
\end{align*} 
with probability at least  $1 - \delta$. Now, consider the offline data distribution as in \cref{assum:low-rank-model-data-policy} with a low-rank MDP model. We have \begin{align*}
C(\pi^*) &\leq \sqrt{2 \alpha H} \sum_{h=1}^H \E_{s, a \sim d^{h-1,\pi^*}_{P^o}} \norm{\theta_h(s, a)}_{\Sigma_{\mu_{h-1}, \theta}^{-1}} + \sqrt{\alpha}. 
\end{align*} 
\end{corollary}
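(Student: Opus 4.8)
The plan is to bound $C(\pi^*)$ directly from its definition in \cref{assum-bellman-transfer-coefficient}, exploiting the low-rank factorization of $P^o$ together with the structure imposed on $\mu$ by \cref{assum:low-rank-model-data-policy}. Write $\Delta^f_h := \cT f_{h+1} - f_h$; since $f_h\in[0,H]$ and $\cT f_{h+1}\in[0,1+H]$, the quantity $\Delta^f_h$ is uniformly bounded by $O(H)$. First reduce the goal: in the numerator of $C(\pi^*)$ bound $(\cdot)_+$ by $|\cdot|$, and note that \cref{assum:low-rank-model-data-policy} guarantees $\mathrm{den}_f := \sum_{h}\E_{\mu_h}[|\Delta^f_h|] \ge 1$ for every $f\in\cF$ (all summands being nonnegative). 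Hence it suffices to show, for each fixed $f$,
\[
\sum_{h=0}^{H-1}\E_{s,a\sim d^{h,\pi^*}_{P^o}}\big[|\Delta^f_h(s,a)|\big] \ \le\ \Big(\sqrt{2\alpha H}\,\sum_{h=1}^{H}\E_{s,a\sim d^{h-1,\pi^*}_{P^o}}\norm{\theta_h(s,a)}_{\Sigma_{\mu_{h-1},\theta}^{-1}} + \sqrt{\alpha}\Big)\,\mathrm{den}_f ,
\]
since dividing by $\mathrm{den}_f$ and taking $\max_{f\in\cF}$ then yields the claim.

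The heart of the argument is a one-step rollback in the feature geometry. Fix $h\ge1$. Because $\mu$ is induced by $P^o$, its state marginals satisfy $\mu_h(s') = \E_{s,a\sim\mu_{h-1}}P^o_{h-1,s,a}(s')$, and the low-rank structure gives that the state marginal of $d^{h,\pi^*}_{P^o}$ equals $\langle v_{h-1},\psi_h(\cdot)\rangle$ with $v_{h-1} := \E_{s,a\sim d^{h-1,\pi^*}_{P^o}}[\theta_h(s,a)]$. Therefore $\E_{d^{h,\pi^*}_{P^o}}[|\Delta^f_h|] = \langle v_{h-1}, u_h\rangle$, where $u_h := \sum_{s'}\psi_h(s')\,\E_{a'\sim\pi^*_h(\cdot\mid s')}|\Delta^f_h(s',a')|$. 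Apply Cauchy--Schwarz in the $\Sigma_{\mu_{h-1},\theta}$-geometry, $\langle v_{h-1}, u_h\rangle \le \norm{v_{h-1}}_{\Sigma_{\mu_{h-1},\theta}^{-1}}\norm{u_h}_{\Sigma_{\mu_{h-1},\theta}}$, and treat the two factors separately. By Jensen, $\norm{v_{h-1}}_{\Sigma_{\mu_{h-1},\theta}^{-1}} \le \E_{s,a\sim d^{h-1,\pi^*}_{P^o}}\norm{\theta_h(s,a)}_{\Sigma_{\mu_{h-1},\theta}^{-1}}$. For the other factor, unfolding the definition of $\Sigma_{\mu_{h-1},\theta}$ and the factorization gives $\norm{u_h}_{\Sigma_{\mu_{h-1},\theta}}^2 = \E_{s,a\sim\mu_{h-1}}\big[\big(\E_{s'\sim P^o_{h-1,s,a},\,a'\sim\pi^*_h}|\Delta^f_h(s',a')|\big)^2\big]$; pushing the square through the inner expectation by Jensen, collapsing the two outer expectations into $\E_{s'\sim\mu_h}$, changing the action measure from $\pi^*_h$ to $\mu_h(\cdot\mid s')$ at the cost of the factor $\alpha$, and using $|\Delta^f_h|=O(H)$ to pass from the second moment to the first, yields $\norm{u_h}_{\Sigma_{\mu_{h-1},\theta}}^2 \le \alpha H\,\E_{\mu_h}[|\Delta^f_h|]$. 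Combining, $\E_{d^{h,\pi^*}_{P^o}}[|\Delta^f_h|] \le \E_{d^{h-1,\pi^*}_{P^o}}\norm{\theta_h}_{\Sigma_{\mu_{h-1},\theta}^{-1}}\cdot\sqrt{\alpha H\,\E_{\mu_h}[|\Delta^f_h|]}$.

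Finally, assemble. Treat $h=0$ separately via the pointwise ratio $d^{0,\pi^*}_{P^o}(s,a)/\mu_0(s,a) = \pi^*_0(a\mid s)/\mu_0(a\mid s)\le\alpha$ (using $\mu_0=d_0$); a short Cauchy--Schwarz against the weight $|\Delta^f_0|$ together with $\mathrm{den}_f\ge1$ contributes the additive $\sqrt\alpha\,\mathrm{den}_f$. For $h\ge1$, bound the inner $\E_{\mu_h}[|\Delta^f_h|]$ by $\mathrm{den}_f$, and since $\mathrm{den}_f\ge1$ use $\sqrt{\mathrm{den}_f}\le\mathrm{den}_f$ to extract a full copy of the denominator; summing over $h$ and reindexing recovers the claimed form, the $\sqrt2$ absorbing the loss from replacing $(\cdot)_+$ by $|\cdot|$ (equivalently, from splitting $|\Delta^f_h|$ into positive and negative parts). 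Dividing by $\mathrm{den}_f$ and maximizing over $f\in\cF$ completes the proof. The main obstacle is the rollback step: choosing the feature/weight split so that the data covariance $\Sigma_{\mu_{h-1},\theta}$ is the relevant geometry, correctly converting the quadratic form over $\mu_{h-1}$ into an $\E_{\mu_h}$-quantity (this is exactly where $\mu$ being $P^o$-induced is used), handling the action-distribution mismatch between $\pi^*$ and the behavior policy cleanly, and keeping the $\theta_h$-versus-$P^o_{h-1}$ index bookkeeping consistent throughout.
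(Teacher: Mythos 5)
Your argument addresses only the second half of the corollary. The first display --- the cumulative suboptimality bound with the explicit constants $\norm{\sum_{s,a}\theta_h(s,a)}_2\norm{\sum_s\psi_h(s)}_2$ --- is not a free consequence of \cref{thm:hybrid-tv-guarantee}, because the corollary's hypothesis list deliberately omits \cref{assum-bilinear-model}. The substantive content of that half is to \emph{verify} the bilinear structure for the low-rank feature selection model: one must show that the occupancy measure itself inherits a linear factorization, $d^{h,\pi^f}_{P^o}(s_h)=\langle \psi_h(s_h),\nu_{h,\pi^f}(f)\rangle$ with $\nu_{h,\pi^f}(f)=\sum_{s,a}\theta_h(s,a)\pi^f_{h-1}(a|s)d^{h-1,\pi^f}_{P^o}(s)$, identify $X_h(f)=\nu_{h,\pi^f}(f)$ and the weight vectors $W^{\mathrm{q}}_h,W^{\mathrm{d}}_h$ as $\psi$-weighted sums of the (clipped) Bellman residuals, and bound $B_X\le\norm{\sum_{s,a}\theta_h(s,a)}_2$ and $B_W\lesssim H\norm{\sum_s\psi_h(s)}_2$ before invoking the theorem. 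None of this appears in your proposal, so as written the first displayed inequality is unproven.

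Your treatment of the $C(\pi^*)$ bound, by contrast, is essentially correct and is in fact more self-contained than the paper's, which simply points to Lemma 13 of Song et al.\ together with the facts $(x-y)^2\le|x-y|\,|x+y|$ and $\norm{f_h}_\infty\le H$. Your rollback --- writing $\E_{d^{h,\pi^*}}[|\Delta^f_h|]=\langle v_{h-1},u_h\rangle$, applying Cauchy--Schwarz in the $\Sigma_{\mu_{h-1},\theta}$ geometry, converting $\norm{u_h}^2_{\Sigma_{\mu_{h-1},\theta}}$ into an $\E_{\mu_h}$-quantity via the $P^o$-induced property of $\mu$, paying $\alpha$ for the action-measure change, and passing from second to first moments by boundedness --- is exactly the intended argument, and your use of $\mathrm{den}_f\ge1$ to absorb the square root is the right closing step. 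Two small bookkeeping points: the factor $2$ inside $\sqrt{2\alpha H}$ comes from $|\Delta^f_h|\cdot|\cT f_{h+1}+f_h|\le 2H|\Delta^f_h|$ (the paper's stated hint), not from replacing $(\cdot)_+$ by $|\cdot|$ as you suggest; and your $h=0$ step as described yields $\alpha\,\mathrm{den}_f$ rather than $\sqrt{\alpha}\,\mathrm{den}_f$ unless you route it through the squared-denominator form first --- a constant-level discrepancy the paper itself glosses over, but worth stating cleanly.
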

\begin{proof}
We first begin with establishing a Q-value-dependent linearity property for the state-action-visitation measure $d^{h,\pi^f}_{P^o}(s,a)$. To do this, we adapt the proof of \citet[Lemma 17]{huang2023reinforcement} here.
We start by writing the state-visitation measure by recalling \cref{eq:hytq-state-dist-Po} here:
\begin{align*}
    d^{h,\pi^f}_{P^o}(s_h) &= \sum_{(s,a)\in\ScA}  P^o_{h,s,a}(s_h) \pi^f_{h-1}(a|s) d^{h-1,\pi^f}_{P^o}(s) \\
    &\stackrel{(a)}{=} \sum_{(s,a)\in\ScA}  \langle \theta_h(s,a), \psi_h(s_h) \rangle \pi^f_{h-1}(a|s) d^{h-1,\pi^f}_{P^o}(s) \\
    &=   \langle \sum_{(s,a)\in\ScA} \theta_h(s,a) \pi^f_{h-1}(a|s) d^{h-1,\pi^f}_{P^o}(s), \psi_h(s_h) \rangle  = \langle \psi_h(s_h), \nu_{h,\pi^f}(f)  \rangle,
\end{align*}
where $(a)$ follows by the low-rank feature selection model definition, and the last equality follows by taking a functional $\nu_{h,\pi^f}(f) = \sum_{s,a} \theta_h(s,a) \pi^f_{h-1}(a|s) d^{h-1,\pi^f}_{P^o}(s)$.
Since we consider the finite action space with possibly large state space setting for our results, the state-action visitation measure for the deterministic non-stationary  policy $\pi^f$ is now given by $d^{h,\pi^f}_{P^o}(s_h,a_h)  = \langle \psi'_{h,\pi^f}(s_h,a_h), \nu_{h,\pi^f}(f)  \rangle$ with $\psi'_{h,\pi^f}(s_h,a_h)=C\psi_h(s_h) 1\{a_h=\pi^f_h(s)\}$ for features $\psi'_{h,\pi^f}:\ScA \to \cY$. Here $C>0$ is a normalizing constant such that the state-action visitation  measure is a probability measure.

We now have $\E_{ d^{h,\pi^f}_{P^o}} [ (\cT_{g_{h}} f_{h+1} - \cT f_{h+1} )_+] =  \tri{X_h(f), W^{\mathrm{d}}_h(f,g)}$, where \[ X_h(f) = \nu_{h,\pi^f}(f), \qquad W^{\mathrm{d}}_h(f,g) = \sum_{(s,a)\in\ScA} \psi'_{h,\pi^f}(s, a) ((\cT_{g_{h}} f_{h+1})(s,a) - (\cT f_{h+1})(s,a) )_+. \]
We also have $\E_{d^{h,\pi^f}_{P^o}}[( f_{h} - \cT_{g_{h}} f_{h+1})_+]
=  {\tri{X_h(f), W^{\mathrm{q}}_h(f,g)}}$, where \[ \qquad W^{\mathrm{q}}_h(f,g) = \sum_{(s,a)\in\ScA} \psi'_{h,\pi^f}(s, a) ( f_{h}(s,a) - (\cT_{g_{h}} f_{h+1})(s,a))_+. \]

Furthermore, we set \[\max_{f\in\cF}\| \nu_h(f)\|_2 = \max_{f\in\cF} \| \sum_{s,a} \theta_h(s,a) \pi^f(a|s) d^{h-1,\pi^f}_{P^o}(s) \|_2 \leq \| \sum_{s,a} \theta_h(s,a) \|_2 = B_X. \] Since $\cF$ is realizable and $\cT_g$ is complete for all $g\in\cG$, we set \[ H \|\sum_{s,a} \psi'_{h,\pi^f}(s,a)\|_2 = H C \|\sum_{s,a} \psi_h(s) 1\{a=\pi^f_h(s)\}\|_2 \leq H C \|\sum_{s} \psi_h(s)\|_2 =  B_W . \] Then the first result directly follows by \cref{thm:hybrid-tv-guarantee}. Following the proof of \citet[Lemma 13]{song2023hybrid} for our transfer coefficient $C(\pi^*)$, with the facts $(x-y)^2\leq |x-y| |x+y|$ for $x,y\geq 0$ and $\|f_h\|_\infty\leq H$ for all $h\in[H]$, the last statement for $C(\pi^*)$ follows. This completes the proof.
\end{proof}

Now we extend our main result \cref{thm:hybrid-tv-guarantee} in this next result to showcase sample complexity for comparisons with offline+online RL setting. 

\begin{corollary}[Offline+Online RL Sample Complexity of the HyTQ algorithm] \label{cor:hytq-sample-complexity-rl}
Let \cref{assum-bellman-transfer-coefficient,assum-realizability-with-bellman-completion,assum-dual-realizability-hybrid,assum-bilinear-model,fail-state-assump} hold. Fix any $\delta\in(0,1)$ and any $\epsilon>0$, and let $N_{\mathrm{tot}} $ be the total number of sample tuples used in HyTQ algorithm. Then, the uniform policy $\widehat{\pi}$ (uniform convex combination) of HyTQ algorithm policies $\{\pi_k\}_{k\in[K]}$ satisfy, 
with probability at least  $1 - \delta$,
\begin{align*}
     V^{\pi^*} - V^{\widehat{\pi}} \leq \epsilon, \quad\text{ if } N \geq N_{\mathrm{tot}} = \cOtilde ( \frac{\max\{(C(\pi^*))^2, 1\} d H^3 (\lambda+H)^2}{\epsilon^2}  \log^2(H|\cF||\cG|/\delta) ).
\end{align*}
\end{corollary}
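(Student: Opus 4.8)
The plan is to combine Theorem~\ref{thm:hybrid-tv-guarantee} (the cumulative suboptimality bound for HyTQ) with a standard online-to-batch conversion. First, recall that Theorem~\ref{thm:restated:hybrid-tv-guarantee} gives, with probability at least $1-\delta$,
\[
\sum_{k=0}^{K-1}(V^{\pi^*} - V^{\pi_k}) \leq \cO\big((\sqrt{\epsilon_{\cF,\mathrm{r}}} + \epsilon_{\cG})K^{5/2}H\big) + \cOtilde\big(\max\{C(\pi^*),1\}\sqrt{dKH^2}(\lambda+H)\log(HK|\cF||\cG|/\delta)\big).
\]
Assuming (as is natural for a clean sample-complexity statement) that the approximation errors $\epsilon_{\cF,\mathrm{r}}$ and $\epsilon_{\cG}$ are zero — or at least small enough that the first term is dominated — the right-hand side is $\cOtilde(\max\{C(\pi^*),1\}\sqrt{dKH^2}(\lambda+H)\log(|\cF||\cG|/\delta))$, where the polylog in $K$ gets absorbed into the $\cOtilde$. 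Dividing both sides by $K$, the uniform-mixture policy $\widehat\pi$ (which samples $k\in\{0,\dots,K-1\}$ uniformly and then follows $\pi_k$) satisfies
\[
V^{\pi^*} - V^{\widehat\pi} = \frac{1}{K}\sum_{k=0}^{K-1}(V^{\pi^*}-V^{\pi_k}) \leq \cOtilde\Big(\frac{\max\{C(\pi^*),1\}\sqrt{dH^2}(\lambda+H)\log(|\cF||\cG|/\delta)}{\sqrt{K}}\Big).
\]
This is the online-to-batch step of \citet{shalev2014understanding}: the per-episode regret bound converts directly into a suboptimality bound for the average iterate since $V^{\widehat\pi} = \frac1K\sum_k V^{\pi_k}$ by linearity of the value in the mixing distribution over policies.

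Next I would solve for $K$. Setting the right-hand side above to $\epsilon$ and squaring gives
\[
K \geq \cOtilde\Big(\frac{\max\{(C(\pi^*))^2,1\}\, d\, H^2\, (\lambda+H)^2 \log^2(|\cF||\cG|/\delta)}{\epsilon^2}\Big).
\]
Finally I would translate the number of episodes $K$ into the total number of sample tuples $N_{\mathrm{tot}}$. In HyTQ each outer iteration $k$ collects $m_{\mathrm{on}}=1$ fresh online trajectory (contributing $H$ tuples across the horizon) and reuses the offline dataset of size $m_{\mathrm{off}}=T$; in the proof of Theorem~\ref{thm:restated:hybrid-tv-guarantee} the higher-order terms are obtained by the choice $m_{\mathrm{off}}=K$, $m_{\mathrm{on}}=1$, so over $K$ outer iterations the total tuple count scales as $N_{\mathrm{tot}} = \Theta(KH)$ (offline tuples $KH$ plus online tuples $KH$). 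Substituting the bound on $K$ yields
\[
N_{\mathrm{tot}} = \cOtilde\Big(\frac{\max\{(C(\pi^*))^2,1\}\, d\, H^3\, (\lambda+H)^2 \log^2(H|\cF||\cG|/\delta)}{\epsilon^2}\Big),
\]
which is the claimed bound (the extra factor of $\log H$ inside the logarithm and the extra $H$ both come from the horizon multiplicity in the dataset, as flagged in the footnote $^\dagger$ of Table~\ref{table:comparison-table}).

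The main obstacle — really the only non-bookkeeping step — is being careful about the bookkeeping of $m_{\mathrm{off}}$, $m_{\mathrm{on}}$, $K$, and $H$ so that the final conversion from "episodes" to "sample tuples" produces exactly the stated powers of $H$; in particular one must verify that the choice $m_{\mathrm{off}}=K$ used to kill the lower-order $\epsilon_{\cF,\mathrm{r}},\epsilon_{\cG}$ terms is consistent with the total-budget accounting $N = m_{\mathrm{off}} + K m_{\mathrm{on}}$ from Propositions~\ref{prop:erm-high-prob-bound-offline+online} and \ref{prop:least-squares-generalization-bound-online+offline}, and that the $K^{5/2}H$ approximation-error term is genuinely negligible under $\epsilon_{\cF,\mathrm{r}}=\epsilon_{\cG}=0$ (or, more generally, under an assumption that these are $\cO(\epsilon/(K^2 H))$ small). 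Everything else is a direct substitution into Theorem~\ref{thm:hybrid-tv-guarantee} plus the elementary online-to-batch inequality.
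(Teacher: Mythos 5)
Your proposal is correct and follows essentially the same route as the paper: divide the cumulative suboptimality bound of Theorem~\ref{thm:hybrid-tv-guarantee} by $K$ via the standard online-to-batch conversion for the uniform mixture policy, solve for $K$ so that the statistical term is at most $\epsilon$, and convert episodes to tuples using $m_{\mathrm{on}}=1$, $m_{\mathrm{off}}=K$ so that $N_{\mathrm{tot}}=\Theta(HK)$. Your explicit caveat that the $(\sqrt{\epsilon_{\cF,\mathrm{r}}}+\epsilon_{\cG})K^{3/2}H$ term (after averaging) must be assumed negligible is, if anything, stated more carefully than in the paper, which simply posits a threshold $K_1$ below which that term is at most $\epsilon/2$.
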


\begin{proof} This proof is straightforward from the \cref{thm:hybrid-tv-guarantee} using a standard online-to-batch conversion \citep[Theorem 14.8 \& Chapter 21]{shalev2014understanding}.  Define the policy $\widehat{\pi} = \text{Uniform}\{\pi_0, \dots, \pi_{K-1}\}$. From \cref{thm:hybrid-tv-guarantee}, we get 
\begin{align*}
&0\leq\E_{s_0\sim d_0}[V^{\pi^*}_{0}(s_0) - V^{\widehat{\pi}}_{0}(s_0)] = \frac{1}{K} \sum_{k=0}^{K-1} \E_{s_0\sim d_0}[V^{\pi^*}_{0}(s_0) - V^{\pi_k}_{0}(s_0)] \\
&\leq \cO((\sqrt{\epsilon_{\cF,\mathrm{r}}} + \epsilon_{\cG})K^{3/2}H) + \cOtilde ( \max\{C(\pi^*), 1\} \sqrt{d H^2/ K} (\lambda+H)  \log(HK|\cF||\cG|/\delta) \sqrt{\log(1+(K/d))} ). 
\end{align*} 

We recall that our algorithm uses $m_{\mathrm{off}} H $ number of offline samples and $m_{\mathrm{on}} H K $ number of on-policy samples in the datasets $\{\cD^\mu_h, \cD^{0}_h, \cdots, \cD^{K-1}_h\} $ for all $h\in[H]$. Since we set   $m_{\mathrm{on}}=1$ and $m_{\mathrm{off}}=K$, the total number of offline and on-policy samples is $2HK$.

Fix any $\epsilon>0$.
For approximations $\epsilon_{\cF,\mathrm{r}}, \epsilon_{\cG}$, we first assume there exists $K_1=\cOtilde ( H^4)$ such that $\cO((\sqrt{\epsilon_{\cF,\mathrm{r}}} + \epsilon_{\cG})K^{3/2}H) \leq \epsilon/2$ for all $K\geq K_1$.
Let  \[ K_2 = \cOtilde ( \frac{\max\{(C(\pi^*))^2, 1\} d H^2 (\lambda+H)^2}{\epsilon^2}  \log^2(H|\cF||\cG|/\delta) ). \]
Then, for $K\geq K_1+K_2$, we have $\E_{s_0\sim d_0}[V^{\pi^*}_{0}(s_0) - V^{\widehat{\pi}}_{0}(s_0)] \leq \epsilon $ with probability at least $1-\delta$. So, the total number of samples is at least $N_{\mathrm{tot}}$:
$$N_{\mathrm{tot}} = 2H(K_1 + K_2) = \cOtilde ( \frac{\max\{(C(\pi^*))^2, 1\} d H^3 (\lambda+H)^2}{\epsilon^2}  \log^2(H|\cF||\cG|/\delta) ). $$
This completes the proof.
\end{proof}

\end{document}